\def\citeps#1{(\citeauthor{#1},\citeyear{#1})}
\def\wr{:\sim}
\DeclareMathOperator*{\argmax}{arg\,max}
\DeclareMathOperator*{\argmin}{arg\,min}
\def \prev#1{{#1}^\uparrow}
\def \br#1{[{#1}]}
\def \signp#1{{#1}^+}
\def \signm#1{{#1}^-}
\def \signo#1{{#1}^{-1\cdot}}
\def\level#1{{\lambda({#1})}}
\def\weight#1{{#1}_w}
 \newtheorem{Property}{Property} 
\newtheorem{Proposition}{Proposition}
\newtheorem{Lemma}{Lemma}
\newtheorem{Definition}{Definition}
\def\<{\langle}
\def\>{\rangle}
\def\Int{\ensuremath{\mathit{Int}}}
 \def\cS{\ensuremath{\mathcal{Z}}}
 \def\cP{\ensuremath{\mathcal{P}}}
 \def\cW{\ensuremath{\mathcal{W}}}
 \def\cL{\ensuremath{\mathcal{L}}}
 \def\cH{\ensuremath{\mathcal{H}}}
\newcommand{\ignore}[1]{}
\def\sem{\mathit{sem}}
\def\ar{\leftarrow}
\def\rar{\rightarrow}
 \def\beq{\begin{equation}}
 \def\eeq#1{\label{#1}\end{equation}}
 \def\ba{\begin{array}}
 \def\ea{\end{array}}
\newcommand{\optdll}{{\sc opt-dll}\xspace}
\newcommand{\minone}{{\sc min-one}\xspace}
\newcommand{\minoneasp}{{\sc min-one-asp}\xspace}
\newcommand{\minonesub}{{\sc min-one$_\subseteq$}\xspace}
\newcommand{\distancesat}{{\sc distancesat}\xspace}
\newcommand{\distancesatsub}{{\sc distancesat$_\subseteq$}\xspace}
\def \less#1#2{{#1}[{\setminus{#2}}]}
\begin{document}

\lefttitle{Yuliya Lierler}

\title[An Abstract View on Optimizations in Propositional Frameworks]{
An Abstract View on Optimizations in Propositional Frameworks}

  \begin{authgrp}
\author{\gn{Yuliya Lierler}}
\affiliation{University of Nebraska Omaha}
\end{authgrp}
\jnlPage{\pageref{firstpage}}{\pageref{lastpage}}
\jnlDoiYr{2022}
\doival{10.1017/xxxxx}

\maketitle

\label{firstpage}









\begin{abstract}
Search-optimization problems are plentiful in scientific and engineering domains. Artificial intelligence has long contributed to the development of search algorithms and declarative programming languages geared toward solving and modeling search-optimization problems.
Automated reasoning and knowledge representation are the subfields of AI that are particularly vested in these developments.
Many popular automated reasoning paradigms provide users with languages supporting optimization statements:     answer set programming or MaxSAT on \minone, to name a few.
These paradigms vary significantly in their languages and in the ways they express quality conditions on computed solutions. Here we propose a unifying framework of so-called weight systems that eliminates syntactic distinctions between paradigms and allows us to see essential similarities and differences between optimization statements provided by paradigms. This unifying outlook has significant 
	simplifying and explanatory potential in the studies of optimization and modularity in automated reasoning and knowledge representation. It also supplies researchers with a convenient tool for proving the formal properties of distinct frameworks; bridging these frameworks; and facilitating the development of translational solvers.
\end{abstract}
\section{Introduction}
Artificial intelligence is a powerhouse for delivering algorithmic frameworks to support solutions to
search-optimization problems that are plentiful in scientific and engineering domains. 
Automated reasoning and knowledge representation are the subfields of AI that are particularly vested in developing general-purpose search algorithms and declarative programming languages specifically geared towards formulating constraints of search-optimization problems.
Various automated reasoning paradigms provide users with languages supporting optimization statements. Indeed, consider  such popular AI paradigms as
 answer set programming with weak constraints~(ASP-WC)~\citeps{alv18} and propositional satisfiability with optimizations~(MaxSAT family)~\citeps{rob10}. 
In practice, when search problems are formulated there is {\em often} an interest not only in   identifying a solution, but also in pointing at the one that is optimal with respect to some criteria. Another way to perceive this setting is by having interplay of ``hard'' and ``soft'' modules (drawing a parallel to terminology  used in formulating partial weighted MaxSAT). Hard modules formulate immutable constraints of a problem, i.e., requirements that solutions to a problem  {\em must} satisfy in order to deserve being called a solution. Soft modules express conditions that are closer to preferences. 
Consider the well known {\em travelling salesman problem} (TSP) that asks the following: ``Given a list of cities and the information on how the cities are connected by roads with associated lengths, what are the possible routes that visit each city and return to the origin city?  What are the shortest routes?" Here, the first question of TSP formulates ``hard'' requirements. In order for a sequence of cities to form a solution it has to form a special kind of path. The second question of TSP formulates a preference. Out of all possible solutions we are interested in these that satisfy some numeric criterion. In ASP-WC, for example, a traditional logic program composed of rules is used to express a hard fragment  of a given problem, while 
weak constraints are used to formulate a soft fragment. 

Supporting various kinds of optimizations on an encoding and solving level is a holy grail of ASP-WC as in practical applications it is common that there are criteria which can make one solution to be preferred over the other. 
Yet, some approaches to answer set solving that rely on translations to related automated reasoning (AR) paradigms -- ``translational'' solvers such as  {\sc cmodels}~\citeps{giu06} and {\sc lp2sat}~\citeps{jan06a}, which translate logic programs into the propositional satisfiability (SAT) problem~\citeps{satprimer} --  do not provide any support for weak constraints. One reason is that SAT itself has no support for formulating soft requirements. MaxSAT and its variants are extensions to SAT supporting optimizations. The formulations of these extensions  significantly differ syntactically and semantically from those used in ASP-WC  so that the \textit{exact} link, between ASP-WC  and MaxSAT formalisms, required in implementing translational approaches is not obvious. 
In general, optimizations in different areas of AR (see, for instance,~\citeps{nie06a,chu11,and12,ros13,bre15a,alv18})  are studied in separation with no clear articulation of the {\em exact} links between the languages expressing optimization criteria and their implementations. 
Yet, it is due to note that their relation is often obvious to researchers developing solving procedures within AR. The algorithmic techniques have been borrowed from solvers in one paradigm to solvers in the other, see, for instance, the paper by~\cite{alv20} explaining how algorithmic techniques behind MaxSAT can be utilized in solving problems formulated in ASP-WC. Here, we focus on uncovering the precise semantic relation between distinct paradigms geared to support translational approaches. 

This paper takes modularity and abstraction as  key tools for building a thorough understanding between related and, yet, disperse advances pertaining to optimizations or soft modules within different AR communities.
\cite{lt2015} proposed an abstract modular framework that allows us to bypass the syntactic details of a particular logic and study advances in AR from a bird's eye view.  That framework is appropriate for capturing varieties of logics within hard modules.
We extend the framework in a way that soft modules can be formulated and studied under one roof. We illustrate how a family of SAT based optimization formalisms such as MaxSAT, weighted MaxSAT, and partial weighted MaxSAT (pw-MaxSAT) can be embedded into the proposed framework. We also illustrate how ASP-WC  fits into the same framework. 
The predominant focus of this work are so called quantitative optimizations/preferences. An objective function for assessing quality of computed models is of central importance in these methods. Weighted-conditions of introduced weighted systems are vehicles for expressing these objective functions. Qualitative preferences is an alternative. These maybe expressed by means of preference relations on the level of atoms (literals - atoms or their complements) or interpretations. We illustrate how some optimization frameworks stemming from qualitative preferences within SAT such as \minone~\citeps{cre01} and \distancesat~\citeps{bai06} are conveniently captured by w-system. This fact allows us an immediate extension of these ideas to the case of answer set programming.

In addition, we carefully study the introduced abstract framework and illustrate a number of its formal properties. These properties immediately translate into formal claims about specific instances of this framework such as MaxSAT or ASP-WC. The paper culminates in a result illustrating how ASP-WC programs can be processed by means of MaxSAT/MinSAT solvers. The opposite link also becomes apparent.
To summarize, we propose to utilize abstract view on logics and modularity as tools for constructing overarching view for distinct criteria used for optimization within different AR communities. 
We would like to point at work by \cite{alv18b}, where the authors also realized the importance of abstracting away the syntactic details of the formalisms to describing hard and soft constraints of a considered problem in order to streamline the utilization of existing efficient solving techniques in new settings. 
Similarly, the work by~\cite{ens19} is devoted to preferences in declarative programming solving.

\smallskip
\noindent
{\em Paper outline~~}
We start the paper by reviewing the concepts of an abstract logic and modular systems. We then introduce a notion of weighted abstract modular systems -- key artifact of this paper.  In Section~\ref{sec:instances}, we show how these weighted systems  naturally capture MaxSAT family and answer set programming with weak constraints.
Section~\ref{sec:qual} is devoted to the case of qualitative preferences and we illustrate  how these weighted systems  capture such SAT-based formalisms as \minone and \distancesat, which also lend themselves to newly defined frameworks of, for instance, \minone for the case of answer set programming.
Section~\ref{sec:formalProperties} enumerates formal properties of w-systems showcasing how these properties immediately translate into properties of MaxSAT family and ASP-WC paradigms. In conclusion of this section we are able to provide a precise link between pw-MaxSAT/pw-MinSAT and ASP-WC. The concluding section of the paper lists proofs.

Parts of this paper appeared in the proceedings of the 17th Edition of the European Conference on Logics in Artificial Intelligence \citeps{lie21}. In particular, this paper unlike its conference predecessor provides a complete account of proofs for the stated formal results. It also extends the use of weighted systems to the case of qualitative preferences utilized in such formalisms as \minone and \distancesat.

\section{Review: Abstract Logics and Modular Systems}
\label{mbms}
We start with the review 
of an abstract logic by \cite{bre07}. We illustrate how this abstract concept captures SAT and logic programs under answer set semantics.
We then review model-based abstract modular systems advocated by \cite{lt2015}. 

A \emph{language} is a set $L$ of \emph{formulas}. A \emph{theory} is 
a subset of~$L$. Thus the set of theories is closed under union and
has the least and the greatest elements: $\emptyset$ and $L$.
We call a theory a {\em singleton} if it is an element/a formula in $L$ (or a singleton subset, in other words). This definition ignores any syntactic details behind the concepts of
 a formula and a theory.
A \emph{vocabulary} is possibly an infinite countable set of \emph{atoms}.
Subsets of a vocabulary $\sigma$ represent (classical propositional)
\emph{interpretations} of $\sigma$. We write $\Int(\sigma)$ for the 
family of all interpretations of a vocabulary~$\sigma$. 
It is customary for a given vocabulary $\sigma$, to identify a set~$X$ of atoms over $\sigma$ with~an assignment function  that assigns the truth value {\em true} to every atom in~$X$ and~{\em false} to every atom in $\sigma\setminus X$. In classical propositional logic, interpretations are often defined by means of such assignment functions.  

%

\begin{Definition}
A \emph{logic} is a triple $\cL=(L_\cL,\sigma_\cL,\sem_\cL)$, where
\begin{enumerate}
\item $L_\cL$ is a language (\emph{language} of $\cL$)
\item $\sigma_\cL$ is a vocabulary (\emph{vocabulary} of $\cL$)
\item $\sem_\cL:2^{L_\cL} \rightarrow 2^{\Int(\sigma_\cL)}$ is 
a function from theories in $L_\cL$ to
collections of interpretations 
(\emph{semantics} of~$\cL$)
\end{enumerate}
\end{Definition}
If a logic $\cL$ is clear from the context, we omit the subscript~$\cL$ 
from the notation of the language, the vocabulary and the semantics of 
the logic. 

\cite{bre07} showed that this abstract notion 
of a logic captures default logic, propositional logic, and logic programs 
under the answer set semantics. 
For example, the logic  $\cL=(L,\sigma,\sem)$,
where
\begin{enumerate}
\item $L$ is the set of propositional formulas over $\sigma$,
\item $\sem(F)$, for a theory $F\subseteq L$, is the set of propositional
models of theory $F$ (where we understand an interpretation to be a model of  theory $F$ if it is a model of each element/propositional formula in $F$)  over $\sigma$,
\end{enumerate}
captures propositional logic. (We assume the readers familiarity with the key concepts of propositional logic: formulas, interpretations, models.)
We call this logic  a {\em pl-logic}.
A {\em clause} is a propositional formula of the form 
\beq
 \neg a_1\vee\dotsc\vee \neg a_\ell\vee\ a_{\ell+1}\vee\dotsc\vee\   a_m,
\eeq{eq:wcwcc}
where every $a_i$ is an atom.
If we restrict elements of $L$ to be  clauses, then we call $\cL$  a {\em sat-logic}. 
Intuitively, the finite theories in sat-logic can be identified with CNF formulas.
Say, sat-logic theory 
$\{(a\vee b),(\neg a\vee \neg b)\}$
stands for the formula 
\beq
(a\vee b)\wedge (\neg a\vee \neg b).
\eeq{eq:abtheory}

We now review logic programs. 
A \emph{logic program} over $\sigma$ is a 
finite 
set of \emph{rules} of the form
\begin{equation}\label{e:rule}
\begin{array}{l}
a_0\ar a_1,\dotsc, a_\ell,\ not\  a_{\ell+1},\dotsc,\ not\  a_m,\ 
\end{array}
\end{equation}
where $a_0$ is an atom in $\sigma$ or $\bot$ (empty), and each $a_i$ $(1\leq i\leq m)$ 
is an atom in $\sigma$.
The expression $a_0$ is the \emph{head} of the rule. 
The expression on the right hand side of the arrow, is the \emph{body}. When the body is empty we drop  $\ar$. Expression $not\  a_{\ell+1},\dotsc,\ not\  a_m$ is called {\em negative part of the body}.

We say that a set~$X$ of atoms {\em satisfies} rule~\eqref{e:rule}, if~$X$ satisfies the propositional formula
$$
a_1\wedge\dotsc\wedge a_\ell\wedge\ \neg  a_{\ell+1}\wedge\dotsc\wedge\ \neg  a_m\rar a_0.\ 
$$
The {\sl reduct}~$\Pi^X$ of a program~$\Pi$ relative to a set~$X$ of atoms is 
obtained by first removing all rules~\eqref{e:rule} such that~$X$ does not satisfy the propositional formula corresponding to the negative part of the body
$\neg  a_{\ell+1} \wedge\ldots\wedge\neg  a_m ,$
and replacing all remaining rules with their counterparts without negative part of the body
$$a\ar a_1,\ldots, a_\ell.$$ 
For example, let $\Pi_1$ denote a program
\beq
\ba{l}
a\ar not\ b\\
b\ar not\ a.
\ea
\eeq{ex:slp}
The reduct  ~$\Pi_1^{\{a\}}$  consists of a single rule
$$
a.
$$

A set~$X$ of atoms is an {\em answer set}, if it is the minimal set that satisfies all rules of~$\Pi^X$~\citeps{lif99d}. 
For instance, program~\eqref{ex:slp}
has two answer sets $\{a\}$ and $\{b\}$.

Abstract logics of Brewka and Eiter subsume the formalism of
logic programs under the  answer set semantics. 
Indeed, let us consider
a logic  $\cL=(L,\sigma,\sem)$, where
\begin{enumerate}
\item $L$ is the set of logic program rules over $\sigma$,
\item $\sem(\Pi)$, for a program $\Pi\subseteq L$, is the set of  answer sets of $\Pi$ over $\sigma$,
\end{enumerate}
We call this logic the {\em lp-logic}.

\cite{lt2015} propose (model-based) abstract modular systems that allow us to construct heterogeneous systems based of ``modules'' stemming from a variety of logics. 
We now  review their framework.

\begin{Definition}
Let $\cL=(L_\cL,\sigma_\cL,\sem_\cL)$ be a logic.
 A theory of $\cL$, that is, a subset of the 
language $L_\cL$ is called a \emph{(model-based) $\cL$-module} 
(or a \emph{module}, if the explicit 
reference to its logic is not necessary). An interpretation $I\in
\Int(\sigma_\cL)$ is a \emph{model} of an $\cL$-module/theory $T$
if $I\in\sem_\cL(T)$. 
\end{Definition}


We use words theory and modules interchangeably at times. Furthermore, for a theory/module in pl- or sat-logics we often refer to those as propositional or SAT formulas (sets of clauses).  For a theory/module in lp-logic we  refer to it as a logic program.

For an interpretation
$I$, by $I_{|\sigma}$ we denote an
interpretation over vocabulary 
$\sigma$ constructed from~$I$ by dropping all its members not in
$\sigma$. For example, let $\sigma_1$ be a vocabulary such that $a\in\sigma_1$ and $b\not\in\sigma_1$, then $\{a,b\}_{|\sigma_1}=\{a\}$. 
We now extend the notion of a model to vocabularies that go beyond the one of a considered module in a straightforward manner.  For an $\cL$-module $T$ and an interpretation $I$ whose vocabulary is a superset of the vocabulary $\sigma_\cL$ of $T$, we say that  
$I$ is a {\em model} of $T$, denoted $I\models T$, if $I_{|\sigma_\cL}\in\sem_\cL(T)$.
This extension is in spirit of a convention used in classical logic (for example, given a propositional formula $p\wedge q$ over vocabulary $\{p,q\}$ we can speak of interpretation assigning true to propositional variables $\{p,q,r\}$ as a model to this formula).

\begin{Definition}
A set of modules, possibly in different logics and over different
vocabularies is a \emph{(model-based) abstract modular system (AMS)}. 
For an abstract modular system $\cH$, the union of the 
vocabularies of the logics of the modules in $\cH$ forms the \emph{vocabulary} 
of $\cH$, denoted by $\sigma_\cH$. An interpretation  $I\in \Int(\sigma_\cH)$
is a \emph{model} of $\cH$ when for every module $T\in\cH$,  $I$ is a model of $T$. (It is easy to see that we can extend the notion of a model to interpretations whose vocabulary goes beyond $\sigma_\cH$ in a straightforward manner.)
\end{Definition}
When an AMS consists of a single module $\{F\}$ we  identify it with module $F$ itself.

\section{Weighted Abstract Modular Systems}\label{sec:wams}
In practice, we are frequently interested not only in identifying models of a given logical formulation of a problem (hard fragment) but identifying models that are deemed optimal according to some criteria (soft fragment). Frequently, multi-level optimizations are of interest. 
An AMS framework is geared towards capturing heterogeneous solutions for formulating hard constraints. Here we extend it to enable the formulation of soft constraints. 
We start by introducing a ``w-condition'' -- a module  accommodating notions of a level and a weight. We then introduce w-systems -- a generalization of AMS that accommodates new kinds of modules. In conclusion, we  embed multiple popular automated reasoning optimization formalisms into this framework.

\newpage
\begin{Definition}
Let $\cL=(L_\cL,\sigma_\cL,\sem_\cL)$ be a logic.
 A pair $(T_\cL,w@l)$ -- 
  consisting of a theory $T_\cL$ of logic~$\cL$ (or $\cL$-module) and an expression $w@l$, where  $w$ is an integer and $l$ is a positive integer -- is called an \emph{ $\cL$-w(eighted)-condition}  
(or a \emph{w-condition}, if the explicit 
reference to its logic is not necessary). 
We refer to integers $l$ and $w$ as {\em levels} and {\em weights}, respectively.
An interpretation $I\in
\Int(\sigma_\cL)$ is a \emph{model} of an~$\cL$-w-condition $B=(T_\cL,w@l)$, denoted $I\models B$
if $I\in\sem_\cL(T_\cL)$. 
A mapping $\br{I\models B}$ is defined as follows
\beq \br{I\models B}=\begin{cases}
  w&\hbox{when $I\models B$, }  \\
  0&\hbox{otherwise.}  
\end{cases}
\eeq{eq:isat}
By $\level{B}$ and $\weight{B}$ we denote level $l$ and weight $w$ associated with w-condition $B$, respectively.
\end{Definition}
We identify w-conditions of the form $(T,w@1)$ with expressions $(T,w)$ (i.e., when the level is missing it is considered to be one). 

For a collection $\cS$ of w-conditions, the union of the 
vocabularies of the logics of the w-conditions  in $\cS$ forms the \emph{vocabulary} 
of $\cS$, denoted by $\sigma_\cS$. 

\vspace{3em}

\begin{Definition}
A pair $(\cH,\cS)$ consisting of an AMS $\cH$ and a set $\cS$ of w-conditions  (possibly in different logics and over different
vocabularies) so that $\sigma_\cS\subseteq\sigma_\cH$ is called a \emph{ w(eighted)-abstract modular system} (or {\em w-system}). 
\end{Definition}
Let $\cW=(\cH,\cS)$ be
a w-system ($\cH$ and $\cS$ intuitively stand for {\em hard} and {\em soft}). 
The 
vocabulary of $\cH$ forms the \emph{vocabulary} 
of $\cW$, denoted by $\sigma_\cW$. We now state the definition of a model of a system that is solely based on the semantics of the hard part.
\begin{Definition}\label{def:modelwsys}
Let pair $\cW=(\cH,\cS)$ be a w-system.
An interpretation $I\in \Int(\sigma_\cW)$
is a \emph{model} of~$\cW$ if it is a model of  AMS 
$\cH$.
\end{Definition}

For the special case when all w-conditions of a considered w-system have the form~$(T,w)$, we  provide a simple  definition of an optimal/min-optimal model that takes soft part of w-system into account so that models are distinguished based on their ``w-conditions quality''. This definition is inspired by the way the solutions to partially weighted MaxSAT problems are stated (see Section~\ref{sec:maxsat}).

\begin{Definition}\label{def:optimalmodelwsysSimple}
Let pair $\cW=(\cH,\cS)$ be a w-system, where $\cS$ has the form $\{(T_1,w_1),\dots,(T_n,w_n)\}$.
A model $I^*$ of~$\cW$ is {\em optimal} if 
$I^*$ satisfies equation
$$
I^*=\displaystyle{ \argmax_{I} {\sum_{B\in\cS}{ \br{I\models B}}}
},
$$ 
where $I$ ranges over models of~$\cW$.
A model $I^*$ of~$\cW$ is {\em min-optimal} if it satisfies the conditions of optimal model, where in the equation above  we replace $max$ by $min$.
\end{Definition}

Let $\cW=(\cH,\cS)$ be w-system.
For a level~$l$, by $\cW_l$ we denote the subset of $\cS$ that includes all w-conditions whose level is~$l$.
By $\level{\cW}$ we denote the set of all levels associated with w-system~$\cW$ constructed as 
$
\{\level{B}\mid B\in \cS\}$.

We now provide the definition of an optimal model for the general form of w-systems. It is inspired by a definition of an optimal answer set coming from the ASP-WC framework, where multi-level optimizations are standard (see Section~\ref{sec:olp}). 
\begin{Definition}\label{def:optimalmodelwsysASP}
Let $I$ and $I'$ be models of w-system $\cW=(\cH,\cS)$.
Model $I'$ {\em min-dominates}  $I$ if 
there exists a level $l\in\level{\cW}$ such that
following conditions are satisfied:
\begin{enumerate}
\item for any level $l'>l$  the following equality holds
$$
\displaystyle{ \sum_{B\in\cW_{l'}}{ \br{I\models B}}}
=
\displaystyle{ \sum_{B\in\cW_{l'}}{ \br{I'\models B}}}
$$
\item the following inequality holds for level $l$
$$
\displaystyle{ \sum_{B\in\cW_l}{ \br{I'\models B}}}
<
\displaystyle{ \sum_{B\in\cW_l}{ \br{I\models B}}}
$$ 
\end{enumerate}
Model $I'$ {\em max-dominates}  $I$ if 
we change less-than symbol by greater-than symbol in the inequality of Condition~\ref{l:cond2}.

A model $I^*$ of $\cW$ is {\em optimal}  if there is no model~$I'$ of $\cW$ that max-dominates $I^*$. 
A model $I^*$ of~$\cW$ is {\em min-optimal}  if there is no model~$I'$ of $\cW$ that min-dominates $I^*$. 
\end{Definition}

We now attempt to reconcile terminology used in Definitions~\ref{def:optimalmodelwsysSimple} and~\ref{def:optimalmodelwsysASP}. We state an alternative definition to optimal and min-optimal models of Definition~\ref{def:optimalmodelwsysASP} and then provide a formal result on their equivalence.
Let $\cW=(\cH,\cS)$ be w-system.
For a level $l\in \level{\cW}$ by $\prev{l}$ we denote the least level in $\level{\cW}$ that is greater than $l$ (it is obvious that for the greatest level in $\level{\cW}$,  $\prev{l}$ is undefined). For example, for levels in~$\{2, 6, 8, 9\}$,
$\prev{2}=6$, $\prev{6}=8$, and~$\prev{8}=9$.
\begin{Definition}\label{def:optimalmodelwsys}
Let pair $\cW=(\cH,\cS)$ be a w-system.
For  level $l\in\level{\cW}$, a model $I^*$ of $\cW$ is {\em $l$-optimal} if 
$I^*$ satisfies equation
\beq
I^*=\displaystyle{ \argmax_{I} {\sum_{B\in\cW_l}{ \br{I\models B}}}
},
\eeq{eq:condeqlmin}
where
\begin{itemize}
    \item $I$ ranges over models of $\cW$ if $l$ is the greatest level in $\level{\cW}$, 
\item $I$ ranges over $\prev{l}$-optimal models of $\cW$, otherwise.
\end{itemize}
We call a model  {\em $l$-min-optimal} if $max$ is replaced by $min$ in~\eqref{eq:condeqlmin}. 

A model $I^*$ of $\cW$ is {\em optimal} if $I^*$ is $l$-optimal model for every level $l\in\level{\cW}$.
A model $I^*$ of~$\cW$ is {\em min-optimal} if $I^*$ is $l$-min-optimal model for every level $l\in\level{\cW}$.
\end{Definition}
\begin{Proposition}\label{prop:eqdefs}
Definitions~\ref{def:optimalmodelwsysASP} and~\ref{def:optimalmodelwsys} are equivalent.
\end{Proposition}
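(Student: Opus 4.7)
The plan is to prove the equivalence of the two characterizations of optimal models; the min-optimal case then follows by the identical argument with $\max$ replaced by $\min$ and the strict inequality reversed. I would first enumerate the levels in $\level{\cW}$ in decreasing order as $l_1 > l_2 > \cdots > l_n$, so that $\prev{l_{j+1}} = l_j$ and the recursion of Definition~\ref{def:optimalmodelwsys} unfolds from $l_1$ downward. Two elementary structural facts about $l$-optimality will do most of the work: (i)~any $l_j$-optimal model is also $l_{j'}$-optimal for every $j' < j$, since the argmax at a lower level is taken over a superset of the pool at a higher one; and (ii)~any two $l_j$-optimal models yield the same value of $\sum_{B \in \cW_{l_j}} \br{I \models B}$, namely the common maximum attained on the $\prev{l_j}$-optimal pool.

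For the direction from Definition~\ref{def:optimalmodelwsys} to Definition~\ref{def:optimalmodelwsysASP}, I would argue by contradiction. Suppose $I^*$ is optimal in the recursive sense but some $I'$ max-dominates it with witnessing level $l_k$. The goal is to contradict the $l_k$-optimality of $I^*$ by showing that $I'$ lies in the pool over which the $l_k$-argmax is taken, namely that $I'$ is $l_{k-1}$-optimal. I would prove by induction on $j = 1, \ldots, k-1$ that $I'$ is $l_j$-optimal: the base case uses the equality clause of max-dominates at $l_1$ combined with $l_1$-optimality of $I^*$ and fact~(ii); the inductive step repeats this at level $l_{j+1}$, with the inductive hypothesis placing $I'$ inside the $l_j$-optimal pool. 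Once $I'$ is seen to be $l_{k-1}$-optimal, the strict inequality clause of max-dominates at level $l_k$ directly contradicts $I^*$ achieving the $l_k$-argmax on that pool.

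For the converse, I would again proceed by contradiction: assume $I^*$ is optimal per Definition~\ref{def:optimalmodelwsysASP} but is not $l_k$-optimal for some $k$, taking the smallest such $k$. Then $I^*$ is $l_j$-optimal for every $j < k$, while the failure of $l_k$-optimality yields an $l_{k-1}$-optimal model $I'$ (or an arbitrary model, when $k = 1$) with a strictly greater value of $\sum_{B \in \cW_{l_k}} \br{I \models B}$ than $I^*$. Applying facts~(i) and~(ii) with $l_{k-1}$-optimality of both $I'$ and $I^*$, their sums must coincide at every level $l_j$ with $j < k$, so $I'$ max-dominates $I^*$ with witness $l_k$, contradicting optimality in the sense of Definition~\ref{def:optimalmodelwsysASP}.

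The main obstacle I anticipate is notational rather than conceptual: threading the recursion carefully so that the distinction between membership in a candidate pool and attaining the maximum on that pool is never blurred, and handling the edge cases $k = 1$ and $|\level{\cW}| = 1$ where the recursion collapses and Definition~\ref{def:optimalmodelwsys} reduces to the simpler Definition~\ref{def:optimalmodelwsysSimple}.
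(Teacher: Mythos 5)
Your proposal is correct and follows essentially the same route as the paper: both directions by contradiction at an extremal witnessing level, supported by the paper's Lemmas~\ref{lem:loptimal} and~\ref{lem:loptimal2} (your facts~(i) and~(ii)), with your level-by-level induction showing $I'$ is $l_{k-1}$-optimal being precisely the content of the paper's Lemma~\ref{lem:propertylmodels3}. The only nitpick is that your justification of fact~(i) has the inclusion backwards (the candidate pool at a lower level is a \emph{subset}, not a superset, of the pool at a higher level), but the fact itself is correct and the argument is unaffected.
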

The proofs of the formal results of this paper are presented in its final section. The proof of the claim of Proposition~\ref{prop:eqdefs} relies on the following lemmas that we list here as they refer to interesting  properties of $l$-optimal models.
\begin{Lemma}\label{lem:loptimal}
For a w-system $\cW=(\cH,\cS)$ and levels $l,l' \in \level{\cW}$ so that $l<l'$, any $l$-optimal/$l$-min-optimal model of $\cW$ is also an $l'$-optimal/$l'$-min-optimal model of $\cW$ as well as model of $\cW$. For the least level $l$ in $\level{\cW}$,   $l$-optimal/$l$-min-optimal model of $\cW$ is an optimal/min-optimal model of~$\cW$.
\end{Lemma}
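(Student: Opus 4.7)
The plan is to unfold Definition~\ref{def:optimalmodelwsys} and exploit the fact that $l$-optimality is defined recursively by requiring the $\argmax$ (or $\argmin$) to range over $\prev{l}$-optimal models rather than over all models of~$\cW$. I will handle the ``optimal'' case only; the ``min-optimal'' case is entirely symmetric, obtained by replacing $\argmax$ with $\argmin$ throughout, so I would just state that and move on.

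First, I would dispose of the easy claim that any $l$-optimal model $I^*$ is a model of $\cW$. This follows by a straightforward induction on the position of $l$ in the finite linearly ordered set $\level{\cW}$, counted from the top: in the base case ($l$ is the greatest level) the $\argmax$ in~\eqref{eq:condeqlmin} ranges over models of $\cW$, so $I^* \models \cW$ directly; in the step case, $I^*$ is a $\prev{l}$-optimal model, which by the inductive hypothesis is a model of $\cW$.

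Next I would prove the main inclusion: if $l < l'$ are both in $\level{\cW}$, then every $l$-optimal model is $l'$-optimal. The key observation is that, by Definition~\ref{def:optimalmodelwsys}, whenever $l$ is not the greatest level in $\level{\cW}$, an $l$-optimal model is by construction a $\prev{l}$-optimal model (because the $\argmax$ in~\eqref{eq:condeqlmin} is taken precisely over $\prev{l}$-optimal models, and $I^*$ belongs to this set). Writing the elements of $\level{\cW}$ greater than or equal to $l$ in increasing order as $l = l_0 < l_1 < \cdots < l_k$ with $l_i = \prev{l_{i-1}}$, a simple induction on $i$ shows that every $l$-optimal model is $l_i$-optimal for each $i \leq k$. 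Choosing $l_i = l'$ gives the desired conclusion.

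Finally, for the last sentence of the lemma, suppose $l$ is the least level of $\level{\cW}$ and let $I^*$ be an $l$-optimal model. By the previous paragraph, $I^*$ is $l'$-optimal for every $l' \in \level{\cW}$ with $l' > l$, and of course $l$-optimal for $l$ itself; hence $I^*$ is $l'$-optimal for all $l' \in \level{\cW}$, which by Definition~\ref{def:optimalmodelwsys} means $I^*$ is optimal. I expect the only mild obstacle is being careful about the bookkeeping of $\prev{\cdot}$ in the induction, in particular the fact that $\prev{\cdot}$ is well-defined on every level except the greatest, so that the recursive step of the definition always fires when needed.
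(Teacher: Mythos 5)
Your proof is correct and follows exactly the route the paper has in mind: the paper states only that ``an inductive proof on levels of $\cW$ relying on the definition of $l$-optimal models suffices'' and omits the details, and your argument is precisely that induction carried out, with the key observation that the $\argmax$ in~\eqref{eq:condeqlmin} ranges over $\prev{l}$-optimal models and therefore forces every $l$-optimal model to be $\prev{l}$-optimal. Nothing is missing; you have simply written out what the paper left implicit.
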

\begin{Lemma}\label{lem:loptimal2}
For a w-system $\cW=(\cH,\cS)$ and levels $l,l' \in \level{\cW}$ so that $l\leq l'$, and any $l$-optimal/$l$-min-optimal models $I$ and $I'$ of $\cW$,
$$\sum_{B\in\cW_{l'}}{ \br{I\models B}}= \sum_{B\in\cW_{l'}}{ \br{I'\models B}}.$$
\end{Lemma}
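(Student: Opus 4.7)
The plan is to derive Lemma~\ref{lem:loptimal2} as a direct consequence of the definition of an $l'$-optimal model together with Lemma~\ref{lem:loptimal}. I will handle the $l$-optimal case in detail; the $l$-min-optimal case follows by replacing $\max$ with $\min$ throughout.

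First, I would upgrade the hypothesis from $l$-optimality to $l'$-optimality. Since $l \leq l'$, Lemma~\ref{lem:loptimal} (applied repeatedly along the chain of levels between $l$ and $l'$ in $\level{\cW}$, and trivially when $l=l'$) yields that both $I$ and $I'$ are $l'$-optimal models of $\cW$. This is the main conceptual step, because it reduces the seemingly level-spanning claim to a statement entirely about a single level $l'$.

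Next, I would unpack the definition of $l'$-optimality. By Definition~\ref{def:optimalmodelwsys}, there is a distinguished collection $\mathcal{C}$ of models of $\cW$ (namely, either all models of $\cW$, when $l'$ is the greatest level in $\level{\cW}$, or the $\prev{l'}$-optimal models of $\cW$ otherwise) such that an $l'$-optimal model is precisely a member of $\mathcal{C}$ that maximizes $J \mapsto \sum_{B\in\cW_{l'}} \br{J \models B}$ over $\mathcal{C}$. In particular, both $I$ and $I'$ belong to $\mathcal{C}$, and both attain the maximum value of this function on $\mathcal{C}$. Therefore
$$
\sum_{B\in\cW_{l'}} \br{I\models B} \;=\; \max_{J\in\mathcal{C}} \sum_{B\in\cW_{l'}} \br{J\models B} \;=\; \sum_{B\in\cW_{l'}} \br{I'\models B},
$$
which is the desired equality.

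The only delicate point — really the only place where I expect any friction — is making sure that the iterated use of Lemma~\ref{lem:loptimal} is justified when $l < l'$. Since $\level{\cW}$ is a finite set of integers, one can enumerate its levels between $l$ and $l'$ in increasing order and apply Lemma~\ref{lem:loptimal} along successive pairs to propagate optimality upward from $l$ to $l'$. The $l$-min-optimal version proceeds identically, using the $\min$-variant of each definition and the corresponding statement in Lemma~\ref{lem:loptimal}.
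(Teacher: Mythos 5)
Your proof is correct and follows essentially the same route as the paper: the paper likewise derives Lemma~\ref{lem:loptimal2} by combining Lemma~\ref{lem:loptimal} (to promote both models to $l'$-optimality) with the observation that any two $l'$-optimal models, being maximizers of the same objective over the same set of candidates, yield the same sum at level~$l'$. The only cosmetic difference is that you iterate Lemma~\ref{lem:loptimal} across intermediate levels, whereas its statement already applies directly to any pair $l<l'$; this does not affect correctness.
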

Section~\ref{sec:formalProperties}  presents many interesting formal properties of  w-systems. Yet, before proceeding towards that presentation in the following section we illustrate several AI formalisms that can be viewed as instances of w-systems. With that our later discussion of formal properties of w-systems immediately applies to these AI formalisms.

\section{Instances of W-Systems}\label{sec:instances}
\subsection{MaxSAT/MinSAT Family as W-systems}\label{sec:maxsat}
We now restate the definitions of {\em MaxSAT}, {\em weighted MaxSAT} and {\em pw-MaxSAT}~\citeps{rob10}. We then show how  these formalisms are captured in terms of w-systems. In the sections that follow we use w-systems to model logic programs with  weak constraints. The uniform language of w-systems allows us to prove properties of theories in these different logics by eliminating the reference to their syntactic form. Later in the paper we provide translation from logic programs with weak constraints to pw-MaxSAT problems.

To begin we introduce a notion of so called $\sigma$-theory. For a vocabulary $\sigma$ and a logic $\cL$ over this vocabulary ($\sigma_\cL=\sigma$), we call theory $T_\cL$ a
{\em $\sigma$-theory/$\sigma$-module}  when it satisfies property $sem(T_\cL)=Int(\sigma)$.
For example, in case of pl-logic or sat-logic a conjunction of clauses of the form $a\vee \neg a$ for every atom $a\in\sigma$ forms a $\sigma$-theory. For a $\sigma$-theory a logic of the theory becomes immaterial so we allow ourselves to denote an arbitrary $\sigma$-theory by
 $T_\sigma$ disregarding the reference to its logic.

As customary in propositional logic given an interpretation $I$  and a propositional formula $F$, we write $I\models F$ when~$I$ satisfies $F$ (i.e., $I$ is a model of $F$). 
A mapping $\br{I\models F}$ is defined as 
\beq \br{I\models F}=\begin{cases}
  1&\hbox{when $I\models F$, }  \\
  0&\hbox{otherwise.}  
\end{cases}
\eeq{eq:isatorig}
\begin{Definition}
An interpretation $I^*$ over vocabulary $\sigma$ is a {\em solution} to {\em MaxSAT problem} $F$, where $F$ is a CNF formula over~$\sigma$, when it satisfies the equation 
$$
I^*=arg \max_{I}{\sum_{C\in F}{\br{I\models C}}},
$$
where $I$ ranges over all interpretations over $\sigma$.
\end{Definition}
The following result illustrates how w-systems can be used to capture MaxSAT problem.
\begin{Proposition}\label{prop:maxsat}
Let $F$ be a MaxSAT problem over $\sigma$.
The optimal models of  w-system $(T_\sigma,\{(C,{1})\mid C\in F\})$ -- 
where pairs of the form $(C,{1})$ are  sat-logic w-conditions --  form the set of solutions for~$F$. 
\end{Proposition}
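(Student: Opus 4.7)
The plan is to unfold the definitions and observe that the w-system is carefully set up so that its optimality criterion syntactically collapses to the MaxSAT criterion. I would proceed in three short steps.

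First, I would identify the models of the w-system $\cW=(T_\sigma,\{(C,1)\mid C\in F\})$. By Definition~\ref{def:modelwsys}, $I$ is a model of $\cW$ iff $I$ is a model of the AMS $T_\sigma$. Since $T_\sigma$ is a $\sigma$-theory (i.e., $\sem(T_\sigma)=\Int(\sigma)$), the set of models of $\cW$ coincides with $\Int(\sigma)$, matching the interpretations over which the MaxSAT criterion quantifies.

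Second, I would argue that the appropriate notion of optimal model here is the one in Definition~\ref{def:optimalmodelwsysSimple}. Each w-condition in $\cS$ has the form $(C,1)$, i.e., all weights and all levels equal $1$, so $\level{\cW}=\{1\}$. Under Definition~\ref{def:optimalmodelwsys} this makes every optimal model an $l$-optimal model for the unique level $l=1$ with $I$ ranging over all models of $\cW$, which is exactly the condition given in the simpler Definition~\ref{def:optimalmodelwsysSimple}. Hence $I^*$ is optimal in $\cW$ iff
$$I^*=\argmax_{I}\sum_{B\in\cS}\br{I\models B},$$
with $I$ ranging over $\Int(\sigma)$ by the previous step.

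Third, I would reconcile the two bracket mappings. For a w-condition $B=(C,1)$, Equation~\eqref{eq:isat} gives $\br{I\models B}=1$ when $I\in\sem(\{C\})$ and $0$ otherwise, which by Equation~\eqref{eq:isatorig} equals $\br{I\models C}$. Therefore
$$\sum_{B\in\cS}\br{I\models B}=\sum_{C\in F}\br{I\models C},$$
so the argmax condition for optimal models of $\cW$ is literally the MaxSAT solution equation.

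There is no real obstacle here; the only care needed is the bookkeeping between the two variants of the $\br{\cdot}$ mapping (Equations~\eqref{eq:isat} and~\eqref{eq:isatorig}) and the verification that Definition~\ref{def:optimalmodelwsysSimple} applies when $\level{\cW}$ is a singleton so that the multilevel machinery of Definitions~\ref{def:optimalmodelwsysASP}/\ref{def:optimalmodelwsys} is not actually needed.
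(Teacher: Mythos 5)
Your proposal is correct and follows essentially the same route as the paper's proof: identify the models of the w-system with $\Int(\sigma)$ via the $\sigma$-theory, apply Definition~\ref{def:optimalmodelwsysSimple} to the single-level, unit-weight system, and observe that $\br{I\models(C,1)}=\br{I\models C}$ so the two argmax equations coincide. Your extra remark that the multilevel Definition~\ref{def:optimalmodelwsys} collapses to Definition~\ref{def:optimalmodelwsysSimple} when $\level{\cW}=\{1\}$ is a small bit of bookkeeping the paper leaves implicit, but it does not change the argument.
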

Proposition~\ref{prop:maxsat}  allows us to identify w-systems of particular form with MaxSAT  problems. For example, any w-system of the form $\big(T_\sigma,\{(C_1,1),\dots(C_n,1)\}\big)$ -- where $C_i$ $(1\leq i\leq n)$ is a singleton sat-logic theory -- can be seen as a MaxSAT problem composed of clauses $\{C_1,\dots,C_n\}$.

\begin{Definition} 
 A {\em weighted MaxSAT problem}~\citeps{arg08}/{\em weighted MinSAT problem}~\citeps{chu11} is defined as a set $(C,w)$ of pairs, where $C$ is a clause and $w$ is a positive integer.
An interpretation~$I^*$ over vocabulary $\sigma$ is a  {\em solution} to weighted MaxSAT problem $P$ over~$\sigma$, when it satisfies the equation 
\beq
I^*=arg \max_{I}{\sum_{(C,w)\in P}{w\cdot \br{I\models C}}},
\eeq{eq:wmaxsat}
where $I$ ranges over all interpretations over $\sigma$.
An interpretation $I^*$ over vocabulary $\sigma$ is a  {\em solution} to weighted MinSAT problem $P$ over $\sigma$, when it satisfies the equation~\eqref{eq:wmaxsat}, where we replace $max$ with $min$. 
\end{Definition}

The following proposition allows us to identify w-systems of particular form with weighted MaxSAT/MinSAT problems.

\bigskip
\begin{Proposition}\label{prop:wmaxsat}
Let $P$ be a weighted MaxSAT problem over~$\sigma$.
The optimal  models of w-system $(T_\sigma,P)$ --
where each element in $P$ is understood as a sat-logic w-condition --
 form the set of solutions for~$P$. 

Let $P$ be a weighted MinSAT problem over~$\sigma$.
The min-optimal  models of w-system $(T_\sigma,P)$ --
where each element in $P$ is understood as a sat-logic w-condition --
 form the set of solutions for~$P$. 
\end{Proposition}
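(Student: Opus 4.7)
The plan is to reduce the proposition directly to Definition~\ref{def:optimalmodelwsysSimple}, since every element of $P$, written as $(C,w)$, is a shorthand for the w-condition $(C,w@1)$, so all w-conditions sit at a single level and the simpler optimal-model definition applies. Before invoking it, I would first verify the well-formedness of the w-system $\cW=(T_\sigma,P)$: by definition of a $\sigma$-theory, $\sem(T_\sigma)=\Int(\sigma)$, so in particular $\sigma_\cS\subseteq\sigma=\sigma_\cH$ (clauses in $P$ use only atoms from $\sigma$), and every interpretation $I\in\Int(\sigma)$ is a model of $T_\sigma$, hence, by Definition~\ref{def:modelwsys}, a model of~$\cW$. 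Thus the range of $I$ in the $\arg\max$ of Definition~\ref{def:optimalmodelwsysSimple} (models of $\cW$) is exactly $\Int(\sigma)$, matching the range in the weighted MaxSAT definition~\eqref{eq:wmaxsat}.

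Next, I would unwind the objective. For a w-condition $B=(C,w)$ in~$P$, equation~\eqref{eq:isat} gives $\br{I\models B}=w$ when $I\in\sem(C)$ (that is, $I\models C$ in the propositional sense) and $\br{I\models B}=0$ otherwise; using the mapping~\eqref{eq:isatorig} this can be rewritten as $\br{I\models B}=w\cdot\br{I\models C}$. Summing over $P$ yields
\[
\sum_{B\in\cS}\br{I\models B}\;=\;\sum_{(C,w)\in P} w\cdot\br{I\models C},
\]
so the $\arg\max$ equation of Definition~\ref{def:optimalmodelwsysSimple} becomes identical, term-for-term, with the defining equation~\eqref{eq:wmaxsat} of a solution to the weighted MaxSAT problem. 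Hence the optimal models of $\cW$ coincide with the solutions of $P$, establishing the first half.

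For the second half (weighted MinSAT), I would repeat the argument verbatim after replacing $\arg\max$ by $\arg\min$, appealing to the min-optimal clause of Definition~\ref{def:optimalmodelwsysSimple} and to the $min$-variant of~\eqref{eq:wmaxsat}. The main obstacle is purely notational: one must keep the two bracket mappings straight (the w-condition mapping from~\eqref{eq:isat}, which embeds the weight, and the $0/1$ propositional mapping from~\eqref{eq:isatorig}), and must recall the convention that $(T,w)$ abbreviates $(T,w@1)$ so that all w-conditions share the single level needed to use Definition~\ref{def:optimalmodelwsysSimple} rather than the multi-level Definition~\ref{def:optimalmodelwsys}. Once these conventions are applied, the equivalence is immediate.
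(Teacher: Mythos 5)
Your proposal is correct and matches the paper's own argument: the paper proves this proposition by following the proof of Proposition~\ref{prop:maxsat} verbatim, relying on exactly the identity $\sum_{(C,w)\in P}\br{I\models (C,w)}=\sum_{(C,w)\in P}w\cdot\br{I\models C}$ that you derive, together with the observation that the models of $(T_\sigma,P)$ are all of $\Int(\sigma)$ so the ranges of the two $\argmax$/$\argmin$ expressions coincide. The only difference is that you spell out the well-formedness check $\sigma_\cS\subseteq\sigma_\cH$, which the paper leaves implicit.
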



\begin{Definition}
A {\em partially-weighted MaxSAT problem or pw-MaxSAT problem}~\citeps{fu06} ({\em pw-MinSAT problem}~\citeps{chu11})
is defined as a pair $(F,P)$ over vocabulary $\sigma$, where~$F$ is a CNF formula over $\sigma$ and~$P$ is  a weighted MaxSAT problem (a weighted MinSAT problem) over~$\sigma$. Formula~$F$ is referred to as {\em hard} problem fragment, whereas clauses in $P$ form {\em soft} problem fragment. An interpretation $I$ over~$\sigma$ is a {\em model} of pw-MaxSAT problem/pw-MinSAT problem $(F,P)$, when 
$I$ is a model of $F$. 

Let $(F,P)$ be a pw-MaxSAT problem  over vocabulary $\sigma$.
A model $I^*$ of $(F,P)$ is optimal when
it satisfies  equation~\eqref{eq:wmaxsat}, where $I$ ranges over models of $F$. 

Let $(F,P)$ be a pw-MinSAT problem  over vocabulary $\sigma$.
A model $I^*$ of $(F,P)$ is optimal when
it satisfies  equation~\eqref{eq:wmaxsat}, where $I$ ranges over models of $F$ and $max$ is replaced by $min$. 
\end{Definition}

The following proposition allows us to identify w-systems of particular form with pw-MaxSAT/pw-MinSAT problems.
\begin{Proposition}\label{prop:pwmaxsat}
Let $(F,P)$ be a pw-MaxSAT problem  over vocabulary $\sigma$.
The models and optimal models of w-system $(F,P)$ --
where $F$ is a sat-logic module and each element in $P$ is understood as a  sat-logic w-condition --
 coincide with the models and optimal models of pw-MaxSAT problem $(F,P)$, respectively.
 
Let $(F,P)$ be a pw-MinSAT problem  over vocabulary $\sigma$.
The models and min-optimal models of w-system $(F,P)$ --
where $F$ is a sat-logic module and each element in $P$ is understood as a  sat-logic w-condition --
 coincide with the models and optimal models of pw-MinSAT problem $(F,P)$, respectively.
 \end{Proposition}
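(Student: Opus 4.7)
The plan is to prove both parts of the proposition by unfolding the relevant definitions and noting that, because every soft element in $P$ is treated as a w-condition of the form $(C,w)$ (implicitly $(C,w@1)$), only a single level is involved, and the general Definition~\ref{def:optimalmodelwsysASP} reduces to the simpler Definition~\ref{def:optimalmodelwsysSimple}. So the entire argument boils down to two bookkeeping observations: (i) the notions of ``model'' on the two sides refer to models of the same hard theory $F$, and (ii) the objective functions optimized on the two sides are identical.

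For observation (i), I would invoke Definition~\ref{def:modelwsys}: a model of w-system $(F,P)$ is by definition a model of the AMS $\{F\}$, which — since $F$ is read as a sat-logic module — coincides with the set of propositional models of the CNF $F$. This matches the definition of a model of the pw-MaxSAT/pw-MinSAT problem $(F,P)$, which is also just a model of $F$. Hence the two sets of models agree.

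For observation (ii), I would compare objective functions. On the pw-MaxSAT side the objective is $\sum_{(C,w)\in P} w\cdot \br{I\models C}$, with $\br{\cdot}$ given by~\eqref{eq:isatorig}. On the w-system side, for each $B=(C,w)\in P$ viewed as a sat-logic w-condition, equation~\eqref{eq:isat} yields $\br{I\models B}=w$ exactly when $I$ satisfies the clause $C$ and $0$ otherwise; equivalently $\br{I\models B}=w\cdot\br{I\models C}$ in the sense of~\eqref{eq:isatorig}. Summing over $B\in P$ gives
\[
\sum_{B\in P}\br{I\models B}\;=\;\sum_{(C,w)\in P} w\cdot \br{I\models C},
\]
so the objectives coincide. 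Because the optimization in Definition~\ref{def:optimalmodelwsysSimple} ranges over models of the w-system, which by (i) are exactly models of $F$, an interpretation maximizes (resp.\ minimizes) one objective over the appropriate feasible set iff it maximizes (resp.\ minimizes) the other, giving the claims about optimal and min-optimal models.

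I do not expect any real obstacle; the delicate point is simply to be careful that the two uses of the bracket notation — the one in~\eqref{eq:isat} attached to w-conditions and the one in~\eqref{eq:isatorig} attached to clauses — are related by the factor $w$, and to note that because all levels equal $1$ the simpler Definition~\ref{def:optimalmodelwsysSimple} applies (which can be justified, if desired, via Proposition~\ref{prop:eqdefs} together with the observation that with a single level Definition~\ref{def:optimalmodelwsysASP} collapses to Definition~\ref{def:optimalmodelwsysSimple}). The min-case is entirely symmetric, obtained by replacing $\argmax$ with $\argmin$ throughout.
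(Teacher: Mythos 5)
Your proposal is correct and follows essentially the same route as the paper: the paper proves Proposition~\ref{prop:maxsat} by unfolding Definition~\ref{def:optimalmodelwsysSimple} and matching the two objective functions, and then notes that Propositions~\ref{prop:wmaxsat} and~\ref{prop:pwmaxsat} follow the same logic using precisely the identity $\sum_{(C,w)\in P}\br{I\models (C,w)}=\sum_{(C,w)\in P}w\cdot\br{I\models C}$ that you isolate. Your explicit treatment of the coincidence of models of the hard part and of the applicability of the single-level definition matches the paper's (more terse) remarks.
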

 We now present sample pw-MaxSAT problem to illustrate some definitions at work. 
Take $F_1$ to denote  sat-theory module~\eqref{eq:abtheory}.
Consider a pw-MaxSAT problem 
\beq
(F_1,\{(a,1),(b,1),(a\vee\neg b,2), (\neg a\vee b,0)\}).
\eeq{eq:partmsat}
The models of this problem are $\{a\}$ and $\{b\}$; and its optimal model is $\{a\}$. 
If we consider this pair as  pw-MinSAT problem then $\{b\}$ is an optimal model of this problem. 
Consider a pw-MinSAT problem 
\beq
(F_1,\{(\neg a\vee b,2)\}).
\eeq{eq:pwminsat}
The models of this problem are $\{a\}$ and $\{b\}$; and its optimal model is $\{a\}$.

Embedding family of MaxSAT/MinSAT problems into w-systems realm provides us with immediate means to generalize their definitions to accommodate
\begin{itemize}
    \item 
    negative weights; 
    \item 
    distinct levels accompanying weight requirement on its clauses; 
    \item  
    removing restriction from its basic syntactic object being a clause and allowing, for example,  arbitrary propositional formulas, as a logic for its module and w-conditions.
\end{itemize}
Consider the following definition of MinMaxPL Problem, meant to be a counterpart of pw-MaxSAT/pw-MinSAT defined for arbitrary propositional formulas and incorporating enumerated items. We call a w-system~$(F,S)$ a {\em MinMaxPL problem}, when $F$ is a pl-logic module and each w-condition in $S$ is in pl-logic. 
It is easy to see that any pw-MaxSAT problem is a special case instance of MinMaxPL problem, whose optimal models coincide. 
Any pw-MinSAT problem is a special case instance of MinMaxPL problem, where optimal models of pw-MinSAT and 
min-optimal models of respective MinMaxPL problem coincide. 

The pair 
\beq
(F_1,\{(a,1),(b,1@3),(a\vee\neg b,2), (\neg a\vee b,0)\})
\eeq{eq:maxpl}
forms a sample MinMaxPL problem that differs from~\eqref{eq:partmsat} in boosting the level of one of its w-conditions. The optimal model of this system is $\{b\}$.
In the sequel we illustrate that presence of levels and negative weights in w-systems can often be considered as syntactic sugar. Also, the concept of min-optimal model can  be expressed in terms of optimal models of a closely related w-system. Yet, from the perspective of knowledge representation, convenience of modeling, algorithm design for search procedures such features are certainly of interest and deserve an attention and thorough understanding.

\subsection{Optimizations in Logic Programming or Answer Set Programming with Weak Constraints}\label{sec:olp}  
  We now review 
 a definition of a logic program with weak constraints following the lines of~\citeps{cal15}.
        A {\em weak constraint} has the form
\beq 
\wr a_1,\dotsc, a_\ell,\ not\  a_{\ell+1},\dotsc,\ not\  a_m[w@l],
\eeq{eq:wc}
where $m>0$ and $a_1,\ldots,a_m$ are atoms,  $w$ (weight) is an integer, and  $l$ (level) is a positive integer. 
In the sequel, we abbreviate expression
\beq
\wr  a_1,\dotsc, a_\ell,\ not\  a_{\ell+1},\dotsc,\ not\  a_m
\eeq{eq:wcbody}
occurring in~\eqref{eq:wc} as $D$ and identify it with the propositional formula
\beq a_1\wedge\dotsc\wedge a_\ell\wedge\ \neg  a_{\ell+1}\wedge\dotsc\wedge\ \neg  a_m.
\eeq{eq:wcbodyf}
An {\em optimization program} (or {\em o-program})  over vocabulary $\sigma$ is a pair~$(\Pi,W)$, where $\Pi$ is a logic program over  $\sigma$ and~$W$ is a finite set of weak constraints over $\sigma$.

Let $\cP=(\Pi,W)$ be an optimization program over vocabulary $\sigma$ (intuitively, $\Pi$ and~$W$ form  {\em hard} and {\em soft} fragments, respectively). 
By $\level{\cP}$ we denote the set of all levels associated with optimization program $\cP$ constructed as 
$
\{l\mid\, D[w@l]\in W\}$.
\begin{Definition}
Let $\cP=(\Pi,W)$ be an optimization program over  $\sigma$. 
Set~$X$ of atoms over $\sigma$ is an {\em answer set} of~$\cP$, when it is an answer set of $\Pi$.

Let $X$ and $X'$ be answer sets of $\cP$.
Answer set $X'$ {\em dominates}  $X$ if 
there exists a level $l\in\level{\cP}$ such that
following conditions are satisfied:
\begin{enumerate}
\item\label{l:cond1} for any level $l'$ that is greater than $l$  the following equality holds
$$
\displaystyle{ \sum_{D[w@l']\in W}{w \cdot\br{X\models D}}
} =\displaystyle{\sum_{D[w@l']\in W}{w \cdot\br{X'\models D}}
}
$$
\item\label{l:cond2} the following inequality holds for level $l$
$$
\displaystyle{\sum_{D[w@l]\in W}{w \cdot\br{X'\models D}}
}<\displaystyle{ \sum_{D[w@l]\in W}{w \cdot\br{X\models D}}
} 
$$ 
\end{enumerate}
An answer set $X^*$ of $\cP$ is {\em optimal}  if there is no answer set~$X'$ of $\cP$ that dominates $X^*$. 
\end{Definition}


Consider a logic whose  language  is a strict subset of that of propositional logic: a language that allows only for formulas of the form~\eqref{eq:wcbodyf}, whereas its semantics is that of propositional logic. We call this logic a {\em wc-logic}.

\begin{Proposition}\label{prop:op}
Let $(\Pi,W)$ be an optimization logic program over vocabulary $\sigma$.
The models and min-optimal models of w-system $\big(\Pi,\{(D,{w@l})\mid D[w@l]\in W\}\big)$
-- where~$\Pi$ is an lp-logic module and pairs of the form $(D,{w@l})$ are  wc-logic w-conditions -- coincide with the answer sets and optimal answer sets of $(tit\Pi,W)$, respectively.
\end{Proposition}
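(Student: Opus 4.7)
The plan is to unfold the relevant definitions on both sides and show that they match term by term. The result decomposes naturally into two claims: (i) the models of the w-system coincide with the answer sets of $(\Pi,W)$, and (ii) the min-optimality relation on the w-system coincides with the optimality relation on the optimization program.

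For claim (i), I would observe that by Definition~\ref{def:modelwsys}, the models of the w-system $(\Pi,\{(D,w@l)\mid D[w@l]\in W\})$ are exactly the models of the AMS consisting of the single lp-logic module $\Pi$. By the definition of lp-logic reviewed in Section~\ref{mbms}, $\sem_{\mathit{lp}}(\Pi)$ is the set of answer sets of $\Pi$, which by definition of an optimization program is precisely the set of answer sets of $(\Pi,W)$.

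For claim (ii), the key observation is that the weighted sums appearing in the two definitions agree pointwise. Concretely, for each weak constraint $D[w@l]\in W$ and each answer set $X$, the wc-logic w-condition $(D,w@l)$ satisfies
$$
\br{X\models (D,w@l)} \;=\; \begin{cases} w & \text{if } X\models D \\ 0 & \text{otherwise} \end{cases}
\;=\; w\cdot \br{X\models D},
$$
where the last expression uses the $0/1$-valued bracket of~\eqref{eq:isatorig}. Hence for every level $l'$,
$$
\sum_{B\in \cW_{l'}} \br{X\models B} \;=\; \sum_{D[w@l']\in W} w\cdot \br{X\models D},
$$
so the equality condition in clause~\ref{l:cond1} of Definition~\ref{def:optimalmodelwsysASP} is identical to clause~\ref{l:cond1} of the dominates relation for optimization programs, and the strict inequality in clause~\ref{l:cond2} of the min-dominates relation matches clause~\ref{l:cond2} of the dominates relation. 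Therefore $X'$ min-dominates $X$ in the w-system if and only if $X'$ dominates $X$ in $(\Pi,W)$.

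Combining the two claims, $X^*$ is a min-optimal model of the w-system exactly when it is a model of the w-system that is not min-dominated by any other model, which by (i) and the equivalence above is exactly when $X^*$ is an answer set of $(\Pi,W)$ that is not dominated by any other answer set, i.e., an optimal answer set. The only subtlety, and the place that requires a small amount of care, is verifying that the bracket notation $\br{\cdot\models\cdot}$ used in the w-condition semantics (which carries the weight directly) agrees with the weight-times-indicator form used in the weak-constraint semantics; this is really just an unpacking of Definition~3, but it is the linchpin that makes the two optimality relations coincide.
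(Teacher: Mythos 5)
Your proposal is correct and follows essentially the same route as the paper: the paper's proof consists precisely of noting the level-wise equality $\sum_{B\in \{(D,w@k)\mid D[w@k]\in W\}}\br{J\models B}=\sum_{D[w@k]\in W} w\cdot\br{J\models D}$ and then appealing to Definition~\ref{def:optimalmodelwsysASP}, which is exactly your key observation. You merely spell out the remaining "straightforward" steps (the identification of models with answer sets and the matching of the two dominance relations) that the paper leaves implicit.
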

Propositions~\ref{prop:maxsat},~\ref{prop:wmaxsat},~\ref{prop:pwmaxsat}, and~\ref{prop:op} allow us to identify MaxSAT, weighted MaxSAT, pw-MaxSAT, and o-programs with respective w-systems. In the following, we often use the terminology stemming from w-systems, when we talk of these distinct frameworks. 
For instance, we allow ourselves to identify a weak constraint~\eqref{eq:wc} with a wc-logic w-condition 
\beq(a_1\wedge\dotsc\wedge a_\ell\wedge\ \neg  a_{\ell+1}\wedge\dotsc\wedge\ \neg  a_m,w@l).\eeq{eq:wcwc}

We now exemplify the definition of an optimization program. Let $\Pi_1$ be logic program~\eqref{ex:slp}.
An optimal answer set of optimization program
\beq
(\Pi_1,\{\wr a,not\ b. [-2@1]\})
\eeq{eq:sampleop}
is $\{a\}$. We note that the answer sets and the optimal answer set   of~\eqref{eq:sampleop} coincide with the models and the optimal model of pw-MinSAT problem~\eqref{eq:pwminsat}. The formal results of this paper will show that this is not by chance.

It is worth noting that an alternative syntax is frequently used by answer set programming practitioners when they express optimization criteria:
$$
\#minimize\{w_1@l_1:lit_1,\dots,w_n@l_n:lit_n \},
$$
where $lit_i$ is either an atom $a_i$ or an expression $not\ a_i$.
This statement stands for $n$ weak constraints
$$
\ba{l}
\wr lit_1[w_1@l_1].\\
~~\cdots~~\\
\wr lit_n[w_n@l_n].
\ea
$$
Similarly, statement
$$
\#maximize\{w_1@l_1:lit_1,\dots,w_n@l_n:lit_n \},
$$
stands for $n$ weak constraints
$$
\ba{l}
\wr lit_1[-w_1@l_1].\\~\cdots~~\\
\wr lit_n[-w_n@l_n].
\ea
$$

\section{Qualitative Optimizations with W-systems}\label{sec:qual}  
So far we paid attention to what is often called quantitative optimizations or, another way to call these, quantitative preferences. An objective function for assessing quality of computed models is of central importance in these methods. Weighted-conditions of w-systems are vehicles for expressing objective functions. The MaxSAT/MinSAT family considered earlier exemplifies formalisms implementing quantitative preferences.

Qualitative preferences is an alternative. These may be expressed by means of preference relations on the level of atoms (literals - atoms or their complements) or interpretations. 
\cite{alv18b} developed an abstract framework that elegantly captures qualitative, quantitative, and mixed preferences. They introduce a concept of a knowledge base which parallels the concept of a theory/module of this work and is characterized by a set of interpretations. A knowledge base is accompanied by a preference relation over a vocabulary $\sigma$ of a considered knowledge base which is a partial order over $2^\sigma$.
\cite{alv18b} illustrate how such an abstraction is useful in stating, studying, and implementing various algorithms for computing preferred models. The focus of their work is algorithmic. Here we focus on a framework that has a ``definitional'' and ``lining'' power. We view w-systems as a convenient tool to capture formalisms defined in different terms  and then link these.
Thus, w-systems are a viable alternative to the \cite{alv18b} abstraction.  

In this section, we use w-systems to capture several formalisms studied in the literature that focus on qualitative preferences.
In the studied cases, the language of w-systems focused on quantitative preference is yet convenient to express what is considered qualitative preferences.
In particular, we
follow the steps by~\cite{giu06c}, who review \minone~\citeps{cre01}  and \distancesat~\citeps{bai06} problems and define their variants
\minonesub and \distancesatsub. 
Recall  the standard Davis-Logemann-Loveland ({\sc dll}) procedure~\citeps{dav62} -- a decision procedure for establishing the fact whether a given CNF formula is satisfiable. This procedure can be easily adapted to find a model of a satisfiable formula. 
\cite{giu06c} introduce a procedure \optdll that modifies {\sc dll} by enforcing an order in which it explores interpretations. Then, \optdll is shown to be an appropriate tool for finding a model to such qualitative optimizations formalisms as \minone, \distancesat and their variants
\minonesub and \distancesatsub. 

Here, we illustrate that w-systems naturally lend themselves into capturing  \minone and \distancesat problems. We then illustrate how the ideas behind \optdll translate into ways of utilizing w-systems for characterizing \minonesub and \distancesatsub problems. With that we introduce one more bridge between qualitative and quantitative preferences, where the later are used to capture the former. 
This is an alternative to an approach exemplified by \cite{giu06c}, where the authors focus   on the question how  qualitative preferences may capture quantitative ones. In their work, quantitative preferences (appearing within MaxSAT, for example) are translated into a propositional formula with auxiliary atoms capturing numeric constraints.

\subsection{\minone Family as  W-Systems}
\begin{Definition}[\minone and \minonesub problems]
For a vocabulary $\sigma$, let $\xi$  be a subset of $\sigma$ and~$F$ be a CNF formula over $\sigma$. 
By {\minone}$^\xi_F$ ({\minonesub}$^\xi_F$) we denote the set of interpretations $I$ over $\sigma$ that are models of $F$ and $I\cap\xi$ are of minimal cardinality (subset minimal).
We call members of {\minone}$^\xi_F$ ({\minonesub}$^\xi_F$) {\em solutions} to {\em \minone ({\minonesub}) problem $F$ over $\sigma$ characterized by~$\xi$}.
\end{Definition}

\begin{Proposition}\label{prop:minone}
Let $F$ be  \minone problem over $\sigma$ characterized by $\xi$. The solutions to this problem are formed by the optimal models of w-system 
\beq
(F,\{(\neg a,1)\mid a\in\xi\})
\eeq{eq:wminone}
(where its theory and w-conditions are considered to be in sat-logic).
\end{Proposition}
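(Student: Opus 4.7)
The plan is to unfold both sides directly from their definitions and observe that the objective being maximized in the w-system is, up to an additive constant, the negative of the cardinality being minimized by \minone.

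First I would use Definition~\ref{def:modelwsys} to note that the models of the w-system $\cW = (F,\{(\neg a,1)\mid a\in\xi\})$ are exactly the models of the hard part $F$ (viewed as a sat-logic module), i.e., the propositional models of $F$. In particular, this matches the set over which \minone quantifies. Since all w-conditions have the default level $1$, Definition~\ref{def:optimalmodelwsysSimple} applies, so an optimal model $I^*$ is one maximizing $\sum_{B\in\cS}\br{I\models B}$ over models $I$ of $F$.

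Next I would evaluate the objective. For each $a\in\xi$, the sat-logic w-condition $(\neg a,1)$ satisfies $\br{I\models (\neg a,1)} = 1$ precisely when $a\notin I$, and $0$ otherwise. Summing over $\xi$ gives
\[
\sum_{a\in\xi}\br{I\models(\neg a,1)} \;=\; |\{a\in\xi : a\notin I\}| \;=\; |\xi| - |I\cap\xi|.
\]
Since $|\xi|$ is a constant independent of~$I$, maximizing this quantity over models of $F$ is equivalent to minimizing $|I\cap\xi|$ over models of $F$. By definition of \minone, the latter minimizers are exactly the solutions of \minone${}^\xi_F$. Hence the set of optimal models of~\eqref{eq:wminone} coincides with \minone${}^\xi_F$, as claimed.

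There is no real obstacle here; the only subtlety worth being explicit about is the reduction of the general optimality notion (Definition~\ref{def:optimalmodelwsysASP}) to the single-level form of Definition~\ref{def:optimalmodelwsysSimple}, which is immediate because every w-condition in $\cS$ has level $1$, so $\level{\cW}=\{1\}$ and the two definitions of optimality trivially agree. Everything else is a one-line rewriting of the objective.
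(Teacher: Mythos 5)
Your proposal is correct and follows essentially the same route as the paper's proof: both unfold Definition~\ref{def:optimalmodelwsysSimple}, observe that the models of the w-system are the models of $F$, and identify the objective $\sum_{a\in\xi}\br{I\models\neg a}$ with the count of atoms of $\xi$ absent from $I$, so that maximizing it is the same as minimizing $|I\cap\xi|$. The only cosmetic difference is that you phrase the equivalence via the additive constant $|\xi|$, whereas the paper rewrites $\argmin_I|I\cap\xi|$ directly as $\argmax_I$ of the complementary count; these are the same observation.
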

Construction of the w-system featured in Proposition~\ref{prop:minone} is  intuitive: it  prefers models that contain fewest number of atoms, capturing the requirement on minimal cardinality.  
The statement of Proposition~\ref{prop:minone} is in spirit of earlier propositions, so that w-systems can be used to provide an alternative definition for \minone problems. Indeed, w-system~\eqref{eq:wminone} can be identified with respective \minone problem.
The solutions to this problem are formed by the optimal models of w-system~\eqref{eq:wminone}.

The \minone formalism is an elaboration on a problem of satisfiability, where
 models of a propositional formula are distinguished with respect to a qualitative preference characterized by some vocabulary $\xi$. In a trivial way we may utilize the language of w-systems to define a similar framework for logic programs that we call \minoneasp. For a vocabulary $\sigma$, let $\xi$  be a subset of~$\sigma$ and $\Pi$ be a logic program over $\sigma$; 
we call  program~$\Pi$ a {\em \minoneasp problem characterized by~$\xi$}. The {\em solutions} to this problem are the optimal models of w-system
$$
(\Pi,\{(\neg a,1)\mid a\in\xi\}),
$$
where a theory of this w-system is in lp-logic  and w-conditions of this w-system are in sat-logic. Section~\ref{sec:opmaxsat} cites work that relates logic programs under answer set semantics and SAT. With these relations in mind it is easy to see how a tool for computing solutions to a \minone problem can be used for computing solutions to a \minoneasp problem (for example, a variant of \optdll introduced in~\cite[Theorem~5]{giu06c}).
The following result is immediate from the definitions of \minone and \minoneasp problems.
\begin{Proposition}\label{prop:minone-asp}
For a vocabulary $\sigma$, let $\xi$  be a subset of~$\sigma$; $\Pi$ be a logic program over $\sigma$; and~$F_\Pi$ be a CNF formula over $\sigma$ whose models coincide with the answer sets of $\Pi$.
 The solutions to \minone problem $F_\Pi$ characterized by $\xi$ coincide with 
the solutions to \minoneasp problem~$\Pi$ characterized by $\xi$.
\end{Proposition}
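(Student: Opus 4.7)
The plan is to unpack both sides through Definitions~\ref{def:optimalmodelwsysSimple} and~\ref{def:modelwsys} and observe that the two w-systems differ only in the logic attached to their hard module, while agreeing on (i) the set of models of the hard part and (ii) the soft part word-for-word. Since the notion of optimal model depends only on these two ingredients, the two solution sets must coincide.

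Concretely, I would first expand the right-hand side. By the definition of \minoneasp given just before the proposition, the solutions to \minoneasp problem $\Pi$ characterized by $\xi$ are the optimal models of the w-system $\cW_\Pi=(\Pi,\{(\neg a,1)\mid a\in\xi\})$, where $\Pi$ is read as an lp-logic module. By Definition~\ref{def:modelwsys}, a model of $\cW_\Pi$ is exactly a model of the hard AMS $\{\Pi\}$, which by the semantics of lp-logic is just an answer set of $\Pi$. Analogously, the solutions to \minone problem $F_\Pi$ characterized by $\xi$ are, by Proposition~\ref{prop:minone}, the optimal models of the w-system $\cW_F=(F_\Pi,\{(\neg a,1)\mid a\in\xi\})$, whose models are precisely the propositional models of $F_\Pi$.

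Next I would use the hypothesis that the models of $F_\Pi$ coincide with the answer sets of $\Pi$ to conclude that $\cW_\Pi$ and $\cW_F$ have exactly the same set of models over~$\sigma$. Since the soft parts of $\cW_\Pi$ and $\cW_F$ are literally identical, the objective
\[
\sum_{a\in\xi}\br{I\models(\neg a,1)}
\]
from Definition~\ref{def:optimalmodelwsysSimple} is the same function of $I$ in both settings, and the $\argmax$ is taken over the same set of interpretations. Therefore the sets of optimal models coincide, which gives the desired equality of solution sets.

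The only step that requires any care is making sure that the $\argmax$ in Definition~\ref{def:optimalmodelwsysSimple} genuinely depends only on the model set and the soft w-conditions (and not on any residual syntactic property of the hard module), but this is immediate from the definition itself. There is no real obstacle; the proposition is essentially a bookkeeping consequence of the fact that the definition of optimal model factors through the set of models of the hard part.
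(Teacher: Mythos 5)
Your proposal is correct and matches the paper's reasoning: the paper simply declares the result ``immediate from the definitions of \minone and \minoneasp problems,'' and your argument spells out exactly why --- the two w-systems share identical soft parts and, by hypothesis, hard modules with the same model sets, so Definition~\ref{def:optimalmodelwsysSimple} yields the same optimal models (this is also an instance of the substitution principle in Property~\ref{propo:equivalent}).
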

In this section we continue to review the formalisms in light of their origins as SAT problems enhanced by qualitative preferences. Yet, the narrative on the \minoneasp problem sheds light on how the ideas of these formalisms can easily be utilized within the world of answer set programming. 


We now turn our attention to the \minonesub problem.
\begin{Proposition}\label{prop:minonesub}
Let $F$ be  \minonesub problem over $\sigma$ characterized by $\xi$, 
let $a_1,a_2,\dots,a_n$ be a permutation of $\xi$.
Any optimal model of w-system 
\beq
(F,\{(\neg a_i,1@i)\mid a_i\hbox{ is an element of } a_1,a_2,\dots,a_n\}),
\eeq{eq:wminonesub}
is a  solution to this \minonesub problem.
\end{Proposition}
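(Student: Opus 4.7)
\bigskip
\noindent\textbf{Proof plan for Proposition~\ref{prop:minonesub}.}
The plan is to argue the contrapositive: if $I^*$ is an optimal model of the w-system in~\eqref{eq:wminonesub} but $I^*\cap\xi$ is not subset-minimal among the traces on $\xi$ of models of $F$, then I exhibit a model of $F$ that max-dominates $I^*$, contradicting optimality. Throughout I use Definition~\ref{def:optimalmodelwsysASP} for optimality, which is available via Proposition~\ref{prop:eqdefs}.

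First, I would note that by Definition~\ref{def:modelwsys} every optimal model $I^*$ of the w-system is in particular a model of $F$. Also, since for each level $i\in\{1,\dots,n\}$ the w-system has exactly one soft condition, namely $B_i=(\neg a_i,1@i)$, the level-$i$ sum $\sum_{B\in\cW_i}\br{I\models B}$ equals $\br{I\models \neg a_i}$, which is $1$ if $a_i\notin I$ and $0$ otherwise. Thus, along each level, the lexicographic preference implemented by Definition~\ref{def:optimalmodelwsysASP} simply asks that, going from the highest level $n$ down to the lowest level $1$, we prefer models that \emph{omit} the corresponding $a_i$.

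Next, assume for contradiction that $I^*\cap\xi$ is not subset-minimal; then there is a model $I'$ of $F$ with $I'\cap\xi\subsetneq I^*\cap\xi$. Let $j$ be the \emph{largest} index in $\{1,\dots,n\}$ such that $a_j\in I^*\cap\xi$ and $a_j\notin I'\cap\xi$ (such $j$ exists since the containment is strict). I would then check two facts:
\begin{enumerate}
\item For every level $k>j$, the subset relation and the maximality of $j$ force $a_k\in I^*$ iff $a_k\in I'$; hence
$\sum_{B\in\cW_k}\br{I^*\models B}=\sum_{B\in\cW_k}\br{I'\models B}.$
\item At level $j$, we have $a_j\in I^*$ but $a_j\notin I'$, so
$\sum_{B\in\cW_j}\br{I^*\models B}=0<1=\sum_{B\in\cW_j}\br{I'\models B}.$
\end{enumerate}
These two observations are exactly the two conditions of Definition~\ref{def:optimalmodelwsysASP} that witness that $I'$ max-dominates $I^*$, contradicting the optimality of $I^*$. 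Hence $I^*\cap\xi$ is subset-minimal, i.e.\ $I^*$ is a solution to the \minonesub problem.

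The main obstacle is purely bookkeeping: choosing the discriminating level $j$ correctly (largest index in the symmetric difference, in fact largest index where $I^*$ contains but $I'$ does not) and then verifying carefully that, because the containment is a \emph{subset} relation, no level above $j$ can distinguish $I^*$ from $I'$. Once the indexing is set up properly, both conditions of max-domination fall out immediately from the one-condition-per-level structure of the soft part of~\eqref{eq:wminonesub}.
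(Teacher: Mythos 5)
Your proof is correct and follows essentially the same route as the paper's: both argue by contradiction from a strict subset $I'\cap\xi\subsetneq I^*\cap\xi$, pick the rightmost (largest-index) atom in $(I^*\cap\xi)\setminus I'$, and verify the two max-domination conditions of Definition~\ref{def:optimalmodelwsysASP} level by level using the one-condition-per-level structure. No substantive differences.
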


Construction of the w-system featured in Proposition~\ref{prop:minonesub} mimics/is inspired by the modification to the {\sc dll} algorithm by \optdll in order to obtain a behavior of the algorithm that is suitable for computing a solution to \minonesub problem. Indeed, procedure \optdll~\cite{giu06c} forces an order on assigning atoms in $\xi$ first exploring an option that a newly considered atom is assigned {\em false} (i.e., interpretations that do not contain this atom in them are considered first). As a result, a model found by \optdll is guaranteed to be subset minimal. 
Observe how levels in the definition of w-system~\eqref{eq:wminonesub} are a key instrument to mimic the behavior of \optdll and they do that in an {\em intuitive} fashion. Future Section~\ref{sec:levelsel} illustrates how levels can be flattened to a single one by means of adjusting the weights of the system. Yet, in that case the natural use of levels as illustrated here would be lost to crafting special-purpose weights.  

The claims of Propositions~\ref{prop:minone} and~\ref{prop:minonesub} differ substantially: Proposition~\ref{prop:minonesub} looses ``definitional'' power of Proposition~\ref{prop:minone}. Rather, it illustrates how we can construct a w-system whose optimal models capture {\em some} solutions of \minonesub problem. On the other hand, the next proposition allows us to see that all solutions of \minonesub problem can be characterized by w-systems in style of~\eqref{eq:wminonesub}. 

\begin{Proposition}\label{prop:minonesub2}
Let $F$ be  \minonesub problem over $\sigma$ characterized by $\xi$;
 $I$ be one of its solution; and
 $a_1,a_2,\dots,a_n$ be a permutation 
of $\xi$ that satisfies the following condition:
for any pair of atoms $a_i$ and $a_j$ so that $i<j$ if $a_j$ is in $I$ then $a_i$ is also in $I$ (in other words, this permutation orders atoms of $\xi$ that occur in $I$ prior to the atoms of $\xi$ that do not occur in $I$).
Then,
$I$ is an optimal model of w-system~\eqref{eq:wminonesub}.
\end{Proposition}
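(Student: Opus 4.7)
The plan is to invoke Definition~\ref{def:optimalmodelwsysASP} directly: $I$ is optimal iff no model $I'$ of the w-system max-dominates $I$. Set $k=|I\cap\xi|$. The permutation hypothesis (for $i<j$, $a_j\in I$ implies $a_i\in I$) gives $\{a_1,\dots,a_k\}=I\cap\xi$ and $\{a_{k+1},\dots,a_n\}\subseteq \xi\setminus I$. Since each level $l$ of w-system~\eqref{eq:wminonesub} contains the single unit-weight w-condition $(\neg a_l,1@l)$, the level sum $\sum_{B\in\cW_l}\br{I\models B}$ equals $\br{I\models \neg a_l}$, which is $1$ when $a_l\notin I$ and $0$ when $a_l\in I$. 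Thus $I$ attains $1$ at every level $l>k$ and $0$ at every level $l\leq k$.

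I would argue by contradiction, supposing some model $I'$ of the w-system (hence a model of $F$) max-dominates $I$ at a witness level $l^*$. If $l^*>k$, the level-$l^*$ sum for $I$ is already $1$, the maximum possible, so $I'$ cannot attain a strictly larger value and this case is ruled out. Hence $l^*\leq k$. The equality of level sums at every $l'>l^*$ then forces $a_{l'}\notin I'$ for $l'>k$ (matching $I$'s score of $1$) and $a_{l'}\in I'$ for $k\geq l'>l^*$ (matching $I$'s score of $0$). The strict inequality at level $l^*$ forces $I'$ to score $1$ there, i.e., $a_{l^*}\notin I'$. Combining these,
\[
I'\cap\xi \;\subseteq\; \{a_1,\dots,a_{l^*-1}\}\cup\{a_{l^*+1},\dots,a_k\}\;=\;(I\cap\xi)\setminus\{a_{l^*}\}\;\subsetneq\; I\cap\xi,
\]
contradicting the subset-minimality of $I\cap\xi$ guaranteed by $I$ being a \minonesub solution.

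The main obstacle is the case analysis on $l^*$ together with the careful bookkeeping required by the max-domination equalities at every level above $l^*$: one must separately handle indices $l'>k$ (where $I$ scores $1$, forcing $a_{l'}\notin I'$) and indices $l^*<l'\leq k$ (where $I$ scores $0$, forcing $a_{l'}\in I'$), and this separation is exactly what pins the remaining atoms of $I'\cap\xi$ to the prescribed positions within $\{a_1,\dots,a_k\}$. Once this bookkeeping is in place, the ordering hypothesis identifies $I\cap\xi$ with the initial segment $\{a_1,\dots,a_k\}$ and the strict subset contradiction with \minonesub-minimality is immediate.
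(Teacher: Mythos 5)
Your proposal is correct and follows essentially the same route as the paper's proof: argue by contradiction from a max-dominating model $I'$, use the strict inequality at the witness level to get $a_{l^*}\in I$ and $a_{l^*}\notin I'$, and combine the level-wise equalities above $l^*$ with the ordering hypothesis to conclude $I'\cap\xi\subsetneq I\cap\xi$, contradicting subset-minimality. Your explicit case split on $l^*>k$ versus $l^*\leq k$ is handled implicitly in the paper (the strict inequality already forces $a_{l^*}\in I$, hence $l^*\leq k$), but this is only a difference in presentation, not in substance.
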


\subsection{\distancesat Family as  W-Systems}
\begin{Definition}[\distancesat and \distancesatsub problems]
For a vocabulary $\sigma$, let $\hat{I}$  be an interpretation of $\sigma$ and $F$ be a CNF formula over $\sigma$. 
By {\distancesat}$^{\hat{I}}_F$ 
({\distancesatsub}$^{\hat{I}}_F$) 
we denote the set of interpretations $I$ over $\sigma$ that are models of $F$ and 
set $(\hat{I}\setminus I)\cup(I\setminus \hat{I})$ are of minimal cardinality (subset minimal).
We call members of {\distancesat}$^{\hat{I}}_F$ ({\distancesatsub}$^{\hat{I}}_F$) {\em solutions} to {\em \distancesat (\distancesatsub) problem $F$ 
over $\sigma$ characterized by $\hat{I}$}.
\end{Definition}

We now claim similar formal results as in the case of the \minone family.
\begin{Proposition}\label{prop:distancesat}
Let $F$ be  \distancesat problem over $\sigma$ characterized by $\hat{I}$. The solutions to this problem are formed by the optimal models of w-system 
\beq
(F,\{(a,1)\mid a\in \hat{I}\}\cup \{(\neg a,1)\mid a\in \sigma\setminus \hat{I}\}).
\eeq{eq:done}
\end{Proposition}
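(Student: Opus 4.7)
The plan is to unfold the weight sum appearing in the optimal-model condition for the proposed w-system and match it to the symmetric-difference count that defines a solution to \distancesat.

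First, I would spell out what $\sum_{B\in S}\br{I\models B}$ evaluates to for a model $I$ of $F$, where $S=\{(a,1)\mid a\in\hat{I}\}\cup\{(\neg a,1)\mid a\in\sigma\setminus\hat{I}\}$. By the definition of $\br{\,\cdot\,}$ and the fact that each w-condition has weight $1$, the contribution splits into two indicator counts: the number of atoms of $\hat{I}$ that $I$ makes true, namely $|\hat{I}\cap I|$, plus the number of atoms outside $\hat{I}$ that $I$ makes false, namely $|(\sigma\setminus\hat{I})\setminus I|$. In other words, the sum equals the number of atoms on which $I$ and $\hat{I}$ agree.

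Second, I would invoke the standard partition identity for a finite vocabulary $\sigma$: the set $\sigma$ is partitioned into the atoms agreed true by $I$ and $\hat{I}$, the atoms agreed false, and the symmetric difference $(\hat{I}\setminus I)\cup(I\setminus\hat{I})$. Hence
\[
|\sigma| \;=\; |\hat{I}\cap I| \;+\; |(\sigma\setminus\hat{I})\setminus I| \;+\; |(\hat{I}\setminus I)\cup(I\setminus\hat{I})|,
\]
so maximizing the weight sum across models of $F$ is equivalent to minimizing $|(\hat{I}\setminus I)\cup(I\setminus\hat{I})|$ across models of $F$.

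Third, I would close the loop with Definition~\ref{def:modelwsys} and Definition~\ref{def:optimalmodelwsysSimple}: the models of the w-system $(F,S)$ are precisely the models of $F$, and the optimal ones are those maximizing $\sum_{B\in S}\br{I\models B}$. Combined with the equivalence just established, this set coincides with the set of models of $F$ whose symmetric difference with $\hat{I}$ has minimum cardinality, i.e.\ with ${\distancesat}^{\hat{I}}_F$. The main subtlety is the implicit finiteness of $\sigma$ needed for the partition identity; this matches the convention used in the earlier parallel results (e.g.\ Propositions~\ref{prop:minone} and~\ref{prop:wmaxsat}), and if one wishes to handle infinite $\sigma$ the same argument goes through by comparing two candidate models pointwise and observing that the differences $\sum_{B}\br{I_1\models B}-\sum_{B}\br{I_2\models B}$ and $|\mathrm{symdiff}(I_2,\hat{I})|-|\mathrm{symdiff}(I_1,\hat{I})|$ coincide whenever both are defined.
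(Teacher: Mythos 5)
Your proof is correct and follows essentially the same route as the paper's: both reduce the claim to the observation that the weight sum counts the atoms on which $I$ and $\hat{I}$ agree, which is complementary (over $\sigma$) to the cardinality of the symmetric difference, so maximizing one minimizes the other. The only cosmetic difference is that the paper rewrites the $\argmin$ into an $\argmax$ by set complementation directly, whereas you pass through the explicit partition identity $|\sigma|=|\hat{I}\cap I|+|(\sigma\setminus\hat{I})\setminus I|+|(\hat{I}\setminus I)\cup(I\setminus\hat{I})|$ and, usefully, make the finiteness assumption explicit.
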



\begin{Proposition}\label{prop:distancesatsub}
Let $F$ be  \distancesatsub problem over $\sigma$ characterized by $\hat{I}$, 
let $a_1,a_2,\dots,a_n$ be a permutation of $\sigma$.
Any optimal model of w-system 
\beq
\ba{rll}
(F,&\{(a_i,1@i)&\mid a_i\hbox{ is an element of } a_1,a_2,\dots,a_n \hbox{ and } a_i\in \hat{I}\}\cup\\
   &\{(\neg a_i,1@i)&\mid a_i\hbox{ is an element of } a_1,a_2,\dots,a_n \hbox{ and } a_i\in \sigma\setminus \hat{I}\}),
\ea
\eeq{eq:wdsub}
is a  solution to this \distancesatsub problem.
\end{Proposition}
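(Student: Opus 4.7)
The plan is to prove the contrapositive / work by contradiction, relying on the lexicographic character of optimality on w-systems equipped with levels (Definition~\ref{def:optimalmodelwsysASP}, via Proposition~\ref{prop:eqdefs}). The key observation, which I would state first, is that for every atom $a_i$ the w-condition at level $i$ in~\eqref{eq:wdsub} contributes $1$ to the objective $\sum_{B\in\cW_i}\br{I\models B}$ precisely when $I$ agrees with $\hat{I}$ on $a_i$, that is, when $a_i\notin\Delta(I):=(\hat{I}\setminus I)\cup(I\setminus\hat{I})$, and $0$ otherwise. This is because if $a_i\in\hat{I}$ the w-condition is $(a_i,1@i)$ while if $a_i\in\sigma\setminus\hat{I}$ it is $(\neg a_i,1@i)$; in either case ``satisfied at level $i$'' is equivalent to ``$a_i\notin\Delta(I)$''. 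Note also that a model of the w-system is exactly a model of $F$.

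Let $I^{*}$ be an optimal model of the w-system~\eqref{eq:wdsub}. Assume for contradiction that $I^{*}$ is \emph{not} a solution of the \distancesatsub problem: then there is a model $I'$ of $F$ with $\Delta(I')\subsetneq\Delta(I^{*})$. I would then pick the largest index $j$ with $a_j\in\Delta(I^{*})\setminus\Delta(I')$; such a $j$ exists since the inclusion is strict. For any level $l>j$, by the maximality of $j$ we have $a_l\in\Delta(I^{*})\Rightarrow a_l\in\Delta(I')$, and the inclusion $\Delta(I')\subseteq\Delta(I^{*})$ gives the converse, hence $a_l\in\Delta(I^{*})\iff a_l\in\Delta(I')$. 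By the key observation, the sums
$$\sum_{B\in\cW_l}\br{I^{*}\models B}\quad\text{and}\quad\sum_{B\in\cW_l}\br{I'\models B}$$
therefore coincide for every $l>j$ (each $\cW_l$ contains a single w-condition, associated with $a_l$). At level $j$, however, $a_j\in\Delta(I^{*})\setminus\Delta(I')$ means the single w-condition of $\cW_j$ contributes $0$ to the sum for $I^{*}$ and $1$ to the sum for $I'$.

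These two facts together say exactly that $I'$ max-dominates $I^{*}$ in the sense of Definition~\ref{def:optimalmodelwsysASP} (with the max-counterpart of Condition~2), contradicting the optimality of $I^{*}$. Hence $I^{*}$ must be a solution to the \distancesatsub problem.

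I expect the main obstacle to be essentially expository rather than mathematical: one has to carefully justify the equivalence ``w-condition at level $i$ satisfied $\Leftrightarrow a_i\notin\Delta(I)$'' by case analysis on whether $a_i\in\hat{I}$ or $a_i\in\sigma\setminus\hat{I}$, and then keep track of both inclusions $\Delta(I')\subseteq\Delta(I^{*})$ and the maximality of $j$ to ensure the two directions of the level-by-level equality above $j$. Once these bookkeeping points are in place, the lexicographic structure of optimality yields the contradiction immediately; no further induction or case-split on the permutation is required, since the argument only uses the fact that $\cW_j$ and each $\cW_l$ for $l>j$ contains exactly one w-condition, attached to a specific atom of $\sigma$.
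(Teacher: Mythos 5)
Your proof is correct and follows essentially the same route as the paper's: argue by contradiction, pick the rightmost atom in $\Delta(I^{*})\setminus\Delta(I')$, and verify the max-domination conditions level by level. Your ``key observation'' that the level-$i$ condition is satisfied iff $a_i\notin\Delta(I)$ simply packages more cleanly the explicit six-case analysis (with two cases excluded by the subset relation) that the paper carries out.
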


\section{Formal Properties of w-systems}~\label{sec:formalProperties}
We now state some interesting formal properties  about w-systems. 
 Word {\em Property} denotes the results that follow rather immediately  from the  model/optimal model definitions. 
 The concluding section of this paper provides proofs to the propositions that appear in this section.
\begin{Property}\label{prop:one}
Any two w-systems with the same hard theory have the same models.
\end{Property}
Due to this property when stating the results for w-systems that share the same hard theory, we only focus on optimal and min-optimal models. 
\begin{Property}\label{prop:secondempty}
Any model of w-system of the form $(\cH,\emptyset)$ is optimal/min-optimal.
\end{Property}

\begin{Property}\label{prop:remove0}
Optimal/min-optimal models of the following w-systems coincide
\begin{itemize}
\item w-system $\cW$ and 
\item w-system resulting from $\cW$ by dropping all of its w-conditions whose  weight is~$0$.
\end{itemize}
\end{Property}
Thus, the w-conditions, whose weight is $0$ are immaterial and can be removed.
For instance, we can safely simplify sample pw-MaxSAT problem~\eqref{eq:partmsat} and MinMaxPL problem~\eqref{eq:maxpl} by dropping their w-conditions $(\neg a\vee b,0)$.

\begin{Property}\label{prop:samefactor}
Given a positive integer $a$, optimal/min-optimal models of the following w-systems coincide
\begin{itemize}
\item w-system $(\cH,\{(T_1,w_1@l_1),\dots,(T_m,w_m@l_m)\})$ and
\item w-system $(\cH,\{(T_1,a\cdot w_1@l_1),\dots,(T_m,a\cdot w_m@l_m)\})$.
\end{itemize}
\end{Property}

We call a w-system $\cW$ {\em level-normal}, when
 we can construct the sequence of numbers $1,2,\dots,|\level{\cW}|$ from the elements in $\level{\cW}$.
  It is easy to see that we can always adjust levels of w-conditions in $\cW$ to respect such a sequence preserving optimal models  of original w-system~$\cW$. 
 \begin{Proposition}\label{prop:levelnormal}
 Optimal/min-optimal models of the following w-systems coincide
\begin{itemize}
\item w-system $\cW$ and 
\item the level-normal w-system constructed from $\cW$ by replacing each level $l_i$ occurring in its w-conditions with its ascending sequence order number $i$, where we arrange elements in $\level{\cW}$ in a   sequence  in ascending order $l_1,l_2,\dots l_{|\level{\cW}|}$. 
\end{itemize}
\end{Proposition}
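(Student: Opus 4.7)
The plan is to show that the construction is essentially a relabeling: it changes the numeric names of levels but preserves (i) which w-conditions share a level and (ii) the ordering among levels. Since Definition~\ref{def:optimalmodelwsysASP} depends on the w-system only through these two features, optimality is preserved.

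First I would fix the notation. Let $\cW = (\cH,\cS)$ with $\level{\cW} = \{l_1 < l_2 < \dots < l_n\}$ where $n = |\level{\cW}|$, and let $\cW' = (\cH,\cS')$ be the level-normal w-system obtained by the stated replacement, so $\level{\cW'} = \{1,2,\dots,n\}$. Because the hard part $\cH$ is unchanged, Property~\ref{prop:one} immediately gives that $\cW$ and $\cW'$ have exactly the same models, which takes care of the ``model'' portion of the claim and reduces matters to comparing (min-)optimal models.

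Next I would establish the key identity: for each $i \in \{1,\dots,n\}$, $\cW_{l_i} = \cW'_i$ as sets of w-conditions, since the replacement acts only on the level tag while leaving weights and underlying theories untouched. Combined with the fact that the mapping $l_i \mapsto i$ is an order isomorphism of $\level{\cW}$ onto $\level{\cW'}$, this gives, for every interpretation $I$ and every $i$, the equality
\[
\sum_{B \in \cW_{l_i}} \br{I \models B} \;=\; \sum_{B \in \cW'_i} \br{I \models B}.
\]
Moreover, $l_i > l_j$ in $\level{\cW}$ iff $i > j$ in $\level{\cW'}$.

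With the identity in hand, the proof is routine: I would unfold Definition~\ref{def:optimalmodelwsysASP} and argue that $I'$ max-dominates $I$ in $\cW$ via some witness level $l_i$ if and only if $I'$ max-dominates $I$ in $\cW'$ via the witness level $i$, because both the equality conditions at higher levels and the strict inequality at the witness level translate verbatim between the two systems. Hence the sets of maximally undominated (i.e., optimal) models coincide. The same argument works with min-domination in place of max-domination, giving the claim for min-optimal models.

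The only mildly delicate point is to verify that when $\level{\cW}$ does not already form an initial segment of the positive integers, replacing $l_i$ by $i$ does not accidentally collide with or reorder any existing level; this is immediate from enumerating $\level{\cW}$ in strictly ascending order before performing the substitution, and I expect no other real obstacle in the argument.
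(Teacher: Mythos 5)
Your proof is correct and takes essentially the same approach as the paper's: the paper's (one-paragraph) argument likewise rests on the observation that the optimality definitions depend on levels only through their ordering with respect to the greater-than relation (it points to the $\prev{(\cdot)}$ operation of Definition~\ref{def:optimalmodelwsys}) and that the relabeling $l_i \mapsto i$ preserves that order. You merely instantiate the same observation by unfolding the domination-based Definition~\ref{def:optimalmodelwsysASP} rather than the recursive one, which is an equivalent choice by Proposition~\ref{prop:eqdefs}.
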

Sample MaxPL problem~\eqref{eq:maxpl} is not level normal. Yet, this proposition assures us that it is safe to consider the level-normal w-system 
\beq
(F_1,\{(a,1),(b,1@2),(a\vee\neg b,2), (\neg a\vee b,0)\})
\eeq{eq:levelnormal}
in its place.
 In the sequel we often  assume level-normal w-systems without loss of generality.
  
\begin{Proposition}\label{thm:alloptimal}
For a w-system $\cW=(\cH,\cS)$, if every  level $l\in\level{\cW}$ is such that 
for any distinct models $I$ and~$I'$ of $\cW$
 the  equality
 \beq
 \sum_{B\in\cW_{l}}{ \br{I\models B}}= \sum_{B\in\cW_{l}}{ \br{I'\models B}}
 \eeq{eq:eqcond} 
 holds
then optimal/min-optimal models of w-systems
$\cW$ and  $(\cH,\emptyset)$ coincide. Or, in other words, any model of $\cW$ is also optimal and min-optimal model.
\end{Proposition}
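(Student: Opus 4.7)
The plan is to reduce the claim to the non-existence of (max- or min-)dominating models, using Definition~\ref{def:optimalmodelwsysASP} directly together with Properties~\ref{prop:one} and~\ref{prop:secondempty}.

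First I would observe that by Property~\ref{prop:one} the w-systems $\cW$ and $(\cH,\emptyset)$ share exactly the same set of models, since ``being a model'' depends only on the hard part $\cH$ (Definition~\ref{def:modelwsys}). By Property~\ref{prop:secondempty}, every model of $(\cH,\emptyset)$ is both optimal and min-optimal. Thus it suffices to show that every model of $\cW$ is optimal and min-optimal in $\cW$ itself; together with the fact that the two systems have the same models, this gives the coincidence of their optimal (resp.\ min-optimal) model sets.

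Next I would verify optimality of an arbitrary model $I$ of $\cW$ by contradiction, unpacking Definition~\ref{def:optimalmodelwsysASP}. Suppose there is a model $I'$ of $\cW$ that max-dominates $I$. By the definition, there must exist a level $l\in\level{\cW}$ at which the strict inequality
\[
\sum_{B\in\cW_l}\br{I'\models B} \;>\; \sum_{B\in\cW_l}\br{I\models B}
\]
holds. However, the hypothesis~\eqref{eq:eqcond} asserts equality of these two sums at every level for every pair of models of $\cW$, which directly contradicts the strict inequality. Hence no such $I'$ exists, and $I$ is optimal. The min-optimal case is entirely symmetric: a min-dominating $I'$ would require a strict strict-less-than inequality at some level $l$, again ruled out by~\eqref{eq:eqcond}.

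I do not foresee a genuine obstacle here; the proof is essentially bookkeeping against Definition~\ref{def:optimalmodelwsysASP}. The only small care point is that the hypothesis is stated for ``distinct models $I$ and $I'$,'' whereas the domination conditions quantify over arbitrary pairs; but the case $I=I'$ of equality in~\eqref{eq:eqcond} is trivial, so nothing is lost. Everything follows from routine application of the definitions together with the two cited properties.
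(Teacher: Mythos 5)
Your proof is correct, but it takes a genuinely different route from the paper's. The paper obtains Proposition~\ref{thm:alloptimal} as an immediate corollary of Proposition~\ref{thm:samewcond}: taking $S=\cW_l$ for each level $l$ in turn (your hypothesis, which ranges over \emph{all} models, is stronger than the hypothesis over $\prev{l}$-optimal models required there, as the paper itself remarks), one drops all w-conditions level by level and arrives at $(\cH,\emptyset)$, whose models are all optimal and min-optimal by Property~\ref{prop:secondempty}. You instead argue directly from the dominance characterization in Definition~\ref{def:optimalmodelwsysASP}: any max- or min-dominating model $I'$ would have to produce a strict inequality at some level $l$, which~\eqref{eq:eqcond} rules out for distinct models and which is impossible when $I'=I$; your explicit handling of the ``distinct models'' caveat is the right care point. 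Your argument is more elementary and self-contained, needing only Properties~\ref{prop:one} and~\ref{prop:secondempty} plus the definition of dominance, and in particular it bypasses the $l$-optimal-model machinery and the inductive argument underlying the proof of Proposition~\ref{thm:samewcond}; the paper's derivation, by contrast, is essentially free once that proposition has been established and also exhibits Proposition~\ref{thm:alloptimal} as the special case of a more general elimination result.
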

By this proposition, for instance, it follows that optimal models of pw-MaxSAT problem $(F_1,\{(a,1),(b,1)\})$ coincide with its models $\{a\}$ and $\{b\}$ or, in other words,  the problem can be simplified to $(F_1,\emptyset)$.

Let $\cW=(\cH,\cS)$ be  a w-system. For a set $S$ of w-conditions, 
by $\less{\cW}{S}$ we denote the w-system $(\cH,\cS\setminus S)$.

\begin{Proposition}\label{thm:samewcond}
For a w-system $\cW=(\cH,\cS)$, if there is a set $S\subseteq \cS$ of w-conditions all sharing the same level~$l$ 
such that 
for any distinct $\prev{l}$-optimal/min-optimal models~$I$ and $I'$ of $\cW$
(or any distinct  models $I$ and $I'$ of $\cW$ in case $\prev{l}$ is undefined)
 the equality~\eqref{eq:eqcond}, where $\cW_{l}$ is replaced by $S$, holds 
then $\cW$ has the same  optimal/min-optimal models as  $\less{\cW}{S}$. 
\end{Proposition}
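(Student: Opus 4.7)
The plan is to work through the recursive characterization of optimal models from Definition~\ref{def:optimalmodelwsys} and to prove, by downward induction on levels, that $\cW$ and $\less{\cW}{S}$ share the same $l'$-optimal models at every level $l'$. Optimality then coincides by Definition~\ref{def:optimalmodelwsys}. I focus on the optimal case; the min-optimal case is obtained by systematically replacing $\argmax$ with $\argmin$, using the fact that adding a constant to a function alters neither its $\argmax$ nor its $\argmin$. Note first that $\cW$ and $\less{\cW}{S}$ share the same hard theory $\cH$, so by Property~\ref{prop:one} they have exactly the same models.

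Enumerate $\level{\cW}$ in descending order as $l_1>l_2>\dots>l_k$, and let $l=l_j$ be the level specified by the hypothesis, so that $\prev{l_i}=l_{i-1}$ for $i\geq 2$. For every level $l_i$ with $i<j$ the w-condition sets satisfy $\cW_{l_i}=\less{\cW}{S}_{l_i}$, and by the base step (at $l_1$ both definitions range $I$ over all models of the system) and by the inductive hypothesis for $l_{i-1}$, the ranges of $I$ in the $l_i$-optimality equations coincide as well. Hence the $l_i$-optimal models of $\cW$ and of $\less{\cW}{S}$ coincide for every $i<j$.

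At level $l_j=l$, the $l_j$-optimal models of $\cW$ are those $I$ maximizing $\sum_{B\in\cW_{l_j}}\br{I\models B}$ over $l_{j-1}$-optimal models (a set already shown to coincide in the two systems, or over all models when $j=1$). Writing this sum as $\sum_{B\in\cW_{l_j}\setminus S}\br{I\models B}+\sum_{B\in S}\br{I\models B}$, the hypothesis states that the second summand is constant as $I$ ranges over the $\prev{l_j}$-optimal models. Consequently, maximizing the full sum is equivalent to maximizing the first summand, which is precisely the defining equation for $l_j$-optimality in $\less{\cW}{S}$ in the subcase $S\subsetneq\cW_{l_j}$. In the subcase $S=\cW_{l_j}$, the first summand is empty and the maximization is vacuous, so every $l_{j-1}$-optimal model of $\cW$ is $l_j$-optimal in $\cW$; moreover level $l_j$ is absent from $\less{\cW}{S}$, so the $\prev{\,\cdot\,}$ relation in $\less{\cW}{S}$ at $l_{j+1}$ returns $l_{j-1}$, making the ranges of $I$ at level $l_{j+1}$ coincide in the two systems.

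For each level $l_i$ with $i>j$ the w-condition sets again satisfy $\cW_{l_i}=\less{\cW}{S}_{l_i}$, and by the previous paragraph together with the inductive hypothesis the ranges of $I$ coincide; so $l_i$-optimal models coincide too. Closing the induction, a model of $\cW$ is optimal iff it is $l'$-optimal at every level iff it is optimal in $\less{\cW}{S}$, as desired. The main obstacle I expect is the bookkeeping at the transition level $l_j$ in the subcase $S=\cW_{l_j}$: removing $S$ erases a level from $\less{\cW}{S}$ and shifts the $\prev{\,\cdot\,}$ function underneath it, so one must argue that this shift is harmless, which reduces to the observation above that the hypothesis forces every $l_{j-1}$-optimal model of $\cW$ to be $l_j$-optimal, collapsing one step of the recursion.
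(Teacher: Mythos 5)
Your proof is correct and follows essentially the same route as the paper's: a level-by-level induction in which the levels above $l$ are handled by the identity $\cW_{l'}=\less{\cW}{S}_{l'}$, and the key step at level $l$ splits the sum into the part over $\cW_l\setminus S$ plus the contribution of $S$, which the hypothesis makes constant over the relevant range and hence irrelevant to the $\argmax$/$\argmin$. You additionally spell out the edge case $S=\cW_l$, where the level disappears from $\less{\cW}{S}$ and the $\prev{\,\cdot\,}$ bookkeeping shifts — a detail the paper's proof passes over silently — and your handling of it is sound.
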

It is obvious that this proposition holds with  more restrictive condition when words {\em $\prev{l}$-optimal/min-optimal models} are replaced by {\em models}.
This result provides us with the semantic condition on when it is ``safe'' to drop some w-conditions from the w-system.
By this proposition, for instance, it follows that the optimal models of pw-MaxSAT problem~\eqref{eq:partmsat} coincide with the optimal  models of w-system constructed from~\eqref{eq:partmsat} by dropping its w-conditions  $(a,1)$ and $(b,1)$. 
To summarize, all listed results account to the fact that the optimal models of pw-MaxSAT problem~\eqref{eq:partmsat} and the following pw-MaxSAT problem  coincide
\beq
(F_1,\{(a\vee\neg b,2)\}).
\eeq{eq:partmsatsimp}


Let $\signo{(\cH,\{(T_1,w_1@l_1),\dots,(T_n,w_n@l_n)\})}$ map a w-system into the following  w-system
$(\cH,\{(T_1,-1\cdot w_1@l_1),\dots,(T_n,-1\cdot w_n@l_n)\}).$
The next proposition tells us that min-optimal models and optimal models are close relatives:
\begin{Proposition}\label{prop:relatives}
For a  w-system $\cW$,
the optimal models (min-optimal models) of $\cW$ coincide with the min-optimal models (optimal models) of $\signo{\cW}$. 
\end{Proposition}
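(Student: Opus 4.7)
The plan is to unpack Definition~\ref{def:optimalmodelwsysASP} and observe that the map $\cW \mapsto \signo{\cW}$ negates the per-level sums $\sum_{B} \br{I \models B}$, which in turn swaps max-domination with min-domination. First, since $\cW$ and $\signo{\cW}$ share the same hard component $\cH$, by Property~\ref{prop:one} they have exactly the same set of models; hence the candidate set over which optimality is assessed is identical on both sides.

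Next, I would establish the key arithmetic observation. For each w-condition $B = (T, w@l)$ in $\cW$, its counterpart $B^- = (T, -w@l)$ in $\signo{\cW}$ has the same underlying theory $T$, so for any interpretation $I$ we have $I \models B$ iff $I \models B^-$. By the definition of the mapping $\br{\cdot}$ in~\eqref{eq:isat}, this yields $\br{I \models B^-} = -\br{I \models B}$. Summing over an arbitrary level $l$, we obtain
\[
\sum_{B^- \in \signo{\cW}_l} \br{I \models B^-} \;=\; -\sum_{B \in \cW_l} \br{I \models B}.
\]

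Using this identity I would then show that, for any two models $I, I'$ of $\cW$, $I'$ max-dominates $I$ in $\cW$ if and only if $I'$ min-dominates $I$ in $\signo{\cW}$. Indeed, multiplying both sides of the equalities in Condition~1 of Definition~\ref{def:optimalmodelwsysASP} by $-1$ leaves those equalities intact at every level $l' > l$, while multiplying both sides of the strict inequality in Condition~2 by $-1$ flips its direction, turning the max-domination inequality at level $l$ into precisely the min-domination inequality for the system $\signo{\cW}$. The same argument run in reverse shows the converse.

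Finally, I would conclude by the definition of optimal/min-optimal model: $I^*$ is optimal in $\cW$ iff no model of $\cW$ max-dominates it iff no model of $\signo{\cW}$ min-dominates it iff $I^*$ is min-optimal in $\signo{\cW}$; the dual statement follows by applying the same argument starting from $\signo{\cW}$ (noting that $\signoo{\cW} = \cW$ up to the effect relevant to the proof). There is no real obstacle here; the only thing to be careful about is making sure the level indices are preserved by the $\signo{\cdot}$ transformation (they are, by construction), so that the per-level partition $\cW_l$ corresponds bijectively to $\signo{\cW}_l$ for every $l \in \level{\cW} = \level{\signo{\cW}}$.
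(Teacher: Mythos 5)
Your proof is correct, but it takes a different route from the paper's. The paper argues via the recursive Definition~\ref{def:optimalmodelwsys}: it shows by induction on the levels of $\cW$ (starting from the greatest level) that the $l$-optimal models of $\cW$ coincide with the $l$-min-optimal models of $\signo{\cW}$, by rewriting $\argmax_I \sum_{B\in\cW_l}\br{I\models B}$ as $\argmin_I \sum_{B\in\cW_l}-1\cdot\br{I\models B}$ and recognizing the latter sum as $\sum_{B\in\signo{\cW}_l}\br{I\models B}$. You instead work directly with the domination-based Definition~\ref{def:optimalmodelwsysASP} and show that the relation ``$I'$ max-dominates $I$ in $\cW$'' coincides with ``$I'$ min-dominates $I$ in $\signo{\cW}$,'' from which the claim is immediate with no induction. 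Both arguments hinge on the same arithmetic identity $\br{I\models B^-}=-\br{I\models B}$ and on the models of the two systems coinciding; your version is shorter and avoids the level induction, while the paper's version additionally yields the finer level-by-level correspondence between $l$-optimal and $l$-min-optimal models, which is the currency used in several of its other proofs. Your closing observations---that $\signo{}$ preserves levels, acts injectively on w-conditions (so per-level sums transform as claimed), and that applying it twice recovers $\cW$ for the dual direction---are exactly the right points to make explicit.
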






\subsection{Eliminating Negative (or Positive) Weights}

We call logics $\cL$ and $\cL'$ {\em compatible} when their vocabularies coincide, in other words, when \hbox{$\sigma_\cL=\sigma_{\cL'}$}. 
Let $\cL$ and $\cL'$ be compatible logics, and
$T$ and $T'$ be theories in these logics, respectively. We call such theories {\em compatible}.
For compatible theories~$T$ and $T'$, we call $T$ and~$T'$  as well as  w-conditions $(T,w@l)$ and $(T',w@l)$ {\em equivalent }
when $sem(T)= sem(T')$.
For example, 
sat-logic  theory~\eqref{eq:abtheory} 
  over vocabulary $\{a,b\}$ is equivalent to lp-logic  theory~\eqref{ex:slp} over $\{a,b\}$.


The following proposition captures an apparent property of w-systems that equivalent modules and w-conditions may be substituted by each other without changing the overall semantics of the system. 
\begin{Property}\label{propo:equivalent}
Models and optimal/min-optimal models of  w-systems 
$$(\{T_1,\dots,T_n\},\{B_1,\dots,B_m\}) \hbox{ and } (\{T'_1,\dots,T'_n\},\{B'_1,\dots,B'_m\})$$ 
coincide when
(i) $T_i$ and $T'_i$ ($1\leq i\leq n$) are equivalent theories, and 
(ii) $B_i$ and $B'_i$ ($1\leq i\leq m$)  are equivalent $w$-conditions.
\end{Property}

For a theory $T$ of logic $\cL$, we call a theory $\overline{T}$ in logic~$\cL'$, compatible to~$\cL$, 
{\em complementary} when 
 $sem(\overline{T}) = Int(\sigma_\cL) \setminus sem(T)$.
For example, in case of  pl-logic, theories $F$ and $\neg F$ are complementary.
Similarly, a theory $(\neg a\wedge \neg b)\vee (a\wedge b)$  in pl-logic over vocabulary $\{a,b\}$ is complementary to theory~\eqref{ex:slp} 
in lp-logic over $\{a,b\}$. It is easy to see that given a theory in any logic we can always find, for instance,  a pl-logic or sat-logic theory complementary to it. Yet, given a theory in some arbitrary logic we may not always find a theory complementary to it in the same logic.
For example, 
consider vocabulary $\{a,b\}$ and a wc-theory $a\wedge b$. There is no complementary wc-theory to it over vocabulary $\{a,b\}$.

Let $(T,w@l)$ be a w-condition; consider the following definitions:
$$
\ba{l}
\signp{(T,w@l)}=\begin{cases}
  (T,w@l)&\hbox{when $w\geq 0$, otherwise }  \\
  (\overline{T},-1\cdot w@l)   \\
\end{cases}\\
~\\
\signm{(T,w@l)}=\begin{cases}
  (T,w@l)&\hbox{when $w\leq 0$, otherwise }  \\
  (\overline{T},-1\cdot w@l)   \\
\end{cases}
\ea
$$
where $\overline{T}$ denotes some theory complementary to $T$.
It is easy to see that $\signp{}$ (and $\signm{}$) forms a family of mappings. Applying a member in this family to a w-condition  always results in a w-condition with nonnegative  and nonpositive weights respectively.

For a w-system $(\cH,\{B_1,\dots,B_m\})$, 
we define 
\beq
\ba{l}
\signp{(\cH,\{B_1,\dots, B_n\})}=(\cH,\{\signp{B_1},\dots, \signp{B_n}\})\\
\signm{(\cH,\{B_1,\dots, B_n\})}=(\cH,\{\signm{B_1},\dots, \signm{B_n}\}).\\
\ea
\eeq{eq:plusminus}
The following proposition tells us that negative/positive weights within w-systems may be eliminated in favour of the opposite sign  when theories complementary to theories of w-conditions are found.
\begin{Proposition}\label{prop:signpsignm}
Optimal/min-optimal models of  w-systems  $\cW$, $\signp{\cW}$,   $\signm{\cW}$  coincide.
\end{Proposition}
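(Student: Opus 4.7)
The plan is to show that the transformations $\signp{}$ and $\signm{}$ only shift the total weighted sum at each level by a constant that is independent of the interpretation, so the hierarchical optimization structure is preserved verbatim.

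First, the model sets of $\cW$, $\signp{\cW}$, and $\signm{\cW}$ coincide by Property~\ref{prop:one}, since all three w-systems share the same hard AMS $\cH$. Hence only the claim about optimal/min-optimal models requires work, and I can freely quantify $I$ over the common model set. Also note that $\level{\cW}=\level{\signp{\cW}}=\level{\signm{\cW}}$, because the transformations alter weights only, not levels.

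The core computation concerns a single w-condition $B=(T,w@l)$. When $w\ge 0$, we have $\signp{B}=B$, so $\br{I\models \signp{B}}=\br{I\models B}$ trivially. When $w<0$, we have $\signp{B}=(\overline{T},-w@l)$ for some $\overline{T}$ complementary to $T$, and by the definition of a complementary theory, $I\models T$ iff $I\not\models \overline{T}$. A two-case analysis on whether $I\models T$ gives: either $\br{I\models B}=w$ and $\br{I\models \signp{B}}=0$, or $\br{I\models B}=0$ and $\br{I\models \signp{B}}=-w$. In both subcases the difference $\br{I\models \signp{B}}-\br{I\models B}$ equals the constant $-w$, independent of $I$. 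Summing over w-conditions at any level $l\in\level{\cW}$ yields
$$
\sum_{B\in(\signp{\cW})_l}\br{I\models B}=\sum_{B\in\cW_l}\br{I\models B}+C_l,
$$
where $C_l=\sum_{B\in\cW_l,\ \weight{B}<0}(-\weight{B})$ does not depend on~$I$.

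From here I apply Definition~\ref{def:optimalmodelwsysASP} directly. If $I'$ max-dominates $I$ in $\cW$ at some distinguishing level $l$, then the higher-level equalities of Condition~1 are preserved in $\signp{\cW}$ (the same constant is added to both sides), and the strict inequality of Condition~2 at level $l$ is preserved for the same reason. The converse direction is symmetric. Therefore $I'$ max-dominates $I$ in $\cW$ iff $I'$ max-dominates $I$ in $\signp{\cW}$, so the two w-systems have identical optimal models; replacing max-dominance by min-dominance gives the min-optimal claim. For $\signm{}$ the argument is entirely analogous: when $w>0$, $\signm{B}=(\overline{T},-w@l)$, and the same case split shows that $\br{I\models \signm{B}}-\br{I\models B}=-w$ uniformly in $I$, yielding the same constant-shift conclusion.

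The only subtle point, and the one that deserves explicit attention in the write-up, is that the constant shift $C_l$ must be applied at the specific level $l$ rather than globally: the hierarchical comparison in Definition~\ref{def:optimalmodelwsysASP} compares sums level by level, so I need the additive-shift property to hold independently at each level, which the computation above provides. Everything else is bookkeeping.
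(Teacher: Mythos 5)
Your proof is correct, but it takes a genuinely different route from the paper's. The paper derives this proposition as an immediate corollary of Proposition~\ref{prop:signpsignm2}, which flips a \emph{single} w-condition $(T,w@l)$ into $(\overline{T},-1\cdot w@l)$; that proposition is in turn proved by induction on levels using the recursive notion of $l$-optimal model from Definition~\ref{def:optimalmodelwsys}, with a four-way case split on whether $I^*$ and $I$ satisfy the flipped condition. You instead flip all offending conditions at once, observe that for each of them $\br{I\models\signp{B}}-\br{I\models B}=-\weight{B}$ uniformly in $I$ (this is the key computation, and it is right: complementarity gives $I\models T$ iff $I\not\models\overline{T}$, and both branches of the case split yield the same difference), and conclude that every level sum is translated by a constant $C_l$ independent of the model, so the dominance relations of Definition~\ref{def:optimalmodelwsysASP} are literally unchanged. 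Your argument is shorter, avoids the induction on levels entirely, and makes the underlying reason transparent --- the objective at each level is merely shifted by a constant. What it does not hand you for free is the paper's intermediate Proposition~\ref{prop:signpsignm2} as a separately stated, reusable single-condition fact, though your constant-shift computation specializes to it immediately. You are also right to flag that the shift must be tracked per level rather than globally; that is exactly where a sloppier version of this argument would break for multi-level systems.
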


The result above can be seen as a consequence of the following  proposition:
\begin{Proposition}\label{prop:signpsignm2}
Optimal/min-optimal models of  w-systems  
\begin{itemize}
    \item $(\cH,\{(T,w@l)\}\cup\cS)$ and
    \item {$(\cH,\{(\overline{T},-1\cdot w@l)\}\cup\cS)$}
\end{itemize}
coincide.
\end{Proposition}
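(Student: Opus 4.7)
The plan is to show that replacing the w-condition $(T,w@l)$ with $(\overline{T},-1\cdot w@l)$ shifts the aggregate weight contributed at level $l$ by a constant that depends neither on the model nor on any other w-condition. Since Definition~\ref{def:optimalmodelwsysASP} only compares sums $\sum_{B\in\cW_{l'}}\br{I\models B}$ across models at the same level, a uniform shift cannot affect which models dominate which, hence cannot change the set of optimal or min-optimal models.

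First, I would observe the pointwise identity underlying the argument. Fix an interpretation $I$ that is a model of $\cH$ (so it is a model of both w-systems, by Property~\ref{prop:one}). By definition of $\overline{T}$, exactly one of $I\models T$ and $I\models\overline{T}$ holds. A direct case split on these two possibilities using the mapping in~\eqref{eq:isat} yields
\begin{equation*}
\br{I\models (T,w@l)} \;-\; \br{I\models (\overline{T},-1\cdot w@l)} \;=\; w,
\end{equation*}
independently of $I$. In particular, both contributions live at the same level $l$, so the rest of the w-system $\cS$ is untouched at every level.

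Next, let $\cW=(\cH,\{(T,w@l)\}\cup\cS)$ and $\cW'=(\cH,\{(\overline{T},-1\cdot w@l)\}\cup\cS)$. For any level $l'\neq l$, $\cW_{l'}=\cW'_{l'}$, so the two level-$l'$ sums agree for every model $I$. For level $l$, the pointwise identity above gives, for every model $I$,
\begin{equation*}
\sum_{B\in\cW_{l}}\br{I\models B} \;=\; \sum_{B\in\cW'_{l}}\br{I\models B} \;+\; w.
\end{equation*}
Consequently, for any two models $I$ and $I'$ of $\cH$ and any level $l'$, the equality $\sum_{B\in\cW_{l'}}\br{I\models B}=\sum_{B\in\cW_{l'}}\br{I'\models B}$ holds if and only if the analogous equality holds for $\cW'$, and likewise the strict inequalities of Conditions~1 and~2 in Definition~\ref{def:optimalmodelwsysASP} transfer verbatim between $\cW$ and $\cW'$ in both directions (max-dominance and min-dominance).

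Finally, applying this equivalence of dominance relations, I conclude that $I^{\ast}$ is max-dominated (respectively, min-dominated) in $\cW$ if and only if it is max-dominated (min-dominated) in $\cW'$. Hence the optimal models of $\cW$ and $\cW'$ coincide, and so do the min-optimal models. The only mildly delicate point—and the step I would take care to spell out—is that the constant shift $w$ at level $l$ cancels cleanly on both sides of Condition~1 and preserves the direction of the inequality in Condition~2; everything else is bookkeeping over the unchanged w-conditions in $\cS$.
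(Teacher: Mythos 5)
Your proof is correct, but it takes a genuinely different route from the paper's. You work directly with the dominance-based Definition~\ref{def:optimalmodelwsysASP}: the pointwise identity $\br{I\models (T,w@l)} - \br{I\models (\overline{T},-1\cdot w@l)} = w$ (which indeed follows from the two-way case split on whether $I\models T$, since exactly one of $T$, $\overline{T}$ is satisfied) shows that the level-$l$ sum of $\cW$ equals that of $\cW'$ plus the model-independent constant $w$, while all other levels are literally unchanged; a uniform shift preserves every equality and every strict inequality in the two dominance conditions, so the max-/min-dominance relations of $\cW$ and $\cW'$ coincide and the conclusion follows. The paper instead argues through Definition~\ref{def:optimalmodelwsys}, proving by induction on levels that the $l'$-optimal models of the two systems coincide, with an explicit four-way case analysis (Cases 2.1--2.4) on which of $I^*$, $I$ satisfies $(T,w@l)$, manipulating the inequality $\sum_{B\in\cW_l}\br{I^*\models B}\geq\sum_{B\in\cW_l}\br{I\models B}$ in each case. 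Your constant-shift argument is shorter and makes the invariance transparent in one line, at the cost of relying on Proposition~\ref{prop:eqdefs} to connect back to the paper's preferred recursive characterization; the paper's version is longer but stays within the $l$-optimal machinery it reuses elsewhere. Both are sound; the only point worth spelling out in your write-up is the one you already flag, namely that the constant $w$ cancels on both sides of Condition~1 and preserves the direction of Condition~2.
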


Proposition~\ref{prop:signpsignm} suggests that in case of significantly expressive logic the presence of both negative and positive weights in w-conditions is nearly a syntactic sugar. 
Let us illustrate the applicability of this result in the realm of optimization programs. First, we say that 
\begin{itemize}
    \item a weak constraint~\eqref{eq:wc} is {\em  positively-singular} if either  its weight $w\geq 0$  or $m=1$;
    \item a weak constraint~\eqref{eq:wc} is {\em  negatively-singular} if either  its weight $w\leq 0$  or $m=1$.
\end{itemize}
For example, the only weak constraint/wc-logic w-condition  of  o-program~\eqref{eq:sampleop} follows
$$(a\wedge\neg b, -2@1).$$
This weak constraint is negatively-singular. Let us denote it by $C_1$.

Given a positively-singular weak constraint/wc-logic w-condition $B=(T,w@l)$, it is easy to see that a  mapping 
$$B^\uparrow=\begin{cases}
  B&\hbox{when $w\geq 0$, otherwise }  \\
  (\neg a,-1 \cdot w@l) &\hbox{when $T$ has the form $a$}  \\
  (a,-1 \cdot w@l) &\hbox{when $T$ has the form $\neg a$}  \\
\end{cases}
$$
is in the $\signp{B}$ family. 
Given a negatively-singular weak constraint/wc-logic w-condition~$B$, 
$B^\uparrow$ mapping is defined as above by replacing $w\geq 0$ condition with $w\leq 0$.
For instance, $C_1^\uparrow=C_1$. 
It is easy to see that this  mapping 
is in the $\signm{B}$ family. 

Similarly, given a positively-singular weak constraint/wc-logic w-condition $B$ of the form~\eqref{eq:wcwc}, it is easy to see that a  mapping 
$$B^{sat}=\begin{cases}
  \Big(~\eqref{eq:wcwcc},-1\cdot w@l\Big)
  &\hbox{when $w\geq 0$, otherwise }  \\
  B &  \\
\end{cases}
$$
is in the $\signm{B}$ family. 
Given a negatively-singular weak constraint/wc-logic w-condition~$B$ of the form~\eqref{eq:wcwc}, 
$B^{sat}$ mapping is defined as above by replacing $w\geq 0$ condition with $w\leq 0$.
It is easy to see that this  mapping 
is in the $\signp{B}$ family. 
Note that the resulting w-condition of $(\cdot)^{sat}$ mapping is in sat-logic. 
As an example, consider negatively-singular w-condition $C_1$,
$$
C_1^{sat}=(\neg a\vee b, 2@1).
$$

We call optimization program {\em  positively-singular} ({\em  negatively-singular}) when all of its w-conditions are { positively-singular} ({negatively-singular}).
For a positively/negatively-singular optimization program $(\Pi,\{B_1,\dots,B_n\})$, 
$$
\ba{lll}
(\Pi,\{B_1,\dots,B_n\})^\uparrow&=&(\Pi,\{B_1^\uparrow,\dots,B_n^\uparrow\}),\\
(\Pi,\{B_1,\dots,B_n\})^{sat}&=&(\Pi,\{B_1^{sat},\dots,B_n^{sat}\}).
\ea
$$
For example, let $\cP_1$ denote negatively-singular o-program~\eqref{eq:sampleop}. Then, $\cP_1^\uparrow=\cP_1$ and $\cP_1^{sat}=(\Pi_1, \{(\neg a\vee b, 2@1)\})$.

Propositions~\ref{prop:op} and~\ref{prop:signpsignm} tell us that optimal answer sets of positively/negatively-singular o-program $\cP$ 
coincide with min-optimal models of w-system  $\cP^\uparrow$.
Also, they tell us that optimal answer sets of  positively/negatively-singular o-program~$\cP$ coincide with min-optimal models of w-system $\cP^{sat}$.

We note that the restriction on an optimization program to be positively/negatively-singular  is not essential.
For example, we  now describe a procedure that given an arbitrary program constructs  positively-singular one.  In particular, given a program for every weak constraint $C$ of the form~\eqref{eq:wc}, whose weight is negative
\begin{itemize}
    \item  adding to its hard fragment a rule of the form
$$
a^C \ar   a_1,\dotsc, a_\ell,\ not\  a_{\ell+1},\dotsc,\ not\  a_m,
$$
where $a_C$ is a freshly introduced atom and 
\item replacing weak constraint $C$ with
$$
\wr a^C [w@l]
$$
\end{itemize} 
produces a positively-singular optimization program.
The answer sets of these two programs  are in one to one correspondence. Dropping freshly introduced atoms $a^C$ from the answer sets of the newly constructed program  results in the answer sets of the original program. This fact is easy to see given the theorem on explicit definitions~\citeps{fer05}. 
Alternatively, we can apply the transformation described above to  every weak constraint $C$ of the form~\eqref{eq:wc}, whose $m>1$. In this case the resulting program is both  positively-singular and  negatively-singular.
\cite{alv18} describes a normalization procedure in this spirit.

\subsection{Eliminating Levels}\label{sec:levelsel}
We call a w-system $\cW$  {\em  (strictly) positive}  when all of its w-conditions have  {\em (positive) nonnegative} weights.
Similarly, we call a w-system $\cW$  {\em (strictly) negative}  when all of its w-conditions have {\em (negative) nonpositive} weights. As we showed earlier the w-conditions with $0$ weights may safely be dropped so as such the difference between, for example,  strictly positive and positive programs is inessential.

We now show that the notion of level in the definition of w-conditions is immaterial from the expressivity point of view, i.e., they can be considered as  syntactic sugar. Yet, they are convenient mechanism for representing what is called hierarchical optimization constraints. It was also shown in practice that it is often of value to maintain hierarchy of optimization requirements in devising algorithmic solutions to search problems with optimization criteria~\citeps{arg09}.
Here we  illustrate that given an arbitrary w-system we can rewrite it using w-conditions of the form $(T,w)$. 
This change simplifies the definition of an optimal model by reducing it to a single condition; indeed, see Definition~\ref{def:optimalmodelwsysSimple}. 
Intuitively, we  adjust weights $w$ across the w-conditions in a way that mimics their distinct levels. A procedure in style was reported by \cite{alv18} for the case of o-programs. In this work, we generalize that result to arbitrary w-systems that immediately makes it applicable to logical frameworks that go beyond logic programs.  
We also provide a formal proof of the result that was missing in the mentioned paper by~\citeauthor{alv18}.

Let pair $\cW=(\cH,\cS)$ be strictly positive level-normal w-system. As illustrated earlier restricting w-systems to being positive can be seen as an inessential restriction; recall Proposition~\ref{prop:signpsignm}.
We define the number $M_i$ ($0\leq i< |\level{\cW}|$) as
$$M_i=\begin{cases}
  1
  &\hbox{when $i= 0$, otherwise }  \\
  \displaystyle{1+\sum_{
(T,w@i)\in \cS  }}w. &  \\
\end{cases}
$$
 Intuitively, this number gives us the upper bound, incremented by 1, for the sum of the values of the weights of the w-conditions of level~$i$ (we identify  $M_0$ with $1$).
We now define the number that serves the role of the factor for adjusting each weight associated with some level. 
For level~$i$ $(1\leq i\leq |\level{\cW}|)$, let
$f_{i}$ be the number   computed as  
$$
f_{i}=
    \displaystyle{\prod_{0\leq j< i} {M_j}}. 
$$
By $\cS^1$ we denote the set of w-conditions constructed from $\cS$ as follows 
\beq
\{(T,f_i\cdot w)\mid (T,w@i)\in\cS\}
\eeq{eq:relofs}
By $\cW^1$ we denote the w-system resulting from replacing $\cS$ with $\cS^1$.

\begin{Proposition}\label{proposition:one}
Optimal/min-optimal models of strictly positive level-normal w-systems $\cW=(\cH,\cS)$ and $\cW^1=(\cH,\cS^1)$ coincide. 
\end{Proposition}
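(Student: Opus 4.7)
The plan is to reduce the multi-level optimization of $\cW$ to the single-level optimization of $\cW^1$ by exploiting the way the factors $f_i$ encode the level hierarchy. Introduce the shorthand $S_i(I) = \sum_{B \in \cW_i} \br{I \models B}$ for any model $I$ of $\cW$ and $T(I) = \sum_{B \in \cS^1} \br{I \models B}$. By construction of $\cS^1$ in~\eqref{eq:relofs}, one immediately has $T(I) = \sum_{i=1}^{n} f_i \cdot S_i(I)$, where $n = |\level{\cW}|$. Since $\cW^1$ has only a single level, by Definition~\ref{def:optimalmodelwsysSimple} an optimal model of $\cW^1$ is exactly a model of $\cW$ maximizing $T$ (min-optimal models minimize $T$). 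The goal is therefore to show that $T(I') > T(I)$ if and only if $I'$ max-dominates $I$ in the sense of Definition~\ref{def:optimalmodelwsysASP} (and similarly with min-dominates).

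The technical heart of the argument is a telescoping bound that justifies the choice of $M_i$. Because $\cW$ is strictly positive, $S_j(I) \leq \sum_{(T,w@j) \in \cS} w = M_j - 1$ for every model $I$ and every level $j$. Using $f_{j+1} = M_j \cdot f_j$, one obtains
\begin{equation*}
\sum_{j=1}^{i} f_j \cdot S_j(I) \;\leq\; \sum_{j=1}^{i} f_j (M_j - 1) \;=\; \sum_{j=1}^{i} (f_{j+1} - f_j) \;=\; f_{i+1} - 1 \;<\; f_{i+1}.
\end{equation*}
In words, the contribution of all levels strictly below $i+1$ is bounded by $f_{i+1}$, so a gain of even a single unit in $S_{i+1}$ outweighs any possible difference accumulated at the lower levels.

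Using this bound, I would prove the two directions of the equivalence $T(I') > T(I) \iff I'$ max-dominates $I$. For the ``if'' direction, if $I'$ max-dominates $I$ at level $l$, then the sums at levels above $l$ cancel, $f_l(S_l(I') - S_l(I)) \geq f_l$, and the bound above guarantees that the remainder $\sum_{j < l} f_j (S_j(I') - S_j(I)) > -f_l$, so $T(I') - T(I) > 0$. For the ``only if'' direction, let $l$ be the greatest level at which $S_l(I) \neq S_l(I')$; if $S_l(I') < S_l(I)$, then $f_l(S_l(I') - S_l(I)) \leq -f_l$ dominates the at-most $f_l - 1$ contribution from lower levels, forcing $T(I') < T(I)$, a contradiction. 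Hence $S_l(I') > S_l(I)$, so $I'$ max-dominates $I$ in $\cW$. From this equivalence, the claim for optimal models follows: $I^*$ is optimal for $\cW$ iff no model max-dominates it iff no model yields a strictly greater $T$-value iff $I^*$ is optimal for $\cW^1$. The min-optimal case follows by symmetric arguments (or, alternatively, by first invoking Proposition~\ref{prop:relatives} and reducing to the optimal case).

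The main obstacle is entirely combinatorial and is already identified above: verifying the inequality $\sum_{j \leq i} f_j \cdot S_j(I) < f_{i+1}$ cleanly, since this is exactly the place where the definitions of $M_i$ and $f_i$ must be used to dominate the lower-level contributions. Everything else reduces to careful accounting between the two equivalent formulations of optimality given by Definitions~\ref{def:optimalmodelwsysASP} and~\ref{def:optimalmodelwsysSimple}.
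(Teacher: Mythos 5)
Your proposal is correct and follows essentially the same route as the paper's proof: both establish that $I'$ max-dominates $I$ in $\cW$ if and only if the single-level weighted sum of $\cW^1$ is strictly larger for $I'$ than for $I$, using the bound $\sum_{j\leq i} f_j\cdot S_j(I)\leq f_{i+1}-1$ that the definitions of $M_i$ and $f_i$ are designed to deliver. Your explicit telescoping identity $\sum_{j=1}^{i} f_j(M_j-1)=f_{i+1}-1$ is a slightly cleaner packaging of the two chains of inequalities the paper writes out, but the underlying argument is the same.
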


Recall an example of level-normal w-system~\eqref{eq:levelnormal}. Property~\ref{prop:remove0} tells us that its models and optimal/min-optimal models coincide with these of strictly positive level-normal w-system 
$$(F_1,\{(a,1),(a\vee\neg b,2),(b,1@2)\}).$$
Let us denote this w-system as $\cW_1$.  For  w-system $\cW_1$,
$$
\ba{lll}
M_0=1&M_1=4&\\
&f_1=1&f_2=4\\
\ea
$$
W-system  $\cW_1^1$ follows
$$(F_1,\{(a,1),(a\vee\neg b,2),(b,4)\}).$$
Proposition~\ref{proposition:one} tells us that optimal/min-optimal models of w-systems $\cW_1$ and $\cW_1^1$ coincide.

\section{Optimization Programs as pw-MaxSAT/pw-MinSAT Problems}\label{sec:opmaxsat}

Logic programs under answer set semantic and propositional formulas are closely related (see, for instance, work by~\cite{lier16a} for an overview of translations). For example, for so called ``tight'' programs a well known completion procedure~\citeps{cla78} transforms a logic program into a propositional formula so that the answer sets of the former coincide with the models of the later. Once this formula is clausified the problem becomes a SAT problem. 
For nontight programs extensions of completion procedure are available~\citeps{lin02,jan06a}. Some of those extensions introduce auxiliary atoms. Yet, the appearance of these atoms is inessential as models of resulting formulas are in one to one correspondence with original answer sets. The later can be computed from the former by dropping the auxiliary atoms.
The bottom line is that a number of known translations from logic programs to SAT exists. Numerous answer set solvers, including but not limited to {\sc cmodels}~\citeps{giu06} and {\sc lp2sat}~\citeps{jan06a}, rely on this fact by translating a given logic program into a SAT formula and then applying SAT solvers for computing models/answer sets.
For a logic program $\Pi$ over vocabulary $\sigma$  (that we identify with a module in lp-logic), by  $F_\Pi$ we denote a SAT formula, whose models coincide with these of~$\Pi$. 
For example, recall that $F_1$ and $\Pi_1$ denote sat-formula~\eqref{eq:abtheory} and logic program~\eqref{ex:slp}. Formula $F_1$ forms one of the possible formulas $F_{\Pi_1}$. In fact, $F_1$ corresponds to the clausified completion of program $\Pi_1$, which has the form $$(a\leftrightarrow \neg b)\wedge (b\leftrightarrow \neg a).$$

In previous sections we illustrated how multiple levels and negative weights in w-systems/singular optimization programs can be eliminated in favor of a single level and positive weights.
Thus, without loss of generality we consider here singular optimization programs with a single level. The following result is a consequence of Propositions~\ref{prop:relatives} and~\ref{prop:signpsignm}.
\begin{Proposition}\label{prop:translation}
Optimal answer sets of a positive-singular o-program $(\Pi,\{B_1,\dots,B_m\})$, whose all conditions are of the same level $1$,
coincide with optimal models of pw-MaxSAT problem 
$\signo{(F_{\Pi},\{B_1^{sat},\dots,B_n^{sat}\})}$.
\end{Proposition}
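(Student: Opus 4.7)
The plan is to chain together the formal results already established in this section in order to transport optimal answer sets of the o-program, through intermediate w-systems, into optimal models of the stated pw-MaxSAT problem. All of the ingredients are available: Proposition~\ref{prop:op} links the ASP side to w-systems, Proposition~\ref{prop:signpsignm} justifies the $(\cdot)^{sat}$ rewriting, Property~\ref{propo:equivalent} lets us replace $\Pi$ by a propositional theory with the same models, Proposition~\ref{prop:relatives} swaps min-optimal for optimal via $\signo{}$, and Proposition~\ref{prop:pwmaxsat} identifies w-systems of the resulting shape with pw-MaxSAT problems.

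Concretely, the first step invokes Proposition~\ref{prop:op} to conclude that optimal answer sets of $(\Pi,\{B_1,\dots,B_m\})$ coincide with min-optimal models of the w-system $(\Pi,\{B_1,\dots,B_m\})$ (with $\Pi$ viewed as an lp-logic module and each $B_i$ as a wc-logic w-condition). Since the o-program is positively-singular, the discussion preceding the proposition establishes that each $B_i^{sat}$ belongs to the $\signm{B_i}$ family, so that $(\Pi,\{B_1^{sat},\dots,B_m^{sat}\})$ is a witness of $\signm{(\Pi,\{B_1,\dots,B_m\})}$; Proposition~\ref{prop:signpsignm} then guarantees that the min-optimal models are preserved. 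Next, because $F_\Pi$ is a sat-theory over the same vocabulary $\sigma$ whose models coincide with those of $\Pi$, the two theories are equivalent in the sense of the definition preceding Property~\ref{propo:equivalent}, and that property preserves min-optimal models while replacing $\Pi$ by $F_\Pi$. Proposition~\ref{prop:relatives} then converts min-optimal models of $(F_\Pi,\{B_1^{sat},\dots,B_m^{sat}\})$ into optimal models of $\signo{(F_\Pi,\{B_1^{sat},\dots,B_m^{sat}\})}$, and Proposition~\ref{prop:pwmaxsat} identifies those optimal models with the optimal models of the corresponding pw-MaxSAT problem.

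The main subtlety I expect to have to address is verifying that the resulting object is truly a legitimate pw-MaxSAT problem, i.e., that all soft weights are positive integers as required by the definition of weighted MaxSAT. Tracing the signs, each $B_i^{sat}$ carries a nonpositive weight $-1\cdot w$ with $w\geq 0$, and then $\signo{}$ flips the sign once more, producing nonnegative weights; any residual zero-weight w-conditions can be dropped by Property~\ref{prop:remove0} without affecting the optimal models, yielding a genuine pw-MaxSAT instance. A minor additional point is that for nontight programs some translations to $F_\Pi$ extend $\sigma$ with auxiliary atoms; in that case one restricts the pw-MaxSAT models to $\sigma$ using the standard explicit-definitions correspondence mentioned earlier in the section, which does not alter the optimality conditions because the soft clauses in $\{B_1^{sat},\dots,B_m^{sat}\}$ mention only atoms from $\sigma$.
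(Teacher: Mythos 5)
Your chain of reductions --- Proposition~\ref{prop:op} to pass to min-optimal models of a w-system, Proposition~\ref{prop:signpsignm} to justify the $(\cdot)^{sat}$ rewriting via the $\signm{}$ family, Property~\ref{propo:equivalent} to swap $\Pi$ for $F_\Pi$, Proposition~\ref{prop:relatives} to convert min-optimal to optimal via $\signo{}$, and Proposition~\ref{prop:pwmaxsat} to land in pw-MaxSAT --- is exactly the route the paper intends (it states the result as a consequence of Propositions~\ref{prop:relatives} and~\ref{prop:signpsignm} together with the preceding discussion of $\cP^{sat}$). Your additional care about zero-weight soft clauses (handled by Property~\ref{prop:remove0}) and auxiliary atoms in $F_\Pi$ is sound and, if anything, more explicit than the paper itself.
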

The next proposition follows from Proposition~\ref{prop:signpsignm}.
\begin{Proposition}\label{prop:translation2}
Optimal answer sets of a negatively-singular o-program $(\Pi,\{B_1,\dots,B_m\})$, whose all conditions are of the same level $1$,
coincide with optimal models of pw-MinSAT problem 
${(F_{\Pi},\{B_1^{sat},\dots,B_n^{sat}\})}$.
\end{Proposition}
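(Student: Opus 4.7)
The plan is to chain together the identification results for o-programs and pw-MinSAT problems with the sign-elimination machinery, using the fact that a negatively-singular weak constraint's $(\cdot)^{sat}$ image lies in the $\signp{\cdot}$ family. Starting from the o-program $(\Pi,\{B_1,\dots,B_m\})$, I would first invoke Proposition~\ref{prop:op} to replace reasoning about its optimal answer sets by reasoning about the min-optimal models of the w-system $\cW = (\Pi,\{B_1,\dots,B_m\})$, where the $B_i$ are understood as wc-logic w-conditions and $\Pi$ as an lp-logic module.

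Next I would swap $\Pi$ for $F_\Pi$. Since $F_\Pi$ is constructed precisely so that its models coincide with the answer sets of $\Pi$, the lp-logic module $\Pi$ and the sat-logic module $F_\Pi$ are equivalent theories in the sense of Section~\ref{sec:formalProperties}. Property~\ref{propo:equivalent} therefore tells us that the models and min-optimal models of $\cW$ coincide with those of $(F_\Pi,\{B_1,\dots,B_m\})$. This step is purely a change of representation for the hard part.

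Now I would eliminate the negative weights. Because $(\Pi,\{B_1,\dots,B_m\})$ is negatively-singular, every $B_i$ is negatively-singular, and the discussion preceding Proposition~\ref{prop:translation} establishes that $B_i^{sat}$ is a member of the $\signp{B_i}$ family; hence $(F_\Pi,\{B_1^{sat},\dots,B_m^{sat}\})$ is exactly a choice of $\signp{(F_\Pi,\{B_1,\dots,B_m\})}$. Proposition~\ref{prop:signpsignm} then yields that the min-optimal models are preserved under this transformation.

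Finally, the resulting pair $(F_\Pi,\{B_1^{sat},\dots,B_m^{sat}\})$ consists of a sat-logic hard module together with sat-logic w-conditions carrying nonnegative integer weights at the single level $1$; i.e., it is precisely a pw-MinSAT problem in the sense of Section~\ref{sec:maxsat}. Applying Proposition~\ref{prop:pwmaxsat} closes the chain by identifying the min-optimal models of this w-system with the optimal models of the pw-MinSAT problem of the same shape. Composing the four equivalences gives the claim. The only subtle point, and the main thing I would be careful about, is verifying that $B_i^{sat} \in \signp{B_i}$ applies uniformly under the negatively-singular assumption (in particular that the complementary sat-clause it produces is semantically the complement of the wc-logic conjunction it replaces); once that bookkeeping is in place, the rest is a straightforward composition of previously proved results.
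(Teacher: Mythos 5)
Your proof is correct and takes essentially the same route as the paper, which justifies this proposition in a single line as a consequence of Proposition~\ref{prop:signpsignm}; your chain through Proposition~\ref{prop:op}, Property~\ref{propo:equivalent}, the observation that $B_i^{sat}\in\signp{B_i}$ for negatively-singular conditions, and Proposition~\ref{prop:pwmaxsat} merely makes explicit the steps the paper leaves implicit. The bookkeeping you flag --- that the clause produced by $(\cdot)^{sat}$ is complementary to the wc-logic conjunction it replaces --- does check out by De Morgan, so there is no gap.
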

This result tells us, for example, that optimal answer sets of optimization program~\eqref{eq:sampleop} coincide with optimal models of pw-MinSAT problem~\eqref{eq:pwminsat}. 

Propositions~\ref{prop:translation} and~\ref{prop:translation2} tells us how to utilize MaxSAT/MinSAT solvers for finding optimal answer sets of a program in similar ways as SAT solvers are currently utilized for finding answer sets of logic programs as exemplified by such answer set solvers as {\sc cmodels} or {\sc lp2sat}.

\section{Proofs}
We often show results for optimal models only, as the arguments for min-optimal models follow the same lines.
Given  recursive Definition~\ref{def:optimalmodelwsys} of $l$-(min)-optimal models,  inductive argument is a common technique in proof construction about properties of such models. Below, we refer to the {\em induction on levels of a considered  w-system $\cW$}, where we assume elements in $\level{\cW}$ to be arranged in the descending order $m_1,\dots m_n$ ($n=|\level{\cW}|$); so that the base case is illustrated for the greatest level $m_1$, whereas inductive hypothesis is assumed for level $m_i$ and then illustrated to hold for level $m_{i+1}$. Note how, 
$\prev{m_{i+1}}=m_{i}$.

\subsection{Proofs for Formal Results in Section~\ref{sec:wams}}
An inductive proof on levels of $\cW$ relying on the definition of $l$-optimal models suffices for Lemma~\ref{lem:loptimal}. We omit it for its simplicity. 
Lemma~\ref{lem:loptimal2} follows immediately from Lemma~\ref{lem:loptimal} and an observation that for a pair $I$, $I'$ of $l$-optimal/$l$-min-optimal models of w-system~$\cW$ the following equality holds
\beq \displaystyle{ {\sum_{B\in\cW_l}{ \br{I\models B}}}} =
\displaystyle{ {\sum_{B\in\cW_l}{ \br{I'\models B}}}}.
\eeq{eqlevel}
This observations stems from the definition of $l$-optimal/$l$-min-optimal models. There is one more lemma that we find of use in proving Proposition~\ref{prop:eqdefs}. The claim of the lemma can be shown by induction on levels of $\cW$.
\begin{Lemma}\label{lem:propertylmodels3}
For a w-system $\cW=(\cH,\cS)$, level $l'\in\level{\cW}$, and a model $I'$ of~$\cW$, 
if $I$ is $l'$-optimal model of~$\cW$ such that for all levels $l\geq l'$
equality~\eqref{eqlevel} holds
then  $I'$ is an $l'$-optimal model of $\cW$ as well.
\end{Lemma}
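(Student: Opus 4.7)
My plan is to prove the lemma by induction on the descending arrangement $m_1,\dots,m_n$ of the levels in $\level{\cW}$, exactly in the style the author has just outlined.

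For the base case $l' = m_1$, the recursive definition of $l$-optimality (Definition~\ref{def:optimalmodelwsys}) says that $I$ being $m_1$-optimal means $I$ maximizes $\sum_{B\in\cW_{m_1}}\br{I\models B}$ over all models of $\cW$. Since the hypothesis provides equation~\eqref{eqlevel} for $l=m_1$, and since $I'$ is assumed to be a model of $\cW$, the value $\sum_{B\in\cW_{m_1}}\br{I'\models B}$ equals the maximum, so $I'$ is also $m_1$-optimal.

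For the inductive step, suppose the claim holds for $l' = m_i$, and let $l' = m_{i+1}$. Note that $\prev{m_{i+1}} = m_i$. Since $I$ is $m_{i+1}$-optimal, Lemma~\ref{lem:loptimal} implies $I$ is $m_i$-optimal. The equality~\eqref{eqlevel} holds for every $l \geq m_{i+1}$ in $\level{\cW}$, so in particular for every $l \geq m_i$. Applying the induction hypothesis at level $m_i$ yields that $I'$ is $m_i$-optimal, i.e., $\prev{m_{i+1}}$-optimal. Now by the definition of $m_{i+1}$-optimality, being $m_{i+1}$-optimal amounts to being a $\prev{m_{i+1}}$-optimal model that maximizes $\sum_{B\in\cW_{m_{i+1}}}\br{\cdot\models B}$ over all $\prev{m_{i+1}}$-optimal models. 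Since $I$ attains this maximum and $I'$ is $\prev{m_{i+1}}$-optimal with $\sum_{B\in\cW_{m_{i+1}}}\br{I'\models B} = \sum_{B\in\cW_{m_{i+1}}}\br{I\models B}$ by the assumed equality at level $m_{i+1}$, it follows that $I'$ also attains the maximum and is therefore $m_{i+1}$-optimal.

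I do not foresee real obstacles; the argument is essentially bookkeeping on the recursive definition, with Lemma~\ref{lem:loptimal} doing the work of ``lifting'' $l'$-optimality to $\prev{l'}$-optimality so that the induction hypothesis applies. The only mild subtlety is ensuring that one restricts the quantifier ``for all levels $l\geq l'$'' to levels in $\level{\cW}$ (which is harmless, since outside $\level{\cW}$ both sides of~\eqref{eqlevel} are empty sums). The argument for min-optimal models is obtained by the same induction with $\max$ replaced by $\min$ throughout.
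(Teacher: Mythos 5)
Your proof is correct and follows exactly the route the paper intends: the paper merely states that the claim ``can be shown by induction on levels of $\cW$'' (in the descending arrangement it sets up for such arguments), and your write-up supplies precisely that induction, using Lemma~\ref{lem:loptimal} to lift $m_{i+1}$-optimality of $I$ to $\prev{m_{i+1}}$-optimality so the induction hypothesis applies to $I'$. Nothing further is needed.
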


\begin{proof}[Proof of Proposition~\ref{prop:eqdefs}]
We consider the relation between Definitions~\ref{def:optimalmodelwsysASP} and~\ref{def:optimalmodelwsys}. We will illustrate the equivalence between optimal models defined in different terms. We omit an argument for min-optimal models as it follows these lines. First, we illustrate that any optimal model per Definition~\ref{def:optimalmodelwsysASP} is also an optimal model per Definition~\ref{def:optimalmodelwsys}. We call this direction {\em Left-to-right}.
Second, we illustrate that any optimal model per Definition~\ref{def:optimalmodelwsys} is also an optimal model per Definition~\ref{def:optimalmodelwsysASP}. We call this direction {\em Right-to-left}.

Left-to-right.
Per Definition~\ref{def:optimalmodelwsysASP} model $I^*$ of w-system $\cW$ is optimal if there is no model $I'$ of~$\cW$ that max-dominates $I^*$. Consider such model $I^*$. We now illustrate that $I^*$ is also an optimal model of $\cW$ per Definition~\ref{def:optimalmodelwsys}. In other words, $I^*$  is $l$-optimal model for every level $l\in \level{\cW}$. By contradiction. Assume the later statement is not the case: there is a level  $l'\in \level{\cW}$ such that~$I^*$ is not an $l'$-optimal model; take $l$ to be the greatest value in $\level{\cW}$ that satisfies this property.
Consequently,
$I^*$ does not satisfies equation
\beq
I^*=\displaystyle{ \argmax_{I} {\sum_{B\in\cW_l}{ \br{I\models B}}}}
\eeq{eq:argmax}
where
\begin{itemize}
    \item $I$ ranges over models of $\cW$ if $l$ is the greatest level in $\level{\cW}$, 
\item $I$ ranges over $\prev{l}$-optimal models of $\cW$, otherwise.
\end{itemize}

Case 1.  $l$ is the greatest level in $\level{\cW}$. There is a model $I'$ of $\cW$ such that
\beq \displaystyle{ {\sum_{B\in\cW_l}{ \br{I'\models B}}}} > 
\displaystyle{ {\sum_{B\in\cW_l}{ \br{I^*\models B}}}}.
\eeq{eq:eqcond2}

Consequently, $I'$ max-dominates $I^*$. We derive a contradiction.

Case 2.  $l$ is not the greatest level in $\level{\cW}$. Due to the choice of $l$, there is a $\prev{l}$-optimal model~$I'$ of $\cW$  such that inequality~\eqref{eq:eqcond2} holds.
By Lemmas~\ref{lem:loptimal} and~\ref{lem:loptimal2}, for every level $l''\in\level{\cW}$ such that  $l''\geq\prev{l}$
equality~\eqref{eqlevel} holds, where we replace $l$ by $l''$ and $I$ by $I^*$. Consequently, $I'$ max-dominates~$I^*$. We derive a contradiction.

Right-to-left.
Per Definition~\ref{def:optimalmodelwsys} model $I^*$ of w-system $\cW$ is optimal if it is $l$-optimal for every level $l\in\level{\cW}$.
Consider such model $I^*$. We now illustrate that $I^*$ is also an optimal model of $\cW$ per Definition~\ref{def:optimalmodelwsysASP}.
In other words,  there is no model $I'$ of $\cW$ that max-dominates $I^*$. 
By contradiction. Assume the later statement is not the case: 
there is a level  $l\in \level{\cW}$ and a model~$I'$
such that
following conditions are satisfied:
\begin{enumerate}
\item for any level $l'>l$  the following equality holds
$$
\displaystyle{ \sum_{B\in\cW_{l'}}{ \br{I^*\models B}}}
=
\displaystyle{ \sum_{B\in\cW_{l'}}{ \br{I'\models B}}}
$$
\item the inequality~\eqref{eq:eqcond2} holds for level $l$.
\end{enumerate}

Case 1. $l$ is the greatest level. Consequently, $I^*$ does not satisfy equation~\eqref{eq:argmax}. We derive a contradiction.

Case 2. $l$ is not the greatest level. 

Case 2.1 There is level $l''>l$ such that $I'$ is not $l''$-optimal model of $\cW$.
By Lemma~\ref{lem:propertylmodels3} we conclude that
model $I^*$ is not $l''$-optimal model of $\cW$ (indeed, assume the opposite and one derives the contradiction to the statement of the lemma).
We derive a contradiction.

Case 2.2 Model $I'$ is $l''$-optimal model of $\cW$ for every level $l''>l$.
By Lemma~\ref{lem:propertylmodels3} we conclude that
model $I^*$ is  $l''$-optimal model of $\cW$  for every level $l''>l$.
Consequently, both $I'$ and $I^*$ are $\prev{l}$-optimal models of $\cW$.
Condition~\eqref{eq:eqcond2} contradicts the fact that $I^*$ is a solution to equation~\eqref{eq:argmax}.
\end{proof}

\subsection{Proofs for Formal Results in Section~\ref{sec:instances}}
\begin{proof}[Proof of Proposition~\ref{prop:maxsat}]
Consider an arbitrary interpretation $I^*$ over $\sigma$.
We show that $I^*$ is  a solution to MaxSAT problem $F$ if and only if
 $I^*$ is an optimal model of  w-system $(T_\sigma,\{(C,{1})\mid C\in F\})$. Note how the considered w-system is of a special form so that Definition~\ref{def:optimalmodelwsysSimple} is applicable.
Interpretation $I^*$
is a solution to MaxSAT problem $F$ if and only if 
$$
I^*=arg \max_{I}{\sum_{C\in F}{\br{I\models C}}},
$$
where $I$ ranges over all interpretations over $\sigma$.
Interpretation $I^*$ is
 an optimal model of  w-system $(T_\sigma,\{(C,{1})\mid C\in F\})$
if and only if 
$$
I^*=arg \max_{I}{\sum_{B\in \{(C,{1})\mid C\in F\}}{\br{I\models B}}},
$$
where $I$ ranges over all models of $T_\sigma$, which are all interpretations over $\sigma$.

Note how taking the definitions~\eqref{eq:isat} and~\eqref{eq:isatorig} into account we conclude that for any interpretation~$I$ over $\sigma$,
$$
\sum_{B\in \{(C,{1})\mid C\in F\}}{\br{I\models B}}=\sum_{C\in F}{\br{I\models C}}.
$$
\end{proof}
Proofs of Propositions~\ref{prop:wmaxsat} and~\ref{prop:pwmaxsat} follow the logic of the presented proof of Proposition~\ref{prop:maxsat}. In proofs of these propositions, we have to rely on the fact that given a weighted MaxSAT/MinSAT problem $P$ over $\sigma$ and any interpretation $I$ over $\sigma$,
$$
\sum_{(C,w)\in P}{\br{I\models (C,w)}}=\sum_{(C,w)\in P}{w\cdot \br{I\models C}}.
$$

\begin{proof}[Proof of Proposition~\ref{prop:op}]
This proposition considers the relation between 
\begin{itemize}
    \item an optimization logic program $(\Pi,W)$ over $\sigma$ and
\item  w-system $\big(\Pi,\{(D,{w@l})\mid D[w@l]\in W\}\big)$
-- where~$\Pi$ is an lp-logic module and pairs of the form $(D,{w@l})$ are  wc-logic w-conditions.
\end{itemize}
It is important to note that
 for any model $J$ and any level $k$, the following equality holds
$$
\sum_{B\in \{(D,{w@{k}})\mid D[w@{k}]\in W\}} {\br{J\models B}}=
\sum_{D[w@k]\in W} w \cdot {\br{J\models D}}.
$$
With this observation, the argument is straightforward
given Definition~\ref{def:optimalmodelwsysASP} of min-optimal models of w-systems.
\end{proof}

\subsection{Proofs for Formal Results in Section~\ref{sec:qual}}

\begin{proof}[Proof of Proposition~\ref{prop:minone}]
Consider an arbitrary interpretation $I^*$ over $\sigma$.
We show that $I^*$ is  a solution to 
 \minone problem $F$ over $\sigma$ characterized by~$\xi$
 if and only if
 $I^*$ is an optimal model of  w-system
 \hbox{$\cW=(F,\{(\neg a,1)\mid a\in\xi\})$}.
Interpretation $I^*$
is a solution to \minone problem $F$ if and only if 
\beq
I^*=arg \min_{I}{|I\cap\xi|}=arg \min_{I}{\sum_{a\in I\cap\xi} 1}= arg \max_{I}{\sum_{a\in\xi \hbox{ and }a\not\in I\cap\xi} 1},
\eeq{eq:ccard1}
where $I$ ranges over models of $F$.
 Note how the considered w-system is of a special form so that Definition~\ref{def:optimalmodelwsysSimple} is applicable.
 Interpretation $I^*$ is an optimal model of  w-system
 \hbox{$\cW$} if and only if
\beq
I^*=\displaystyle{ \argmax_{I} {\sum_{a\in\xi}{ \br{I\models \neg a}}}= arg \max_{I}{\sum_{a\in\xi \hbox{ and }a\not\in I\cap\xi} 1}
},
\eeq{eq:ccard2} 
where $I$ ranges over models of~$\cW$. Right hand sides of conditions~\eqref{eq:ccard1}  and~\eqref{eq:ccard2} coincide, which concludes the proof.
\end{proof}

\begin{proof}[Proof of Proposition~\ref{prop:minonesub}]
Consider an arbitrary interpretation $I^*$ over $\sigma$.
Take $I^*$ to be an optimal model of  w-system~\eqref{eq:wminonesub},
 where
  \beq a_1,a_2,\dots,a_n
  \eeq{eq:permutation}
  is a permutation of $\xi$.
We show that $I^*$ is  a solution to 
 \minonesub problem $F$ (over $\sigma$ characterized by~$\xi$).
By contradiction. Assume the last claim is not the case. Given
that $I^*$ is an optimal model of  w-system~\eqref{eq:wminonesub}, it is hence a model of $F$.
Thus, for  $I^*$ not to be  a solution to  \minonesub problem $F$ there must exist model $I$ of $F$ such that $I\cap\xi\subset I^*\cap\xi$.
We now show that $I$ max-dominates $I^*$, which will lead us to contradiction with the assumption that $I^*$ is an optimal model.
Take the rightmost occurrence of atoms $a_i$ in  permutation~\eqref{eq:permutation} such that 
$a_i\in (I^*\cap\xi)\setminus I$.  From the fact that $I\cap\xi$ is a subset of $I^*\cap\xi$, it follows for any $j>i$, $a_j$ in permutation~\eqref{eq:permutation} is such that either $a_j\in I$ and $a_j\in I^*$ or $a_j\not\in I$ and $a_j\not\in I^*$.
In other words, for any $j>i$, $\br{I\models \neg a_j}=\br{I^*\models \neg a_j}$; and for $j$, $\br{I\models \neg a_j}=1$ and $\br{I^*\models \neg a_j}=0$.
We just illustrated that for level $i$ among the levels of the considered w-system the conditions for $I$ max-dominating $I^*$ are satisfied.
\end{proof}

\begin{proof}[Proof of Proposition~\ref{prop:minonesub2}]
We consider the notation exactly as in the ''let''-statement of the statement of the proposition.
By contradiction. Assume that $I$ is not an optimal model of w-system~\eqref{eq:wminonesub}. Then there is a model $I'$ of $F$ that max-dominates $I$.
Hence, there is a level $l$ (an index in permutation) so that (a) for any level $l'>l$, $\br{I\models \neg a_{l'}}=\br{I'\models \neg a_{l'}}$ and
(b)  for  level $l$, $\br{I'\models \neg a_{l}}>\br{I\models \neg a_{l'}}$.
By (b), we conclude (c) $a_l$ is such that $a_l\in I$ and $a_l\not \in I'$. From (a) and the fact that the considered permutation
orders atoms of $\xi$ that occur in $I$ prior to the atoms of $\xi$ that do not occur in $I$,  
we conclude that (d) there is no atom in $\xi$ that is not in $I$ but is in $I'$. From (c) and (d) it follows that $I'\cap\xi\subset I$. We derive a contradiction.
\end{proof}

\begin{proof}[Proof of Proposition~\ref{prop:distancesat}]
Consider an arbitrary interpretation $I^*$ over $\sigma$.
We show that $I^*$ is  a solution to 
 \distancesat problem $F$ over $\sigma$ characterized by~$\hat{I}$
 if and only if
 $I^*$ is an optimal model of  w-system~\eqref{eq:done}.
Per definition, interpretation $I^*$
is a solution to \distancesat problem $F$ if and only if 
$$
I^*=\argmin_{I}{|(\hat{I}\setminus I)\cup(I\setminus \hat{I})|},
$$
where $I$ ranges over models of $F$. With the use of Venn diagrams it is easy to see that we may rewrite this expression as
$$
I^*=\argmax_{I}{|\big(\hat{I}\cap I\big)\cup\big((\sigma\setminus I) \cap (\sigma\setminus \hat{I})\big)|}
= \argmax_{I}{|\big(\hat{I}\cap I\big)\cup\big(\sigma\setminus (I\cup \hat{I})\big)|}
.
$$
Obviously, sets $\hat{I}\cap I$ and $\sigma\setminus (I\cup \hat{I})$ are disjoint. 
Consequently,
\beq
I^*=\argmax_{I}{|\hat{I}\cap I|+|\sigma\setminus (I\cup \hat{I})|}=
\argmax_{I}{
\sum_{a\in \hat{I} \hbox{ and } a\in I}{1}  +
\sum_{a\in \sigma \hbox{ and } a\not\in \hat{I} \hbox{ and } a\not \in I}{1}  
}.
\eeq{eq:ccard3}

 Note how the considered w-system is of a special form so that Definition~\ref{def:optimalmodelwsysSimple} is applicable.
 Interpretation $I^*$ is an optimal model of  w-system~\eqref{eq:done} if and only if
\beq
I^*=\displaystyle{ \argmax_{I} {\sum_{a\in\hat{I}}{ \br{I\models a}}+\sum_{a\in\sigma\setminus\hat{I}}{ \br{I\models\neg a}}  }
=
\argmax_{I}{
\sum_{a\in \hat{I} \hbox{ and } a\in I}{1}  +
\sum_{a\in \sigma \hbox{ and } a\not\in \hat{I} \hbox{ and } a\not \in I}{1}  
}
},
\eeq{eq:ccard4} 
where $I$ ranges over models of~$F$. Right hand sides of conditions~\eqref{eq:ccard3}  and~\eqref{eq:ccard4} coincide, which concludes the proof.
\end{proof}

\begin{proof}[Proof of Proposition~\ref{prop:distancesatsub}]
Consider an arbitrary interpretation $I^*$ over $\sigma$.
Take $I^*$ to be an optimal model of  w-system~\eqref{eq:wdsub},
 where
  \beq a_1,a_2,\dots,a_n
  \eeq{eq:permutation2}
  is a permutation of $\sigma$.
We show that $I^*$ is  a solution to 
 \distancesatsub problem $F$ (over $\sigma$ characterized by~$\hat{I}$).
By contradiction. Assume the last claim is not the case. Given
that $I^*$ is an optimal model of  w-system~\eqref{eq:wdsub}, it is hence a model of $F$.
Thus, for  $I^*$ not to be  a solution to  \distancesatsub problem $F$ there must exist model $I$ of $F$ such that 
$$\Big((\hat{I}\setminus I)\cup(I\setminus \hat{I})\Big) \subset
\Big((\hat{I}\setminus I^*)\cup(I^*\setminus \hat{I})\Big).
$$
To satisfy this condition it follows that
$I$ is such that 
\begin{itemize}
    \item[(i)] it contains all elements from $\hat{I}\cap I^*$ and 
    \item[(ii)] it contains no elements that are not in $\hat{I}\cup I^*$.
\end{itemize}

We now show that $I$ max-dominates $I^*$, which will lead us to contradiction with the assumption that $I^*$ is an optimal model.
Take the rightmost occurrence of atoms $a_i$ in  permutation~\eqref{eq:permutation2} such that 
either
\begin{itemize}
    \item $a_i\in I$ and $a_i\in \hat{I}$ and $a_i\not \in I^*$ (and, hence $\br{I\models a_j}=1$ and $\br{I^*\models  a_j}=0$)
or
\item $a_i\not\in I$ and $a_i\not\in  \hat{I}$ and $a_i \in I^*$ (and, hence $\br{I\models \neg a_j}=1$ and $\br{I^*\models \neg a_j}=0$).
\end{itemize}
Given the choice of index $i$ above, we analyze the possibilities for
all occurrences  $a_j$ with $j>i$ in permutation~\eqref{eq:permutation2}. They  are such that either 
\begin{itemize}
\item $a_j\in I$ and $a_j\in \hat{I}$ and $a_j \in I^*$ (and, hence  $\br{I\models a_j}=1=\br{I^*\models  a_j}=1$ ), or
\item $a_j\in I$ and $a_j\not\in \hat{I}$ and $a_j \in I^*$ (and, hence  $\br{I\models \neg a_j}=0=\br{I^*\models \neg a_j}=0$ ), or
\item $a_j\in I$ and $a_j\not\in \hat{I}$ and $a_j \not\in I^*$ (this is an impossible case due to condition (ii)), or
\item $a_j\not\in I$ and $a_j\in \hat{I}$ and $a_j \in I^*$ (this is an impossible case due to condition (i)), or
\item $a_j\not\in  I$ and $a_j\in  \hat{I}$ and $a_j \not\in I^*$ (and, hence  $\br{I\models  a_j}=0=\br{I^*\models  a_j}=0$ ), or
\item $a_j\not\in  I$ and $a_j\not\in  \hat{I}$ and $a_j \not\in I^*$ (and, hence  $\br{I\models \neg a_j}=0=\br{I^*\models \neg a_j}=0$).
\end{itemize}
The conditions within the parenthesis in every considered case are sufficient to conclude that for  for level $i$ among the levels of the considered w-system the conditions for $I$ max-dominating $I^*$ are satisfied.
\end{proof}
\subsection{Proofs for Formal Results in Section~\ref{sec:formalProperties}}

Proof of Proposition~\ref{prop:levelnormal} follows from the fact that the numeric value of any level itself is inessential in the key computations associated with establishing optimal models. Rather, the order of levels with respect to greater relation  matters (recall the definition of $\prev{(\cdot)}$ operation). It is easy to see that changing levels of the w-conditions using the procedure described in Proposition~\ref{prop:levelnormal} preserves original order of the levels with respect to greater relation.

Proposition~\ref{thm:alloptimal} follows immediately from  Proposition~\ref{thm:samewcond}.
We show that Proposition~\ref{thm:samewcond} holds. 

\begin{proof}[Proof of Proposition~\ref{thm:samewcond}]
Let $l$ be the level so that 
there is a set $S\subseteq \cS_l$ of w-conditions such that 
for any distinct $\prev{l}$-optimal models $I$ and $I'$ of $\cW$ the following equality holds
 $$\sum_{B\in S}{ \br{I\models B}}= \sum_{B\in S}{ \br{I'\models B}}.$$
 Let us denote the number associated with the sum in this equality by letter $c$, i.e., $\displaystyle{c=\sum_{B\in S}{ \br{I\models B}}}$ (where $I$ is any $\prev{l}$-optimal model of $\cW$). 

 Note that for every level $l'>l$,  $\cW_{l'}=\less{\cW}{S}_{l'}$. Using simple inductive argument on levels of $\cW$ that are greater than $l$  suffices to show that $l'$-optimal models of  $\cW$ and $\less{\cW}{S}$ coincide. Consequently,  
$\prev{l}$-optimal models of  $\cW$ and $\less{\cW}{S}$ coincide.

We now show that {\em $l$-optimal models of $\cW$ and $\less{\cW}{S}$ coincide}.
We have to illustrate that equation~\eqref{eq:condeqlmin}
and
$$
I^*=\displaystyle{ \argmax_{I} {\sum_{B\in{\cW_l\setminus S}}{ \br{I\models B}}}}
$$
have the same solutions, when $I$ ranges over $\prev{l}$-optimal models of  $\cW$.
This is apparent from the fact that the right hand side (RHS) of the  equation~\eqref{eq:condeqlmin}
can be rewritten as
$$
\ba{l}
\displaystyle{\argmax_{I} \Big(
{\sum_{B\in{\cW_l\setminus S}}{ \br{I\models B}}}+ 
{\sum_{B\in S}{ \br{I\models B}}
\Big)}} = \\
\displaystyle{\argmax_{I}
\Big(
\sum_{B\in{\cW_l\setminus S}}{ \br{I\models B}}+ c
\Big)} =\\
\displaystyle{\argmax_{I}
{\sum_{B\in{\cW_l\setminus S}}{ \br{I\models B}}}}.
\ea
$$
Using simple inductive argument on levels of $\cW$ that are less than $l$  suffices to show that $l'$-optimal models of  $\cW$ and $\less{\cW}{S}$ coincide, where $l'$ is any level less than $l$. 
\end{proof}

 \begin{proof}[Proof of Proposition~\ref{prop:relatives}]
To show that 
the optimal models  of $\cW$ coincide with the min-optimal models  of $\signo{\cW}$, it is sufficient to show that for any level in $\level{\cW}$, $l$-optimal models of $\cW$ coincide with $l$-min-optimal models of  $\signo{\cW}$.
We first note that models of $\cW$ and $\signo{\cW}$ coincide.
By induction on levels of~$\cW$.

Base case. $l$ is the greatest level.
A model $I^*$ of $\cW$ is {\em $l$-optimal} if and only if
$I^*$ satisfies equation~\eqref{eq:argmax},
where $I$ ranges over models of $\cW$.
It is easy to see that we can rewrite this equation as 
$$
I^*=\displaystyle{ \argmin_{I} {\sum_{B\in\cW_l} -1\cdot { \br{I\models B}}}
}.
$$
It immediately follows from the construction of $\signo{\cW}$ that this equation can be rewritten as
$$
I^*=\displaystyle{ \argmin_{I} {\sum_{B\in\signo{\cW}_l} { \br{I\models B}}}
}.
$$
Thus $I^*$ is an $l$-min-optimal model of $\signo{\cW}$ as the equation above is exactly the one from the definition of $l$-min-optimal models of  $\signo{\cW}$; plus recall that models of $\cW$ and $\signo{\cW}$ coincide.

Inductive case argument follows similar lines.
\end{proof} 

Proposition~\ref{prop:signpsignm} follows immediately from  Proposition~\ref{prop:signpsignm2}.
We show that Proposition~\ref{prop:signpsignm2} holds.

\begin{proof}[Proof of Proposition~\ref{prop:signpsignm2}]
We show the argument for optimal models for the case of w-systems
$\cW=(\cH,\{(T,w@l)\}\cup\cS)$
and $\cW'= (\cH,\{(\overline{T},-1\cdot w@l)\}\cup\cS)$. 
The argument  
for the case of min-optimal models follows the same lines. 

To prove that 
the optimal models  of $\cW$ coincide with the optimal models  of $\cW'$, it is sufficient to show that for any level $l'$ in $\level{\cW}$, $l'$-optimal models of $\cW$ coincide with $l'$-optimal models of~$\cW'$.
We first note that models of $\cW$ and ${\cW'}$ coincide.
By induction on levels of  $\cW$.

Base case. $l'$ is the greatest level.

Case 1. $l'\neq l$. The case is trivial as $\cW_{l'}={\cW'}_{l'}$.

Case 2. $l'= l$. 
A model $I^*$ of $\cW$ (or $\cW'$) is {\em $l$-optimal} model of $\cW$ if and only if
$I^*$ satisfies equation~\eqref{eq:argmax},
where  $I$ ranges over models of $\cW$.
In other words, for any model $I$ of $\cW$  (or $\cW'$) the inequality
\beq
\displaystyle{ 
{\sum_{B\in\cW_l}{ \br{I^*\models B}}}\geq 
{\sum_{B\in\cW_l}{ \br{I\models B}}}
}.
\eeq{eq:name1}
holds.
This inequality can be equivalently rewritten as
\beq
\ba{l}
\displaystyle{ 
{\sum_{B\in\cW_l, B\neq \{(T,w@l)\}}{ \br{I^*\models B}}}
+ \br{I^*\models \{(T,w@l)\}}}
\geq \\
\displaystyle{ 
{\sum_{B\in\cW_l, B\neq \{(T,w@l)\}}{ \br{I\models B}}}
+ \br{I\models \{(T,w@l)\}}
}.
\ea
\eeq{eq:name2.2}

Note that the following condition holds
\beq \cW_l\setminus\{(T,w@l)\}=
\cW'_l\setminus\{(\overline{T},-1\cdot w@l)\}.
\eeq{eq:condall}

We now illustrate that under all possible cases we conclude that inequality~\eqref{eq:name1}, where $\cW_l$ is replaced by $\cW'_l$, holds for arbitrary model $I$ of $\cW$ (or $\cW'$). Consequently, $I^*$ satisfies   
equation~\eqref{eq:argmax},
where $\cW_l$ is replaced by $\cW'_l$
and
$I$ ranges over models of~$\cW'$.
Thus,  $I^*$  is  $l$-optimal model of $\cW'$. 
We also note that the same cases are applicable to illustrate another direction. Namely, from the fact that 
$I^*$ satisfies   
equation~\eqref{eq:argmax},
where $\cW_l$ is replaced by $\cW'_l$
and
$I$ ranges over models of~$\cW'$ we can conclude that $I^*$ satisfies   
equation~\eqref{eq:argmax} where $I$ ranges over models of~$\cW$. 
To support this claim, the arguments of the cases below can be read from bottom to top.

\noindent
Case 2.1: $I^*\models \{(T,w@l)\}$ and $I\models \{(T,w@l)\}$. Or, equivalently, \hbox{$I^*\not \models \{(\overline{T},-1\cdot w@l)\}$} and $I\not \models \{(\overline{T},-1\cdot w@l)\}$. 
Consequently,
\beq
\br{I^*\models \{(\overline{T},-1\cdot w@l)\}}=0=
\br{I \models \{(\overline{T},-1\cdot w@l)\}}.
\eeq{cond1}

Inequality~\eqref{eq:name2.2} can be rewritten as
$$
\ba{l}
\displaystyle{ 
{\sum_{B\in\cW_l, B\neq \{(T,w@l)\}}{ \br{I^*\models B}}}}
+ w 
\geq 
\displaystyle{ 
{\sum_{B\in\cW_l, B\neq \{(T,w@l)\}}{ \br{I\models B}}}
+ w
}.
\ea
$$
Consequently, the following inequality holds
$$
\ba{l}
\displaystyle{ 
{\sum_{B\in\cW_l, B\neq \{(T,w@l)\}}{ \br{I^*\models B}}}
}
\geq 
\displaystyle{ 
{\sum_{B\in\cW_l, B\neq \{(T,w@l)\}}{ \br{I\models B}}}
}.
\ea
$$
From this inequality, it immediately follows that the following holds
\beq
\ba{l}
\displaystyle{ 
{\sum_{B\in\cW'_l, B\neq \{(\overline{T},-1\cdot w@l)\}}{ \br{I^*\models B}}}
+ \br{I^*\models \{(\overline{T},-1\cdot w@l)\}}}
\geq \\
\displaystyle{ 
{\sum_{B\in\cW'_l, B\neq \{(\overline{T},-1\cdot w@l)\}}{ \br{I\models B}}}
+ \br{I\models \{(\overline{T},-1\cdot w@l)\}}
}.
\ea
\eeq{eq:name2.5}
 as  conditions~\eqref{eq:condall} and~\eqref{cond1} hold.
By inequality~\eqref{eq:name2.5},
inequality~\eqref{eq:name1} where $\cW_l$ is replaced by $\cW'_l$ holds.

\noindent
Case 2.2: $I^*\models \{(T,w@l)\}$ and $I\not \models \{(T,w@l)\}$.
 Or, equivalently, \hbox{$I^*\not \models \{(\overline{T},-1\cdot w@l)\}$} and $I \models \{(\overline{T},-1\cdot w@l)\}$. 
Consequently,
\beq
\br{I^*\models \{(\overline{T},-1\cdot w@l)\}}=0\hbox{ and }
\br{I \models \{(\overline{T},-1\cdot w@l)\}}=-w.
\eeq{cond2}

Inequality~\eqref{eq:name2.2} can be rewritten as
$$
\ba{l}
\displaystyle{ 
{\sum_{B\in\cW_l, B\neq \{(T,w@l)\}}{ \br{I^*\models B}}}}
+ w 
\geq 
\displaystyle{ 
{\sum_{B\in\cW_l, B\neq \{(T,w@l)\}}{ \br{I\models B}}}
}.
\ea
$$
Consequently,
$$
\ba{l}
\displaystyle{ 
{\sum_{B\in\cW_l, B\neq \{(T,w@l)\}}{ \br{I^*\models B}}}}
\geq 
\displaystyle{ 
{\sum_{B\in\cW_l, B\neq \{(T,w@l)\}}{ \br{I\models B}}}
}-w.
\ea
$$
From this inequality, it immediately follows that the inequality~\eqref{eq:name2.5} holds
  as  conditions~\eqref{eq:condall} and~\eqref{cond2} hold.
By inequality~\eqref{eq:name2.5},
inequality~\eqref{eq:name1} where $\cW_l$ is replaced by $\cW'_l$ holds.

\noindent
Case 2.3: $I^*\not \models \{(T,w@l)\}$ and $I \models \{(T,w@l)\}$.  Or, equivalently, \hbox{$I^* \models \{(\overline{T},-1\cdot w@l)\}$} and $I\not \models \{(\overline{T},-1\cdot w@l)\}$. 
Consequently,
\beq
\br{I^*\models \{(\overline{T},-1\cdot w@l)\}}=-w\hbox{ and }
\br{I \models \{(\overline{T},-1\cdot w@l)\}}=0.
\eeq{cond3}

Similarly to Case 2.2. we derive that from inequality~\eqref{eq:name2.2}  follows that 
$$
\ba{l}
\displaystyle{ 
{\sum_{B\in\cW_l, B\neq \{(T,w@l)\}}{ \br{I^*\models B}}}-w}
\geq 
\displaystyle{ 
{\sum_{B\in\cW_l, B\neq \{(T,w@l)\}}{ \br{I\models B}}}
}.
\ea
$$
From this inequality, it immediately follows that the inequality~\eqref{eq:name2.5} holds
 as conditions~\eqref{eq:condall} and~\eqref{cond3} hold.
By inequality~\eqref{eq:name2.5},
inequality~\eqref{eq:name1} where $\cW_l$ is replaced by $\cW'_l$ holds.

\noindent
Case 2.4: $I^*\not \models \{(T,w@l)\}$ and $I \not \models \{(T,w@l)\}$. 
Or, equivalently, \hbox{$I^* \models \{(\overline{T},-1\cdot w@l)\}$} and $I \models \{(\overline{T},-1\cdot w@l)\}$. 
Consequently,
\beq
\br{I^*\models \{(\overline{T},-1\cdot w@l)\}}=-w=
\br{I \models \{(\overline{T},-1\cdot w@l)\}}.
\eeq{cond4}
We derive that from inequality~\eqref{eq:name2.2}  follows that 
$$
\ba{l}
\displaystyle{ 
{\sum_{B\in\cW_l, B\neq \{(T,w@l)\}}{ \br{I^*\models B}}}
}
\geq 
\displaystyle{ 
{\sum_{B\in\cW_l, B\neq \{(T,w@l)\}}{ \br{I\models B}}}
}.
\ea
$$
From this inequality, it immediately follows that the inequality~\eqref{eq:name2.5} holds
 as conditions~\eqref{eq:condall} and~\eqref{cond4} hold.
By inequality~\eqref{eq:name2.5},
inequality~\eqref{eq:name1} where $\cW_l$ is replaced by $\cW'_l$ holds.

Inductive case argument follows similar lines.
\end{proof}

\begin{proof}[Proof of Proposition~\ref{proposition:one}]
We focus on the claim of identical optimal models of $\cW$ and~$\cW^1$ (the claim about min-optimal follows the same lines).
 We first note that models of $\cW$ and $\cW^1$ are identical.

Recall
Definition~\ref{def:optimalmodelwsysASP}. 
It is easy to see that our claim follows in case when
for any pair $I$ and $I'$  of models of $\cW$ (or $\cW^1$), $I'$ max-dominates $I$ in $\cW$ if and only if $I'$  max-dominates $I$ in $\cW'$. Take $n$ denote a number of levels in $\cW$.

Right-to-left. Assume that $I'$ max-dominates $I$ in $\cW$.
Per Definition~\ref{def:optimalmodelwsysASP}, {\em 
there exists a level $l\in\level{\cW}$ such that
following conditions are satisfied:
\begin{enumerate}
\item\label{l:cond1.1} for any level $l'>l$  the following equality holds
$$
\displaystyle{ \sum_{B\in\cW_{l'}}{ \br{I\models B}}}
=
\displaystyle{ \sum_{B\in\cW_{l'}}{ \br{I'\models B}}}
$$
\item\label{l:cond2.1} the following inequality holds for level $l$
\beq
\displaystyle{ \sum_{B\in\cW_l}{ \br{I'\models B}}}
>
\displaystyle{ \sum_{B\in\cW_l}{ \br{I\models B}}}
\eeq{eq:cond2.2} 
\end{enumerate}
}
In the left-to-right direction of the proof we refer to the text in italics as {\em max-dominating condition}.

The goal is to illustrate that the following inequality holds 
 \beq
\displaystyle{\sum_{B\in\cW^1}{\br{I'\models B}}
} > \displaystyle{ \sum_{B\in\cW^1}{\br{I\models B}}
},
\eeq{eq:cond3}
which translates into the fact that $I'$ max-dominates $I$ in $\cW^1$.

We will now write two chains  of inequalities for LHS and RHS expressions of~\eqref{eq:cond3} that support the claim of condition~\eqref{eq:cond3}. Note how the last element of each chain amounts to the same expression.  After every chain we provide explanations supporting their less trivial transformations. 

The first chain follows
$$\ba{r}
\displaystyle{ 
\sum_{B\in\cW^1}{\br{I'\models B}}}
=\\
\displaystyle{ 
\sum_{1\leq i\leq {n}} {\big( f_i\cdot \sum_{B\in\cW_i}{\br{I'\models B}}\big)}}=\\
\displaystyle{ 
\sum_{1\leq i< {l}} {\big( f_i\cdot \sum_{B\in\cW_i}{\br{I'\models B}}\big)} 
+ 
 f_{l}\cdot \sum_{B\in\cW_{l}}{\br{I'\models B}}
+ 
\sum_{l< i\leq n} {\big( f_i\cdot \sum_{B\in\cW_i}{\br{{\mathbf I'}\models B}}\big)}} 
=\\
\displaystyle{ 
\sum_{1\leq i< {l}} {\big( f_i\cdot \sum_{B\in\cW_i}{\br{I'\models B}}\big)} 
+ 
 f_{l}\cdot \sum_{B\in\cW_{l}}{\br{I'\models B}}
+ 
\sum_{l< i\leq n} {\big( f_i\cdot \sum_{B\in\cW_i}{\br{{\mathbf I}\models B}}\big)}}\geq\\ 
\displaystyle{ 
\sum_{1\leq i< {l}} {\big(f_i\cdot 0 \big)}+ 
 f_{l}\cdot \sum_{B\in\cW_{l}}{\br{I'\models B}}
+ 
\sum_{l< i\leq n} {\big( f_i\cdot \sum_{B\in\cW_i}{\br{ I\models B}}\big)}}=\\
\displaystyle{ 
 f_{l}\cdot \sum_{B\in\cW_{l}}{\br{{\mathbf I'}\models B}}
+ 
\sum_{l< i\leq n} {\big( f_i\cdot \sum_{B\in\cW_i}{\br{ I\models B}}\big)}}\geq\\
\displaystyle{ 
 f_{l} + f_{l}\cdot \sum_{B\in\cW_{l}}{\br{{\mathbf I}\models B}}
+ 
\sum_{l< i\leq n} {\big( f_i\cdot \sum_{B\in\cW_i}{\br{ I\models B}}\big)}}> \\
\displaystyle{ 
-1+ f_{l} + f_{l}\cdot \sum_{B\in\cW_{l}}{\br{{\mathbf I}\models B}}
+ 
\sum_{l< i\leq n} {\big( f_i\cdot \sum_{B\in\cW_i}{\br{ I\models B}}\big)}}
\ea
$$
The third equality in the chain is due to Condition~\ref{l:cond1.1}.
The appearance of the first inequality in the chain is due to the fact that  
$$\sum_{B\in\cW_i}{\br{I'\models B}}\geq 0$$
as $0$ is ``the most pessimistic'' case for the ``cost'' of w-conditions at the $i$-th level,  when  model $I$ does not satisfy any of the respective w-conditions. 
The appearance of the second inequality in the chain is due to the observation that from the condition~\eqref{eq:cond2.2}  it follows that
 $$
 \ba{clc}
 \displaystyle{\sum_{B\in\cW_{l}}{\br{I'\models B}}}
 &\geq& \displaystyle{ \sum_{B\in\cW_{l}}{\br{I\models B}}} +1
\\
f_{l}\cdot\displaystyle{\sum_{B\in\cW_{l}}{\br{I'\models B}}
}
&\geq& 
\displaystyle{ f_{l}\cdot \big(\sum_{B\in\cW_{l}}{\br{I\models B}}
} +1\big) 
\\
f_{l}\cdot\displaystyle{\sum_{B\in\cW_{l}}{\br{I'\models B}}
}
&\geq& 
\displaystyle{  f_{l}+ f_{l}\cdot \sum_{B\in\cW_{l}}{\br{I\models B}}
}. 
\ea
$$

The second chain follows
$$\ba{r}
\displaystyle{ 
\sum_{B\in\cW^1}{\br{I\models B}}}
=\\
\displaystyle{ 
\sum_{1\leq i< {l}} {\big( f_i\cdot \sum_{B\in\cW_i}{\br{I\models B}}\big)}} 
+ 
 f_{l}\cdot \sum_{B\in\cW_{l}}{\br{I\models B}}
+ 
\sum_{l< i\leq n} {\big( f_i\cdot \sum_{B\in\cW_i}{\br{I\models B}}\big)} 
\leq\\
\displaystyle{ 
\sum_{1\leq i< {l}} {\big(f_i\cdot (M_i -1)\big)} 
+ 
 f_{l}\cdot \sum_{B\in\cW_{l}}{\br{I\models B}}
+ 
\sum_{l< i\leq n} {\big( f_i\cdot \sum_{B\in\cW_i}{\br{I\models B}}\big)}} =
\\
\displaystyle{ 
M_1 -1 + M_1\cdot M_2 -M_1+\dots+
 M_1\cdot...\cdot M_{l-1} - M_1\cdot...\cdot M_{l-2}+}\\
\displaystyle{ 
 f_{l}\cdot \sum_{B\in\cW_{l}}{\br{I\models B}}
+ 
\sum_{l< i\leq n} {\big( f_i\cdot \sum_{B\in\cW_i}{\br{I\models B}}\big)}} 
 =\\
\displaystyle{ 
 -1 + M_1\cdot...\cdot M_{l-1} + 
 f_{l}\cdot \sum_{B\in\cW_{l}}{\br{I\models B}}
+ 
\sum_{l< i\leq n} {\big( f_i\cdot \sum_{B\in\cW_i}{\br{I\models B}}\big)}}
=\\
\displaystyle{ 
 -1 + f_{l} + 
 f_{l}\cdot \sum_{B\in\cW_{l}}{\br{I\models B}}
+ 
\sum_{l< i\leq n} {\big( f_i\cdot \sum_{B\in\cW_i}{\br{I\models B}}\big)}}
\ea
$$
The appearance of the inequality in the chain above is due to the fact that  
$$\sum_{B\in\cW_i}{\br{I\models B}}\leq (M_i -1)$$
as $M_i$ accounts for the sum, incremented by one, of all the  weights in w-conditions of~$\cW_i$, which is ``the most optimistic'' case for the ``cost'' of w-conditions at the $i$-th level  when  model $I$  satisfies all of the respective w-conditions. 

Left-to-right. Assume that $I$ max-dominates $I'$ in $\cW^1$.
Per Definition~\ref{def:optimalmodelwsysASP},
inequality~\eqref{eq:cond3} holds.

The goal is to illustrate that max-dominating condition (see right-to-left direction) holds,
which translates into the fact that $I'$ max-dominates $I$ in $\cW$.
From inequality~\eqref{eq:cond3} we derive that
\beq
\displaystyle{ 
\sum_{1\leq i\leq {n}} \big( f_i\cdot \sum_{B\in\cW_i}{\br{I'\models B}}\big)}
>
\displaystyle{ 
\sum_{1\leq i\leq {n}} \big( f_i\cdot \sum_{B\in\cW_i}{\br{I\models B}}\big)}.
\eeq{eq:condfi}
We now show that there is $l$ among $1..n$ such that  (i)
$$
\displaystyle{ 
 f_l\cdot \sum_{B\in\cW_l}{\br{I'\models B}}}
>
\displaystyle{  
 f_l\cdot \sum_{B\in\cW_l}{\br{I\models B}}}
$$
holds and (ii) for any $l'>l$, 
\beq
\displaystyle{ 
 f_{l'}\cdot \sum_{B\in\cW_{l'}}{\br{I'\models B}}}
=
\displaystyle{  
 f_{l'}\cdot \sum_{B\in\cW_{l'}}{\br{I\models B}}}.
\eeq{eq:equality1}
With that max-dominating condition immediately follows.

By contradiction. Assume no such level $l$ exists such that both conditions (i) and (ii) hold.

Case 1: Assume there is no level $l$ for which condition (i) holds. This immediately contradicts assumption~\eqref{eq:condfi}.

Case 2:  There is $l$ among $1..n$ such that  (i) holds. Take $l$ to be the greatest level among $1..n$ for which (i) holds. Assume that (ii) does not hold. Consequently, there is $l''>l$ such that
\beq
\displaystyle{ 
 f_{l''}\cdot \sum_{B\in\cW_{l''}}{\br{I'\models B}}}
<
\displaystyle{  
 f_{l''}\cdot \sum_{B\in\cW_{l''}}{\br{I\models B}}}.
\eeq{eq:ineq1}
Take $l''$ to be the greatest level with such property. (Consequently, for any level $l'>l''$ equality~\eqref{eq:equality1} holds.)
We now illustrate that from this fact the inequality 
\beq
\displaystyle{ 
\sum_{1\leq i\leq {n}} \big( f_i\cdot \sum_{B\in\cW_i}{\br{I'\models B}}\big)}
<
\displaystyle{ 
\sum_{1\leq i\leq {n}} \big( f_i\cdot \sum_{B\in\cW_i}{\br{I\models B}}\big)}.
\eeq{eq:condfiop}
follows, which contradicts assumption~\eqref{eq:condfi}.
We  now write two chains  of inequalities for  LHS and RHS expressions of~\eqref{eq:condfiop} to support this claim. After every chain we provide explanations supporting their less trivial transformations. 
The first chain follows
$$
\ba{l}
\displaystyle{ 
\sum_{1\leq i\leq {n}} \big( f_i\cdot \sum_{B\in\cW_i}{\br{I'\models B}}\big)}=\\
\displaystyle{ 
\sum_{1\leq i\leq {l''}} \big( f_i\cdot \sum_{B\in\cW_i}{\br{I'\models B}}\big)}=\\
\displaystyle{ 
\sum_{1\leq i\leq {l''-1}} \big( f_i\cdot \sum_{B\in\cW_i}{\br{I'\models B}}\big)}
+ f_{l''}\cdot \sum_{B\in\cW_{l''}}{\br{I'\models B}} \leq\\
\displaystyle{ 
\sum_{1\leq i\leq {l''-1}} \big( f_i\cdot \sum_{B\in\cW_i}{\br{I'\models B}}\big)}
+ f_{l''}\cdot (\sum_{B\in\cW_{l''}}{\br{I\models B}} -1) \leq\\
\displaystyle{ 
\sum_{1\leq i\leq {l''-1}}  {\big(f_i\cdot (M_i-1)\big)}
+ f_{l''}\cdot (\sum_{B\in\cW_{l''}}{\br{I\models B}} -1)} =\\
\displaystyle{ 
-1 +  f_{l''}\cdot \sum_{B\in\cW_{l''}}{\br{I\models B}}
}
\ea
$$
The first equality is due to the choice of $l''$ so that for any level $l'>l''$ equality~\eqref{eq:equality1} holds. The first inequality is due to the observation that from inequality~\eqref{eq:ineq1} the following conditions follow immediately
$$
\ba{l}
\displaystyle{ 
\sum_{B\in\cW_{l''}}{\br{I'\models B}}}
<
\displaystyle{  
\sum_{B\in\cW_{l''}}{\br{I\models B}}}.\\
\displaystyle{ 
\sum_{B\in\cW_{l''}}{\br{I'\models B}}}
\leq
\displaystyle{  
\sum_{B\in\cW_{l''}}{\br{I\models B}}}-1.\\
\ea
$$
The second inequality and the last equality follow the reasoning of the second chain presented in right-to-left direction. 

The second chain follows
$$
\ba{l}
\displaystyle{ 
\sum_{1\leq i\leq {n}} \big( f_i\cdot \sum_{B\in\cW_i}{\br{I\models B}}\big)}=\\
\displaystyle{ 
\sum_{1\leq i\leq {l''}} \big( f_i\cdot \sum_{B\in\cW_i}{\br{I\models B}}\big)}=\\
\displaystyle{ 
\sum_{1\leq i\leq {l''-1}} \big( f_i\cdot \sum_{B\in\cW_i}{\br{I\models B}}\big)}
+ f_{l''}\cdot \sum_{B\in\cW_{l''}}{\br{I\models B}} \geq\\
\displaystyle{ 
f_{l''}\cdot (\sum_{B\in\cW_{l''}}{\br{I\models B}})} >\\
\displaystyle{ -1 +
f_{l''}\cdot (\sum_{B\in\cW_{l''}}{\br{I\models B}})}
\ea
$$
\end{proof}

\section{Conclusions}

 We proposed the extension of abstract modular systems to weighted systems in a way that modern approaches to optimizations stemming from a variety of different logic-based formalisms can be studied in unified terminological ways so that their differences and similarities become clear not only on intuitive but also formal level. 
 
In this article we utilize the presented weighted systems as an abstraction of search-optimization problems and capture such formalisms as MaxSAT and ASP-WC by illustrating that these can be seen as instances of weighted systems.
We present multiple general equivalence transformations, which include interchangeability of logic modules having the same models,
preservation of optimal and min-optimal models when weighted conditions yielding the same
sum for all models are eliminated, and convertibility between optimal and
min-optimal models by multiplying all weights with $-1$. Moreover, transformations
to eliminate either negative or positive weights, map several levels to a single
level by scaling up weights, and turn ASP-WC programs into MaxSAT are developed.
Some similar transformations/normalizations were described as early as~\citeyear{sns02} by \citeauthor{sns02} and later by~\cite{alv18} in the {\em context of ASP-WC}. Here, they are lifted into an abstract settings allowing us the application of the results within different frameworks. Proofs for all of the stated theoretical results are provided.

 We trust that the proposed unifying framework of w-systems will allow  developers of distinct automated paradigms to better grasp similarities and differences of the kind of optimization criteria their paradigms support. In practice, translational approaches are popular in devising solvers. These approaches rely on the relation between automated reasoning paradigms and rather than devising a unique search algorithm for a paradigm of interest propose a translation to another framework so that the solvers for that frameworks can be of use. This work is a stepping stone towards extending these translational approaches with the support for optimization statements. 
 In particular, an immediate and an intuitive future work direction is extending a translational based answer set solver {\sc cmodels} with capabilities to process optimization statements by enabling it to interface with a MaxSAT/MinSAT solver in place of a SAT solver. The formal results of this paper support the validity of this approach (see Proposition~\ref{prop:translation} and~\ref{prop:translation2}). It is also apparent how \minone solvers can be used for finding solutions to \minoneasp problems introduced here. 
 
 In addition, this work served a role of a spring board for a unifying framework studied in~\citep{lie22} that looked into formalisms which mix propositional and non-propositional (e.g., integer) variables in their languages and optimization statements. This supports the claim of explanatory and simplifying utility of this paper.

\noindent
{\bf Acknowledgments}~~~{We would like to acknowledge Mario Alviano, Martin Gebser, Jorge Fandinno, Torsten Schaub, Miroslaw Truszczynski, and Da Shen for valuable discussions related to this work. 
The project was partially supported by NSF grant 1707371.}


\bibliographystyle{tlplike}	
\bibliography{abstractmods-bib}

\begin{thebibliography}{}

\bibitem[Alviano, 2018]{alv18}
{\sc Alviano, M.} 2018.
\newblock Algorithms for solving optimization problems in answer set
  programming.
\newblock {\em Intelligenza Artificiale}, {\it 12}, 1--14.

\bibitem[Alviano et~al., 2020]{alv20}
{\sc Alviano, M.}, {\sc Dodaro, C.}, {\sc Marques-Silva, J.}, {\sc and} {\sc
  Ricca, F.} 2020.
\newblock {Optimum stable model search: algorithms and implementation}.
\newblock {\em Journal of Logic and Computation}, {\it 30}, 4, 863--897.

\bibitem[Alviano et~al., 2018]{alv18b}
{\sc Alviano, M.}, {\sc Romero, J.}, {\sc and} {\sc Schaub, T.}
\newblock Preference relations by approximation.
\newblock In {\em {KR}} 2018, pp. 2--11. {AAAI} Press.

\bibitem[Andres et~al., 2012]{and12}
{\sc Andres, B.}, {\sc Kaufmann, B.}, {\sc Matheis, O.}, {\sc and} {\sc Schaub,
  T.}
\newblock {Unsatisfiability-based optimization in clasp}.
\newblock In {\sc Dovier, A.} {\sc and} {\sc Costa, V.~S.}, editors, {\em
  Technical Communications of the 28th International Conference on Logic
  Programming (ICLP'12)} 2012, volume~17 of {\em Leibniz International
  Proceedings in Informatics (LIPIcs)}, pp. 211--221, Dagstuhl, Germany.
  Schloss Dagstuhl--Leibniz-Zentrum fuer Informatik.

\bibitem[Argelich et~al., 2008]{arg08}
{\sc Argelich, J.}, {\sc Li, C.~M.}, {\sc Many{\`a}, F.}, {\sc and} {\sc
  Planes, J.} 2008.
\newblock The first and second max-sat evaluations.
\newblock {\em J. Satisf. Boolean Model. Comput.}, {\it 4}, 251--278.

\bibitem[Argelich et~al., 2009]{arg09}
{\sc Argelich, J.}, {\sc Lynce, I.}, {\sc and} {\sc Marques-Silva, J.}
\newblock On solving boolean multilevel optimization problems.
\newblock In {\em Proceedings of the 21st International Jont Conference on
  Artifical Intelligence} 2009, IJCAI’09,  393–398, San Francisco, CA, USA.
  Morgan Kaufmann Publishers Inc.

\bibitem[Bailleux and Marquis, 2006]{bai06}
{\sc Bailleux, O.} {\sc and} {\sc Marquis, P.} 2006.
\newblock Some computational aspects of distance-sat.
\newblock {\em J. Autom. Reason.}, {\it 37}, 4, 231–260.

\bibitem[Brewka et~al., 2015]{bre15a}
{\sc Brewka, G.}, {\sc Delgrande, J.~P.}, {\sc Romero, J.}, {\sc and} {\sc
  Schaub, T.}
\newblock asprin: Customizing answer set preferences without a headache.
\newblock In {\em Proceedings of the Twenty-Ninth {AAAI} Conference on
  Artificial Intelligence, January 25-30, 2015, Austin, Texas, {USA.}} 2015,
  pp. 1467--1474.

\bibitem[Brewka and Eiter, 2007]{bre07}
{\sc Brewka, G.} {\sc and} {\sc Eiter, T.}
\newblock Equilibria in heterogeneous nonmonotonic multi-context systems.
\newblock In {\em Proceedings of National Conference on Artificial
  Intelligence, {AAAI 2007}} 2007, pp. 385--390.

\bibitem[Calimeri et~al., 2020]{cal15}
{\sc Calimeri, F.}, {\sc Faber, W.}, {\sc Gebser, M.}, {\sc Ianni, G.}, {\sc
  Kaminski, R.}, {\sc Krennwallner, T.}, {\sc Leone, N.}, {\sc Ricca, F.}, {\sc
  Schaub, T.}, {\sc and} {\sc et~al.} 2020.
\newblock {ASP-C}ore-2 input language format.
\newblock {\em Theory and Practice of Logic Programming}, {\it 20}, 2,
  294–309.

\bibitem[Clark, 1978]{cla78}
{\sc Clark, K.}
\newblock Negation as failure.
\newblock In {\sc Gallaire, H.} {\sc and} {\sc Minker, J.}, editors, {\em Logic
  and Data Bases} 1978, pp. 293--322. Plenum Press, New York.

\bibitem[Creignou et~al., 2001]{cre01}
{\sc Creignou, N.}, {\sc Khanna, S.}, {\sc and} {\sc Sudan, M.} 2001.
\newblock {\em Complexity Classifications of Boolean Constraint Satisfaction
  Problems}.
\newblock Society for Industrial and Applied Mathematics, USA.

\bibitem[Davis et~al., 1962]{dav62}
{\sc Davis, M.}, {\sc Logemann, G.}, {\sc and} {\sc Loveland, D.} 1962.
\newblock A machine program for theorem proving.
\newblock {\em Communications of the ACM}, {\it 5(7)}, 394--397.

\bibitem[Di~Rosa and Giunchiglia, 2013]{ros13}
{\sc Di~Rosa, E.} {\sc and} {\sc Giunchiglia, E.} 2013.
\newblock Combining approaches for solving satisfiability problems with
  qualitative preferences.
\newblock {\em AI Commun.}, {\it 26}, 4, 395--408.

\bibitem[Ensan and Ternovska, 2019]{ens19}
{\sc Ensan, A.} {\sc and} {\sc Ternovska, E.}
\newblock A language-independent framework for reasoning about preferences for
  declarative problem solving.
\newblock In {\sc Herzig, A.} {\sc and} {\sc Popescu, A.}, editors, {\em
  Frontiers of Combining Systems} 2019, pp. 57--73, Cham. Springer
  International Publishing.

\bibitem[Ferraris, 2005]{fer05}
{\sc Ferraris, P.}
\newblock Answer sets for propositional theories.
\newblock In {\em International Conference on Logic Programming and
  Nonmonotonic Reasoning ({LPNMR})} 2005, pp. 119--131.

\bibitem[Fu and Malik, 2006]{fu06}
{\sc Fu, Z.} {\sc and} {\sc Malik, S.}
\newblock On solving the partial max-sat problem.
\newblock In {\sc Biere, A.} {\sc and} {\sc Gomes, C.~P.}, editors, {\em Theory
  and Applications of Satisfiability Testing - SAT 2006} 2006, pp. 252--265,
  Berlin, Heidelberg. Springer Berlin Heidelberg.

\bibitem[Giunchiglia et~al., 2006]{giu06}
{\sc Giunchiglia, E.}, {\sc Lierler, Y.}, {\sc and} {\sc Maratea, M.} 2006.
\newblock Answer set programming based on propositional satisfiability.
\newblock {\em Journal of Automated Reasoning}, {\it 36}, 345--377.

\bibitem[Giunchiglia and Maratea, 2006]{giu06c}
{\sc Giunchiglia, E.} {\sc and} {\sc Maratea, M.}
\newblock Solving optimization problems with dll.
\newblock In {\em 2006 Conference on ECAI 2006: 17th European Conference on
  Artificial Intelligence August 29 -- September 1, 2006, Riva Del Garda,
  Italy} 2006,  377–381, NLD. IOS Press.

\bibitem[Janhunen, 2006]{jan06a}
{\sc Janhunen, T.} 2006.
\newblock Some (in)translatability results for normal logic programs and
  propositional theories.
\newblock {\em Journal of Applied Non-Classical Logics},, 35--86.

\bibitem[Li et~al., 2011]{chu11}
{\sc Li, C.-M.}, {\sc Zhu, Z.}, {\sc Many\`{a}, F.}, {\sc and} {\sc Simon, L.}
\newblock Minimum satisfiability and its applications.
\newblock In {\em Twenty-Second International Joint Conference on Artificial
  Intelligence - Volume Volume One} 2011, IJCAI'11,  605–610. AAAI Press.

\bibitem[Lierler, 2017]{lier16a}
{\sc Lierler, Y.} 2017.
\newblock What is answer set programming to propositional satisfiability.
\newblock {\em Constraints}, {\it 22}, 307--337.

\bibitem[Lierler, 2021]{lie21}
{\sc Lierler, Y.}
\newblock An abstract view on optimizations in sat and asp.
\newblock In {\em 17th European Conference on Logics in Artificial Intelligence
  (JELIA)} 2021.

\bibitem[Lierler, 2022]{lie22}
{\sc Lierler, Y.} 2022.
\newblock Unifying framework for optimizations in non-boolean formalisms.
\newblock {\em Theory and Practice of Logic Programming},, 1–33.

\bibitem[Lierler and Truszczynski, 2015]{lt2015}
{\sc Lierler, Y.} {\sc and} {\sc Truszczynski, M.}
\newblock An abstract view on modularity in knowledge representation.
\newblock In {\em Proceedings of the AAAI Conference on Artificial
  Intelligence} 2015.

\bibitem[Lifschitz et~al., 1999]{lif99d}
{\sc Lifschitz, V.}, {\sc Tang, L.~R.}, {\sc and} {\sc Turner, H.} 1999.
\newblock Nested expressions in logic programs.
\newblock {\em Annals of Mathematics and Artificial Intelligence}, {\it 25},
  369--389.

\bibitem[Lin and Zhao, 2002]{lin02}
{\sc Lin, F.} {\sc and} {\sc Zhao, Y.}
\newblock {A}{S}{S}{A}{T}: Computing answer sets of a logic program by
  {S}{A}{T} solvers.
\newblock In {\em Proceedings of National Conference on Artificial Intelligence
  ({AAAI})} 2002, pp. 112--117. MIT Press.

\bibitem[Mitchell, 2005]{satprimer}
{\sc Mitchell, D.~G.}
\newblock A {S}{A}{T} solver primer.
\newblock In {\em EATCS Bulletin (The Logic in Computer Science Column)} 2005,
  volume~85, pp. 112--133.

\bibitem[Nieuwenhuis and Oliveras, 2006]{nie06a}
{\sc Nieuwenhuis, R.} {\sc and} {\sc Oliveras, A.}
\newblock On {SAT} modulo theories and optimization problems.
\newblock In {\sc Biere, A.} {\sc and} {\sc Gomes, C.~P.}, editors, {\em Theory
  and Applications of Satisfiability Testing - SAT 2006} 2006, pp. 156--169,
  Berlin, Heidelberg. Springer Berlin Heidelberg.

\bibitem[Robinson et~al., 2010]{rob10}
{\sc Robinson, N.}, {\sc Gretton, C.}, {\sc Pham, D.-N.}, {\sc and} {\sc
  Sattar, A.}
\newblock Cost-optimal planning using weighted {M}ax{SAT}.
\newblock In {\em ICAPS 2010 Workshop on Constraint Satisfaction Techniques for
  Planning and Scheduling (COPLAS10)} 2010.

\bibitem[Simons et~al., 2002]{sns02}
{\sc Simons, P.}, {\sc Niemel{\"a}, I.}, {\sc and} {\sc Soininen, T.} 2002.
\newblock Extending and implementing the stable model semantics.
\newblock {\em Artificial Intelligence}, {\it 138}, 181--234.

\end{thebibliography}
\end{document}